\title{On the Downstream Performance of Compressed Word Embeddings}
\author{%
Avner May \qquad Jian Zhang \qquad Tri Dao \qquad Christopher R\'e \\
Department of Computer Science, Stanford University \\
\texttt{\{avnermay, zjian, trid, chrismre\}@cs.stanford.edu}
}
\date{}
\newif\ifshowall
\newcommand\todohide[1]{}
\newcommand{\vfigsp}{\ifshowall\else\vspace{-0.12in}\fi}
\newcommand{\vsp}{\ifshowall\vspace{-0.5em}\else\vspace{-0.04in}\fi}
\DeclareMathOperator{\clip}{clip}
\DeclareMathOperator{\Span}{span}
\newcommand{\eigover}{\mathcal{E}}
\newcommand{\by}{\bar{y}}
\newcommand{\eps}{\epsilon}
\newcommand{\tS}{\tilde{S}}
\newcommand{\tU}{\tilde{U}}
\newcommand{\tV}{\tilde{V}}
\newcommand{\tw}{\tilde{w}}
\newcommand{\tK}{\tilde{K}}
\newcommand{\tX}{\tilde{X}}
\newcommand{\tQ}{\tilde{Q}}
\newcommand{\tx}{\tilde{x}}
\newcommand{\tu}{\tilde{u}}
\newcommand{\ulx}{\underline{x}}
\newcommand{\olx}{\overline{x}}
\DeclareMathOperator*{\argmin}{arg\,min}
\newcommand{\ie}{i.e.}
\newcommand{\eg}{e.g.}
\def\ddefloop#1{\ifx\ddefloop#1\else\ddef{#1}\expandafter\ddefloop\fi}
\def\ddef#1{\expandafter\def\csname bb#1\endcsname{\ensuremath{\mathbb{#1}}}}
\def\ddef#1{\expandafter\def\csname c#1\endcsname{\ensuremath{\mathcal{#1}}}}
\newcommand\norm[1]{\|#1\|}
\newcommand\dotp[1]{\langle #1 \rangle}
\newcommand{\RR}{\ensuremath{\bbR}} %
\newcommand{\Prob}{\mathbb{P}}
\newcommand{\ProbOpr}[1]{\mathbb{#1}}
\newcommand{\expect}[2]{%
\ifthenelse{\equal{#2}{}}{\ProbOpr{E}_{#1}}
{\ifthenelse{\equal{#1}{}}{\ProbOpr{E}\left[#2\right]}{\ProbOpr{E}_{#1}\left[#2\right]}}} %
\newcommand{\var}[2]{%
\ifthenelse{\equal{#2}{}}{\ProbOpr{VAR}_{#1}}
{\ifthenelse{\equal{#1}{}}{\ProbOpr{VAR}\left[#2\right]}{\ProbOpr{VAR}_{#1}\left[#2\right]}}} 
\newcommand{\trace}[1]{\operatornamewithlimits{tr}\left(#1\right)}
\newcommand{\diag}{\operatornamewithlimits{diag}}
\newtheorem{theorem}{Theorem}
\newtheorem{definition}{Definition}
\newtheorem{lemma}[theorem]{Lemma}
\newtheorem{proposition}[theorem]{Proposition}
\newenvironment{customthm}[1]
{\innercustomthm}
{\endinnercustomthm}
\newcommand{\defeq}{:=}
\providecommand{\tr}{\mathop{\rm tr}}
\renewcommand{\paragraph}[1]{\textbf{#1}\quad}
\begin{document}

\maketitle

\begin{abstract}
Compressing word embeddings is important for deploying NLP models in memory-constrained settings.
However, understanding what makes compressed embeddings perform well on downstream tasks is challenging---existing measures of compression quality often fail to distinguish between embeddings that perform well and those that do not.
We thus propose the \emph{eigenspace overlap score} as a new measure.
We relate the eigenspace overlap score to downstream performance by developing generalization bounds for the compressed embeddings in terms of this score, in the context of linear and logistic regression.
We then show that we can lower bound the eigenspace overlap score for a simple uniform quantization compression method, helping to explain the strong empirical performance of this method.
Finally, we show that by using the eigenspace overlap score as a selection criterion between embeddings drawn from a representative set we compressed, we can efficiently identify the better performing embedding with up to $2\times$ lower selection error rates than the next best measure of compression quality, and avoid the cost of training a model for each task of interest.

\end{abstract}

\section{Introduction}
In recent years, \textit{word embeddings} \citep{word2vec13,glove14,fasttext18,elmo18,bert18} have brought large improvements to a wide range of applications in natural language processing (NLP) \citep{collins16,drqa17,glue19}.
However, these word embeddings can occupy a large amount of memory, making it expensive to deploy them in data centers, and impractical to use them in memory-constrained environments like smartphones.
To reduce and amortize these costs, embeddings can be compressed \citep[e.g.,][]{dccl17} and shared across many downstream tasks \citep{twitter18}.
Recently, there have been numerous successful methods proposed for compressing embeddings;
these methods take a variety of approaches, ranging from compression using k-means clustering  \citep{andrews16} to dictionary learning using neural networks \citep{dccl17,kway18}.

The goal of this work is to gain a deeper understanding of what makes compressed embeddings perform well on downstream tasks.
Practically, this understanding could allow for evaluating the quality of a compressed embedding without having to train a model for each task of interest.
Our work is motivated by two surprising empirical observations:
First, we find that existing ways \citep{yin18,avron17,lprff18} of measuring the quality of compressed embeddings do not effectively explain the relative \textit{downstream} performance of different compressed embeddings---for example, failing to discriminate between embeddings that perform well and those that do not.
Second, we observe that a simple uniform quantization method can match or outperform the state-of-the-art deep compositional code learning method \citep{dccl17} and the k-means compression method \citep{andrews16} in terms of downstream performance.
These observations suggest that there is currently an incomplete understanding of what makes a compressed embedding perform well on downstream tasks.
One way to narrow this gap in our understanding is to find a measure of compression quality that (i) is directly related to generalization performance, and (ii) can be used to analyze the performance of uniformly quantized embeddings.

Here we introduce the \textit{eigenspace overlap score} as a new measure of compression quality, and show that it satisfies the above two desired properties.
This score measures the degree of overlap between the subspaces spanned by the eigenvectors of the Gram matrices of the compressed and uncompressed embedding matrices.
Our theoretical contributions are two-fold, addressing the surprising observations and desired properties discussed above:
First, we prove generalization bounds for the compressed embeddings in terms of the eigenspace overlap score in the context of linear and logistic regression, revealing a direct connection between this score and downstream performance.
Second, we prove that in expectation uniformly quantized embeddings attain a high eigenspace overlap score with the uncompressed embeddings at relatively high compression rates, helping to explain their strong performance.
Inspired by these theoretical connections between the eigenspace overlap score and generalization performance, we propose using this score as a selection criterion for efficiently picking among a set of compressed embeddings, without having to train a model for each task of interest using each embedding.

We empirically validate our theoretical contributions and the efficacy of our proposed selection criterion by showing three main experimental results:
First, we show the eigenspace overlap score is more predictive of downstream performance than existing measures of compression quality \citep{yin18,avron17,lprff18}.
Second, we show uniform quantization consistently matches or outperforms all the compression methods to which we compare \citep{andrews16,dccl17,pca33},
in terms of both the eigenspace overlap score and downstream performance.
Third, we show the eigenspace overlap score is a more accurate criterion for choosing between compressed embeddings than existing measures;
specifically, we show that when choosing between embeddings drawn from a representative set we compressed \citep{andrews16,dccl17,quant77,pca33}, the eigenspace overlap score is able to identify the one that attains better downstream performance with up to $2\times$ lower selection error rates than the next best measure of compression quality.
We consider several baseline measures of compression quality:
the Pairwise Inner Product (PIP) loss \citep{yin18}, and two spectral measures of approximation error between the embedding Gram matrices \citep{avron17,lprff18}.
Our results are consistent across a range of NLP tasks \citep{squad16,kim14,glue19}, embedding types \citep{glove14,fasttext18,bert18}, and compression methods \citep{andrews16,dccl17,quant77}.

The rest of this paper is organized as follows.
In Section~\ref{sec:prelim} we review background on word embedding compression methods and existing measures of compression quality, and present the two motivating empirical observations.
In Section~\ref{sec:new_metric} we present the eigenspace overlap score along with our corresponding theoretical contributions, and propose to use the eigenspace overlap score as a selection criterion.
In Section~\ref{sec:experiments}, we show the results from our extensive experiments validating the practical significance of our theoretical contributions, and the efficacy of our proposed selection criterion.
We present related work in Section~\ref{sec:relwork}, and conclude in Section~\ref{sec:conclusion}.

\vsp
\section{Background and Motivation}
\label{sec:prelim}
\vsp
\vspace{-0.15em}
We first review different compression methods in Section~\ref{subsec:existing_methods} and existing ways to measure the quality of a compressed embedding relative to the uncompressed embedding in Section~\ref{subsec:existing_metrics}.
We then show in Section~\ref{subsec:motivation} that existing measures of compression quality do not satisfactorily explain the relative downstream performance of existing compression methods; this motivates our work to better understand the downstream performance of compressed embeddings.

\subsection{Embedding Compression Methods}
\label{subsec:existing_methods}
\vspace{-0.15em}
We now discuss a number of compression methods for word embeddings.
For the purposes of this paper, the goal of an embedding compression method $C(\cdot)$ is to take as input an uncompressed embedding $X \in \RR^{n\times d}$, and produce as output a compressed embedding $\tX \defeq C(X) \in \RR^{n \times k}$ which uses less memory than $X$, but attains similar performance to $X$ when used in downstream models.
Here, $n$ denotes the vocabulary size, $d$ and $k$ the uncompressed and compressed dimensions.

\paragraph{Deep Compositional Code Learning (DCCL)}
The DCCL method \citep{dccl17} uses a dictionary learning approach to represent a large number of word vectors using a much smaller number of basis vectors.
The dictionaries are trained using an autoencoder-style architecture to minimize the embedding matrix reconstruction error.
A similar approach was independently proposed by \citet{kway18}.

\paragraph{K-means Compression}
The k-means algorithm can be used to compress word embeddings by first clustering all the scalar entries in the word embedding matrix, and then replacing each scalar with the closest centroid \citep{andrews16}.
Using $2^b$ centroids allows for storing each matrix entry using only $b$ bits.

\paragraph{Dimensionality Reduction}
One can train an embedding with a lower dimension, or use a method like principal component analysis (PCA) to reduce the dimensionality of an existing embedding.

\paragraph{Uniform Quantization}
To compress real numbers, uniform quantization divides an interval into sub-intervals of equal size, and then (deterministically or stochastically) rounds the numbers in each sub-interval to one of the boundaries \citep{quant77,gupta15}.
To apply uniform quantization to embedding compression, we propose to first determine the optimal threshold at which to clip the extreme values in the word embedding matrix, and then uniformly quantize the clipped embeddings within the clipped interval.
For more details about uniform quantization and how we use it to compress embeddings, see Appendices~\ref{app:unif} and \ref{app:compress} respectively.

\subsection{Measures of Compression Quality}
\label{subsec:existing_metrics}
We review ways of measuring the compression quality of a compressed embedding relative to the uncompressed embedding.
For our purposes, an ideal measure would consider a compressed embedding to have high quality when it is likely to perform similarly to the uncompressed embedding on downstream tasks, and low quality otherwise.
Such a measure would shed light on what determines the downstream performance of a compressed embedding, and give us a way of measuring the quality of a compressed embedding without having to train a downstream model for each task.

Several of the measures discussed below are based on comparing the pairwise inner product (Gram) matrices of the compressed and uncompressed embeddings.
The Gram matrices of embeddings are natural to consider for two reasons:
First, the loss function for training word embeddings typically only considers dot products between embedding vectors \citep{word2vec13,glove14}.
Second, one can view word embedding training as implicit matrix factorization \citep{levy2014neural}, and thus comparing the Gram matrices of two embedding matrices is similar to comparing the matrices these embeddings are implicitly factoring.
We now review several existing ways of measuring compression quality.

\paragraph{Word Embedding Reconstruction Error}
The first and simplest way of comparing two embeddings $X$ and $\tX$ is to measure the reconstruction error $\|X-\tX\|_F$.
Note that in order to be able to use this measure of quality, $X$ and $\tX$ must have the same dimension.

\paragraph{Pairwise Inner Product (PIP) Loss}
Given $XX^T$ and $\tX\tX^T$, the Gram matrices of the uncompressed and compressed embeddings, the \textit{Pairwise Inner Product (PIP} Loss) \citep{yin18} is defined as $\|XX^T -\tX\tX^T\|_F$.
This measure of quality was recently proposed to explain the existence of an optimal dimension for word embeddings, in terms of a bias-variance trade-off for the PIP loss.

\paragraph{Spectral Approximation Error}
A symmetric matrix $A$ is defined \citep{lprff18} to be a \textit{$(\Delta_1,\Delta_2)$-spectral approximation} of another symmetric matrix $B$ if it satisfies $(1-\Delta_1) B \preceq A \preceq (1+\Delta_2)B$ (in the semidefinite order).
\citet{lprff18} show that if $\tX\tX^T+\lambda I$ is a $(\Delta_1,\Delta_2)$-spectral approximation of $XX^T + \lambda I$ for sufficiently small values of $\Delta_1$ and $\Delta_2$, then the linear model trained using $\tX$ and regularization parameter $\lambda$ will attain similar generalization performance to the model trained using $X$.
\citet{avron17} use a single scalar $\Delta$ in place of $\Delta_1$ and $\Delta_2$, and use this scalar as a measure of approximation error,
while \citet{lprff18} consider $\Delta_1$ and $\Delta_2$ independently, and use the quantity $\Delta_{\max} \defeq \max(\frac{1}{1-\Delta_1},\Delta_2)$ to measure approximation error.

\subsection{Two Motivating Empirical Observations}
\label{subsec:motivation}
We now present two empirical observations which illustrate the need to better understand the downstream performance of models trained using compressed embeddings.
In these experiments we compare the downstream performance of the methods introduced in Section~\ref{subsec:existing_methods}, and attempt to use the measures of compression quality from Section~\ref{subsec:existing_metrics} to explain the relative performance of these compression methods.
Our observations reveal that explaining the downstream performance of compressed embeddings is challenging.
We now provide an overview of these two observations;
for a more thorough presentation of these results, see Section~\ref{sec:experiments}.

\ifshowall
		\begin{figure}[t]
		\centering
		\begin{tabular}{c c}
				\includegraphics[width=0.35\textwidth]{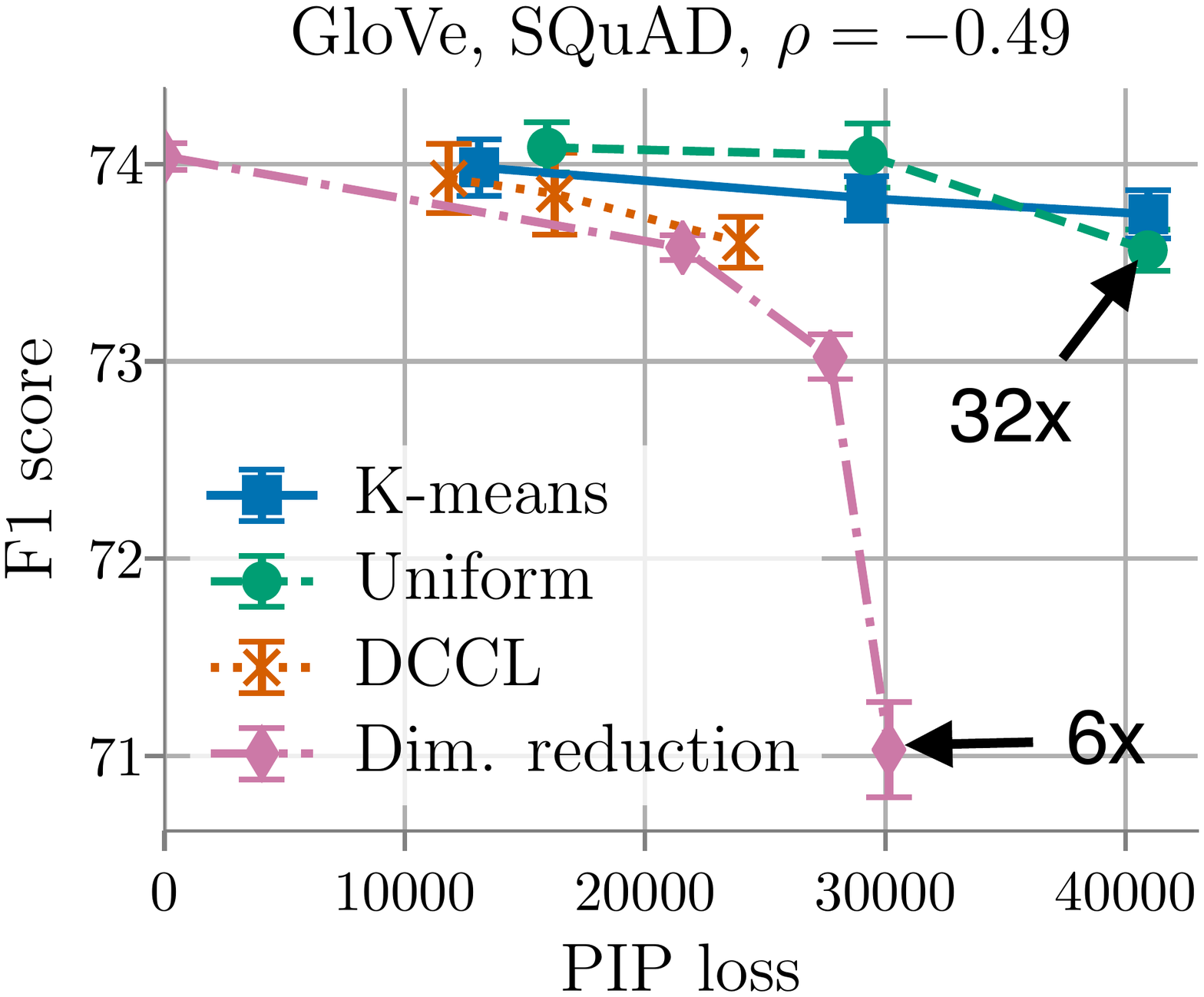} & 
				\includegraphics[width=0.35\textwidth]{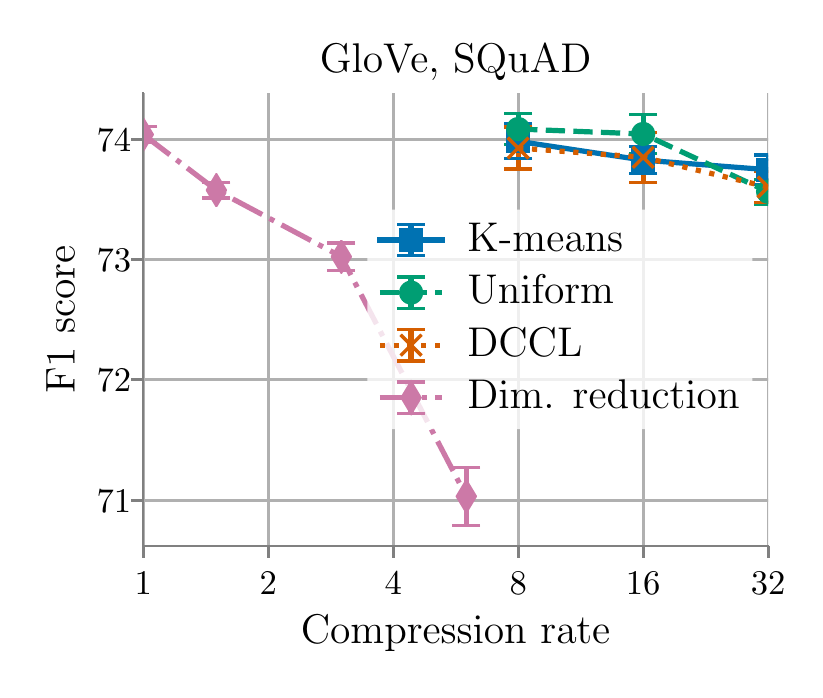}
		\end{tabular}
		\vfigsp
		\caption{
			\textbf{Two motivating empirical observations.}
			Left: Existing measures of compression quality do not satisfactorily explain the relative downstream performance of different compression methods.
			For example, compressed embeddings with higher PIP loss can perform better on question answering than those with lower PIP loss.
			Right: Across different compression rates, a simple uniform quantization method can compete with more complex methods such as DCCL and k-means.
		}
		\label{fig:teaser}
		\end{figure}
\fi

\begin{itemize}[leftmargin=*]
\ifshowall 
	\item First, we observe that the downstream performance of embeddings compressed using the various methods from Section~\ref{subsec:existing_methods} cannot be satisfactorily explained in terms of any of the existing measures of compression quality described in Section~\ref{subsec:existing_metrics}.
	For example, in Figure~\ref{fig:teaser} (left) we see that on GloVe embeddings~\cite{glove14}, the uniform quantization method with compression rate $32\times$ can have over $1.3\times$ higher PIP loss than dimensionality reduction with compression rate $6\times$, while attaining better downstream performance by over 2.5 F1 points on the Stanford Question Answering Dataset (SQuAD)~\cite{squad16}.
	Furthermore, the PIP loss and the two spectral measures of approximation error $\Delta$ and $\Delta_{\max}$ only achieve Spearman correlations (absolute value) of $0.49$, $0.46$, and $0.62$ with the question answering test F1 score, respectively (Table~\ref{tab:sp_rank_main_body_no_TT}).
	These results show that existing measures of compression quality correlate poorly with downstream performance.

	\item Our second observation is that the simple uniform quantization method matches or outperforms the more complex DCCL and k-means compression methods across a number of tasks, embedding types, and compression ratios.
	For example, we see in Figure~\ref{fig:teaser} (right) that with a compression ratio of $32\times$, uniform quantization attains an average F1 score $0.47\%$ absolute below the uncompressed GloVe embeddings on the Stanford Question Answering Dataset \citep{squad16}, while the DCCL method \cite{dccl17} is $0.43\%$ below.
\else
	\item First, we observe that the downstream performance of embeddings compressed using the various methods from Section~\ref{subsec:existing_methods} cannot be satisfactorily explained in terms of any of the existing measures of compression quality described in Section~\ref{subsec:existing_metrics}.
	For example, in Figure~\ref{fig:teaser} we see that on GloVe embeddings~\cite{glove14}, the uniform quantization method
	with compression rate $32\times$ can have over $1.3\times$ higher PIP loss than dimensionality reduction with compression rate $6\times$, while attaining better downstream performance by over 2.5 F1 points on the Stanford Question Answering Dataset (SQuAD)~\cite{squad16}.
	Furthermore, the PIP loss and the two
		\parbox[t]{\dimexpr\textwidth-\leftmargin}{%
		\vspace{-2.5mm}
		\begin{wrapfigure}[16]{r}{0.395\textwidth}
			\begin{minipage}{0.395\textwidth}
				\vspace{-2.5em}
				\begin{figure}[H]
					\centering
					\includegraphics[width=0.95\textwidth]{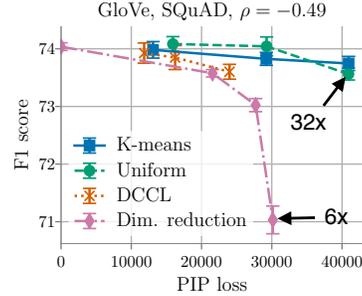}
							\vspace{-0.21in}
					\caption{The PIP loss does not satisfactorily explain the relative downstream performance of different compression methods.}
					\label{fig:teaser}
				\end{figure}
			\end{minipage}
		\end{wrapfigure}
	spectral measures of approximation error $\Delta$ and $\Delta_{\max}$ only achieve Spearman correlation absolute values of $0.49$, $0.46$, and $0.62$ with the question answering test F1 score, respectively (Table~\ref{tab:sp_rank_main_body_no_TT}).
	These results show that existing measures of compression quality correlate relatively poorly with downstream performance.

	\item Our second observation is that the simple uniform quantization method matches or outperforms the more complex DCCL and k-means compression methods across a number of tasks, embedding types, and compression ratios.
	For example, with a compression ratio of $32\times$, uniform quantization attains an average F1 score $0.47$ points below the uncompressed GloVe embeddings on the Stanford Question Answering Dataset \citep{squad16}, while the DCCL method \cite{dccl17} is $0.43$ points below.
}
\fi
\end{itemize}

These two observations suggest the need to better understand the downstream performance of compressed embeddings.
Toward this end, we focus on finding a measure of compression quality with the properties that (i) we can directly relate it to generalization performance, and (ii) we can use it to analyze the performance of uniformly quantized embeddings.

\vsp
\section{A New Measure of Compression Quality}
\label{sec:new_metric}
\vsp

To better understand what properties of compressed embeddings determine their downstream performance, and to help explain the motivating empirical observations above,
we introduce the \emph{eigenspace overlap score}, and show that it satisfies the two desired properties described above.
In Section~\ref{subsec:overlap_generalization} we present generalization bounds for compressed embeddings in the context of linear and logistic regression, in terms of the eigenspace overlap score between the compressed and uncompressed embeddings.
In Section~\ref{subsec:overlap_uniform} we show that in expectation, uniformly quantized embeddings attain high eigenspace overlap scores, helping to explain their strong downstream performance.
Based on the connection between the eigenspace overlap score and downstream performance, in Section~\ref{subsec:overlap_selection} we propose using this score as a way of efficiently selecting among different compressed embeddings.

\subsection{The Eigenspace Overlap Score and Generalization Performance}
\label{subsec:overlap_generalization}
We begin by defining the eigenspace overlap score, which measures how well a compressed embedding approximates an uncompressed embedding.
We then present our theoretical results relating the generalization performance of compressed embeddings to their eigenspace overlap scores.%

\subsubsection{The Eigenspace Overlap Score}
We now define the eigenspace overlap score, and discuss the intuition behind this definition.
\begin{definition}
	Given two full-rank embedding matrices $X \in \RR^{n \times d}$, $\tX \in \RR^{n \times k}$, whose Gram matrices have eigendecompositions $XX^T = U \Lambda U^T$, $\tX\tX^T = \tU \tilde{\Lambda} \tU^T$ for $U \in \RR^{n\times d}$, $\tU\in \RR^{n \times k}$, we define the eigenspace overlap score $\eigover(X,\tX) \defeq  \frac{1}{\max(d,k)}\|U^T \tU\|_F^2$.
	\label{def:overlap}
\end{definition}

This score quantifies the similarity between the subspaces spanned by the eigenvectors with nonzero eigenvalues of $\tX\tX^T$ and $XX^T$.
In particular, assuming $k\leq d$, it measures the ratio between the squared Frobenius norm of $U$ before and after being projected onto $\tU$.
It attains a maximum value of one when $\Span(U) = \Span(\tU)$, and a minimum value of zero when these two spans are orthogonal.
Computing this score takes time $O(n\max(d,k)^2)$, as it requires computing the singular value decompositions (SVDs) of $X$ and $\tX$.
As is clear from the definition, the eigenspace overlap score only depends on the left singular vectors of the two embedding matrices.
To better understand why this is a desirable property, consider two embedding matrices $X$ and $\tX$ with the same left singular vectors.
It follows that the output of any linear model over $X$ can be exactly matched by the output of a linear model over $\tX$;
if we consider the SVDs $X=USV^T$, $\tX \defeq U \tS \tV^T$, then for any parameter vector $w\in\RR^d$ over $X$, $\tw \defeq \tV \tS^{-1} S V^T w$ gives $Xw=\tX\tw$.
This observation shows how central the left singular vectors of an embedding matrix are to the set of models which use this matrix, and thus why it is reasonable for the eigenspace overlap score to only consider the left singular vectors.
In Appendix~\ref{app:robustness_eigenspace_overlap} we discuss this score's robustness to perturbations, while in Appendix~\ref{app:overlap_reconstruction} we discuss the connection between this score and a variant of embedding reconstruction error. %

\subsubsection{Generalization Results}
We now present our theoretical results relating the difference in generalization performance between models trained on compressed vs.\ uncompressed embeddings, in terms of the eigenspace overlap score.
For these results, we consider an average-case analysis in the context of fixed design linear regression, for both the squared loss function and for any Lipschitz continuous loss function (\eg, logistic loss).
We consider the fixed design setting for ease of analysis;
for example, when using the squared loss there is a closed-form expression for a regressor's generalization performance.
Before presenting our results in Theorems~\ref{prop:eigen_dist} and \ref{thm:lipschitz} for the two types of loss functions, we briefly review fixed design linear regression, and discuss the average-case setting we consider.

In fixed design linear regression, we observe a set of labeled points $\{(x_i,y_i)\}_{i=1}^n$ where the observed labels $y_i=\by_i + \eps_i\in\RR$ are perturbed from the true labels $\by_i$ with independent noise $\eps_i$ with mean zero and variance $\sigma^2$.
If we let $x_i \in \RR^d$ denote the $i^{th}$ row of the matrix $X\in \RR^{n \times d}$ with SVD $X=USV^T$,
let $y$ and $\by$ in $\RR^n$ denote the perturbed and true label vectors,
and let $\ell\colon\RR\times\RR \rightarrow \RR$ be a convex loss function,
we can define $f_{X,\eps}$ as the linear model which minimizes the empirical loss:
$f_{X,\eps}(x) \defeq x^T w^*$ where $w^* \defeq \argmin_{w\in\RR^d} \sum_{i=1}^n \ell(x_i^T w, y_i)$.
When the loss function is the squared loss, we can use the closed-form solution $w^*=(X^T X)^{-1}X^T y$ to show that the expected loss of $f_{X,\eps}$ is equal to $\cR_{\by}(X) \defeq \expect{\eps}{\frac{1}{n}\sum_{i=1}^n (\ell(f_{X,\eps}(x_i),\by_i)} = \frac{1}{n}(\|\by\|^2 - \|U^T\by\|^2 + d\sigma^2)$;
for the derivation, see Appendix~\ref{app:fixed_design}.
If we instead consider any Lipschitz continuous convex loss function (\eg, the logistic loss\footnote{We consider the logistic loss
	$\ell(z',z) \defeq -\left(\sigma(z)\log\big(\sigma(z')\big) + (1-\sigma(z))\log\big(1-\sigma(z')\big)\right)$, where here $\sigma\colon\RR\rightarrow\RR$ denotes the sigmoid function, and $z$ and $z'$ both represent logits.
	If $z' \defeq w^T x$ is bounded (which occurs when the weight vector and data are both bounded), this loss is Lipschitz continuous in both arguments.
}) there may not be a closed-form solution for the parameter vector $w^*$, but we can still derive upper bounds on the expected loss in this setting (see Theorem~\ref{thm:lipschitz}).

We consider average-case analysis for two reasons:
First, in the setting where one would like to use the same compressed embedding across many tasks (\ie, different label vectors $\by$), an average-case result describes the average performance across these tasks.
Second, for both empirical and theoretical reasons we argue that worst-case bounds are too loose to explain our empirical observations.
Empirically, we observe that compressed embeddings with large values of $\Delta_1$ and $\Delta_2$ (defined in Section~\ref{subsec:existing_metrics}) can still attain strong generalization performance (Appendix~\ref{app:d1_d2}), even though these values imply large worst-case bounds on the generalization error \citep{lprff18}.
From a theoretical perspective, worst-case bounds must account for all possible label vectors, including those chosen adversarially.
For example, if there exists a single direction in $\Span(U)$ orthogonal to $\Span(\tU)$ (which always occurs when $\dim(\tU) < \dim(U)$) the label vector $\by$ can be in this direction, resulting in large generalization error for $\tX$ and small generalization error for $X$.
Thus, we consider an average-case analysis in which we assume $\by$ is a random label vector in $\Span(U)$.
We consider this setting because we are most interested in the situation where we know the uncompressed embedding matrix $X$ performs well (in this case, $\cR_{\by}(X) = d\sigma^2/n$), and we would like to understand how well $\tX$ can do.\footnote{The
	difference between average-case and worst-case analysis is central to understanding the difference between $(\Delta_1,\Delta_2)$-spectral approximation (which yields worst-case generalization bounds) \citep{lprff18} and the eigenspace overlap score (which yields average-case generalization bounds).
}

We now present our result for the squared loss.
To maintain a constant signal ($\by$) to noise ($\eps$) ratio for different embedding matrix sizes, we define $c\in\RR$ as the scalar for which $\sigma^2 = c^2\cdot \expect{\by}{\frac{1}{n}\sum_{i=1}^n\by_i^2}$.
Thus, when $c=1$ the entries of the true label vector on average have the same variance as the noise.

\begin{theorem}
	Let $X=USV^T\in \RR^{n\times d}$ be the singular value decomposition of a full-rank embedding matrix $X$, and let $\tX \in \RR^{n\times k}$ be another full-rank embedding matrix.
	Let $\by = Uz\in \RR^n$ denote a random label vector in $\Span(U)$, where $z$ is random with zero mean and identity covariance matrix.
	Letting $\sigma^2 = c^2\cdot \expect{\by}{\frac{1}{n}\sum_{i=1}^n\by_i^2} = c^2\frac{d}{n}$ denote the variance of the label noise, it follows that
	\begin{eqnarray}
	\expect{\by}{\cR_{\by}(\tX) - \cR_{\by}(X)} &=& \frac{d}{n}\cdot\Big(1 - \eigover(X,\tX)\Big) - c^2 \cdot \frac{d(d-k)}{n^2}.
	\end{eqnarray}
	\vfigsp \vsp
	\label{prop:eigen_dist}
\end{theorem}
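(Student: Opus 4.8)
The plan is to compute each of the two expected risks in closed form using the formula $\cR_{\by}(X) = \frac{1}{n}(\|\by\|^2 - \|U^T\by\|^2 + d\sigma^2)$ from Appendix~\ref{app:fixed_design}, take the expectation over $\by = Uz$, and subtract. First I would handle $\expect{\by}{\cR_{\by}(X)}$. Since $\by = Uz$ lies in $\Span(U)$ and $U$ has orthonormal columns, $U^T\by = U^T U z = z$, so $\|\by\|^2 = \|U^T\by\|^2 = \|z\|^2$, and the first two terms cancel exactly; hence $\cR_{\by}(X) = d\sigma^2/n$ deterministically (this matches the remark in the text), and $\expect{\by}{\cR_{\by}(X)} = d\sigma^2/n = c^2 d^2/n^2$ after substituting $\sigma^2 = c^2 d/n$.

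Next I would handle $\expect{\by}{\cR_{\by}(\tX)}$. Writing $\tX = \tU\tilde\Lambda^{1/2}\tV^T$ with $\tU \in \RR^{n\times k}$ having orthonormal columns, the same risk formula gives $\cR_{\by}(\tX) = \frac{1}{n}(\|\by\|^2 - \|\tU^T\by\|^2 + k\sigma^2)$. Here $\|\by\|^2 = \|Uz\|^2 = \|z\|^2$, so $\expect{\by}{\|\by\|^2} = \expect{z}{\|z\|^2} = \trace{\expect{z}{zz^T}} = \trace{I_d} = d$ using the identity covariance assumption. For the middle term, $\|\tU^T\by\|^2 = \|\tU^T U z\|^2 = z^T (U^T\tU)(\tU^T U) z$, so $\expect{\by}{\|\tU^T\by\|^2} = \trace{(U^T\tU)(\tU^T U)\expect{z}{zz^T}} = \trace{(U^T\tU)(\tU^T U)} = \|U^T\tU\|_F^2$. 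Now I invoke Definition~\ref{def:overlap}: since $\tX$ has $k$ columns and $X$ has $d$ columns, and in the relevant regime $k \le d$ so $\max(d,k) = d$ (I should state this assumption or note the symmetric case), $\|U^T\tU\|_F^2 = d\cdot\eigover(X,\tX)$. Assembling, $\expect{\by}{\cR_{\by}(\tX)} = \frac{1}{n}(d - d\cdot\eigover(X,\tX) + k\sigma^2) = \frac{d}{n}(1-\eigover(X,\tX)) + \frac{kc^2 d}{n^2}$.

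Finally I would subtract: $\expect{\by}{\cR_{\by}(\tX) - \cR_{\by}(X)} = \frac{d}{n}(1-\eigover(X,\tX)) + \frac{kc^2 d}{n^2} - \frac{c^2 d^2}{n^2} = \frac{d}{n}(1-\eigover(X,\tX)) - c^2\frac{d(d-k)}{n^2}$, which is exactly the claimed identity. The computation is essentially routine once the risk formula is granted; the only points requiring care are (i) confirming the $\max(d,k)$ in the definition resolves to $d$ under the implicit assumption $k\le d$ (or remarking that for $k>d$ one should replace $d$ by $k$ in the normalization, though the theorem as stated with the $(d-k)$ term suggests $k\le d$ is intended), and (ii) being careful that the cancellation $\|\by\|^2 = \|U^T\by\|^2$ holds for $X$ but emphatically not for $\tX$, since $\by$ need not lie in $\Span(\tU)$ — this asymmetry is precisely what produces the $\eigover$ term. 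I expect no genuine obstacle here; the main "work" was already done in deriving the fixed-design risk formula in the appendix, and everything else is a short second-moment calculation.
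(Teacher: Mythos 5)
Your proposal is correct and follows essentially the same route as the paper's proof: both reduce to the closed-form fixed-design risk, compute $\expect{\by}{\|U^T\by\|^2}$ and $\expect{\by}{\|\tU^T\by\|^2}$ via the trace identity, and identify $\|\tU^T U\|_F^2 = d\,\eigover(X,\tX)$ under the (shared, implicit) assumption $k\le d$. The only cosmetic difference is that you evaluate the two risks separately and note $\cR_{\by}(X)=d\sigma^2/n$ holds deterministically, whereas the paper works with the difference directly.
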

This theorem reveals that a larger eigenspace overlap score $\eigover(X,\tX)$ results in better expected loss for the compressed embedding.
Note that if we focus on the low-dimensional and low-noise setting, where $d \ll n$ and $c^2 = O(1)$, we can effectively ignore the term $c^2\frac{d(d-k)}{n^2} = O(d^2/n^2)$, and the generalization performance is determined by the eigenspace overlap score.

We now present a result analogous to Theorem~\ref{prop:eigen_dist} for Lipschitz continuous loss functions.
\begin{theorem}
	Let $X \in \RR^{n\times d}$, $\tX\in\RR^{n\times k}$, $\by \in \RR^n$, and $c\in\RR$ be defined as in Theorem~\ref{prop:eigen_dist}.
	Let $\ell\colon\RR\times\RR\rightarrow\RR$ be a convex non-negative loss function which is $L$-Lipschitz continuous in both arguments and satisfies $\argmin_{v'}\ell(v',v) = v \; \forall v\in\RR$.
	It follows that
	\begin{eqnarray*}
	\expect{\by}{\cR_{\by}(\tX) - \cR_{\by}(X)}
	&\leq& \frac{L\sqrt{d}}{\sqrt{n}} \left(\sqrt{1 - \eigover(X, \tX)} + 2c\right). %
	\end{eqnarray*}
	\vspace{-0.15in}
	\label{thm:lipschitz}
\end{theorem}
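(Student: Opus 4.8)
The plan is to reduce the Lipschitz case to the squared-loss case (Theorem~\ref{prop:eigen_dist}) by comparing both $\cR_{\by}(\tX)$ and $\cR_{\by}(X)$ to an auxiliary quantity that measures how well the \emph{noiseless} targets $\by$ can be fit within $\Span(\tU)$ and $\Span(U)$, respectively. Concretely, for an embedding matrix $A$ with left singular vectors spanning a subspace $\mathcal{S}_A$, let $P_A\by$ denote the orthogonal projection of $\by$ onto $\mathcal{S}_A$. I would first show that, for any $L$-Lipschitz loss $\ell$ with $\argmin_{v'}\ell(v',v)=v$, the optimal empirical risk minimizer $f_{A,\eps}$ over $A$ has expected risk (over $\eps$) bounded as
\begin{eqnarray*}
\cR_{\by}(A) &\leq& \frac{L}{n}\|\by - P_A\by\|_1 + \frac{L}{n}\,\expect{\eps}{\|\text{(noise-induced perturbation of the fitted logits)}\|_1}.
\end{eqnarray*}
The first term comes from taking $f_{A,\eps}$ and comparing it to the best in-subspace linear predictor for the clean labels (using $L$-Lipschitzness of $\ell$ in the first argument together with the $\argmin$ condition, which guarantees $\ell(P_A\by_i,\by_i)$ controls the irreducible part); the second captures how noise corrupts the learned predictor relative to that clean-label predictor.

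The second step is to bound the noise term. Since $A$ is full-rank, the fitted values lie in $\mathcal{S}_A$ and the noise contribution to the fitted logits is exactly $P_A\eps$ in the squared-loss geometry; for general Lipschitz losses I would instead pass through the bound $\|P_A\eps\|_1 \leq \sqrt{n}\,\|P_A\eps\|_2 \leq \sqrt{n}\,\|\eps\|_2$, and then use $\expect{\eps}{\|\eps\|_2} \leq \sqrt{\expect{\eps}{\|\eps\|_2^2}} = \sqrt{n}\sigma = \sqrt{n}\cdot c\sqrt{d/n} = c\sqrt{d}$ by Jensen and the definition of $\sigma^2$. This yields a noise term of order $\frac{L}{n}\cdot\sqrt{n}\cdot c\sqrt{d} = \frac{Lc\sqrt{d}}{\sqrt{n}}$ for each of $\tX$ and $X$, which is where the $2c$ in the bound comes from (one $c$ for $\cR_{\by}(\tX)$, one for subtracting off a lower bound of $0$ — or more carefully, a matching term — on $\cR_{\by}(X)$; since $\cR_{\by}(X)\geq 0$ we can simply drop it, and the two noise contributions to $\cR_{\by}(\tX)-\cR_{\by}(X)$ are each at most $\frac{Lc\sqrt{d}}{\sqrt{n}}$ in absolute value, giving $2c$).

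The third step handles the clean-label approximation term $\frac{L}{n}\|\by - P_{\tX}\by\|_1$. Here I use the same average-case assumption $\by = Uz$ with $\E[z]=0$, $\E[zz^T]=I$, so that $\expect{\by}{\|\by - P_{\tX}\by\|_2^2} = \expect{\by}{\|\by\|_2^2 - \|\tU^T\by\|_2^2} = d - \|U^T\tU\|_F^2 = d - \max(d,k)\cdot\eigover(X,\tX)$; when $k\le d$ this is $d(1-\eigover(X,\tX))$, and when $k>d$ it is $\le d(1-\eigover(X,\tX))$ as well since $\max(d,k)\ge d$. Then $\expect{\by}{\|\by-P_{\tX}\by\|_1} \leq \sqrt{n}\cdot\sqrt{\expect{\by}{\|\by-P_{\tX}\by\|_2^2}} \leq \sqrt{n}\cdot\sqrt{d(1-\eigover(X,\tX))}$ by Cauchy--Schwarz and Jensen, giving the $\frac{L\sqrt{d}}{\sqrt{n}}\sqrt{1-\eigover(X,\tX)}$ term. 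For $\cR_{\by}(X)$, the analogous clean term vanishes since $\by\in\Span(U)$. Combining, taking expectations over $\by$ throughout, and collecting the three contributions gives exactly $\frac{L\sqrt{d}}{\sqrt{n}}\big(\sqrt{1-\eigover(X,\tX)} + 2c\big)$.

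The main obstacle I anticipate is making Step~1 rigorous for a general convex Lipschitz loss without a closed form for $w^*$: I need to argue that the empirical risk minimizer $f_{\tX,\eps}$ does at least as well as the specific (suboptimal) predictor obtained by fitting the clean projected targets $P_{\tX}\by$ plus the projected noise, and then bound the \emph{population} (clean-label) risk of $f_{\tX,\eps}$ in terms of its \emph{empirical} behavior — this is where the $\argmin_{v'}\ell(v',v)=v$ hypothesis is essential, since it lets me write $\ell(f_{\tX,\eps}(x_i),\by_i) \le \ell(f_{\tX,\eps}(x_i),\by_i) - \ell(\by_i,\by_i) \le L|f_{\tX,\eps}(x_i)-\by_i|$, converting the risk gap into an $\ell_1$ distance between fitted logits and clean targets that I then split (via the triangle inequality) into the approximation and noise pieces analyzed in Steps~2--3. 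Care is also needed because $f_{\tX,\eps}$ minimizes risk against the \emph{noisy} labels $y$, not $\by$, so the comparison predictor should be the minimizer against $y$ restricted to $\Span(\tU)$, and I must track that its clean-label risk is controlled by $\|P_{\tX}\by - \by\|_1 + \|P_{\tX}\eps\|_1$ up to the Lipschitz constant.
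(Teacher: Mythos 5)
Your overall strategy is the same as the paper's: compare the empirical risk minimizer over $\tX$ to the best predictor in $\Span(\tU)$ for the clean labels, convert risk gaps into norms of residuals via Lipschitz continuity, use $\expect{\by}{\norm{\by-\tU\tU^T\by}^2} = d\left(1-\eigover(X,\tX)\right)$ together with Jensen's inequality for the approximation term, and control the noise via $\expect{\eps}{\norm{\eps}}\le\sqrt{n}\sigma=c\sqrt{d}$. But there is one genuine gap. Your key chain $\ell(f_{\tX,\eps}(x_i),\by_i)\le \ell(f_{\tX,\eps}(x_i),\by_i)-\ell(\by_i,\by_i)$ requires $\ell(\by_i,\by_i)\le 0$, hence $\ell(v,v)=0$ by non-negativity; the hypotheses only give $\argmin_{v'}\ell(v',v)=v$, and for the motivating logistic loss $\ell(z,z)$ is the entropy of $\sigma(z)$, bounded away from zero for bounded logits. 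Equivalently, your plan to ``simply drop'' $\cR_{\by}(X)$ using $\cR_{\by}(X)\ge 0$ cannot work: the argmin condition forces $\cR_{\by}(\tX)\ge \frac{1}{n}\sum_i \ell(\by_i,\by_i)$ as well, and this irreducible term can be a constant in $n$, whereas your claimed upper bound on $\cR_{\by}(\tX)$ alone is $O(1/\sqrt{n})$. The paper's resolution---which you gesture at with ``or more carefully, a matching term'' but do not commit to---is to lower bound $\cR_{\by}(X)\ge \frac{1}{n}\sum_i \ell(\by_i,\by_i)$ via the argmin condition, so that the irreducible risk cancels in the difference $\cR_{\by}(\tX)-\cR_{\by}(X)$. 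That cancellation is the entire role of the argmin hypothesis, and without it the theorem as stated is false for losses with $\ell(v,v)>0$.

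A secondary issue is your accounting of the $2c$. Both factors of $L\sigma$ arise on the $\tX$ side: one Lipschitz step (in the label argument) to pass from the clean risk of $\tu=\tX\tw^*$ to its risk against the noisy labels $y$, so that ERM optimality against $y$ can be invoked, and one to pass back from $f(\tX\tw,y)$ to $f(\tX\tw,\by)$ for the comparison vector $\tw$. Nothing comes from the $X$ side, which contributes only the noiseless lower bound $\frac{1}{n}\sum_i\ell(\by_i,\by_i)$. Relatedly, the comparison predictor should fit the clean projection $\tU\tU^T\by$ (your first instinct), not the noisy one; fitting $\tU\tU^T y$, as in your ``more careful'' revision, would cost a third noise term. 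With these repairs your argument reduces to the paper's proof.
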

Similarly to Theorem~\ref{prop:eigen_dist}, we see that a larger eigenspace overlap score results in a tighter bound on the generalization performance of the compressed embeddings.
See Appendix~\ref{app:overlap} for the proofs for Theorems~\ref{prop:eigen_dist} and \ref{thm:lipschitz}, where we consider the more general setting of $z$ having arbitrary covariance.

\subsection{The Eigenspace Overlap Score and Uniform Quantization}
\label{subsec:overlap_uniform}
To help explain the strong downstream performance of uniformly quantized embeddings, in this section we present a lower bound on the expected eigenspace overlap score for uniformly quantized embeddings.
Combining this result with Theorem~\ref{prop:eigen_dist} directly provides a guarantee on the performance of the uniformly quantized embeddings.
\ifshowall
This result further demonstrate how the eigenspace overlap score can be used to better understand the performance of compressed embeddings.
\fi

To prove this bound on the eigenspace overlap score, we use the Davis-Kahan $\sin(\Theta)$ theorem \citep{sintheta70}, which upper bounds the amount the eigenvectors of a matrix can change after the matrix is perturbed, in terms of the perturbation magnitude. Because for uniform quantization we can exactly characterize the magnitude of the perturbation, this theorem allows us to bound the eigenspace overlap score of uniformly quantized embeddings.
Note that we assume unbiased stochastic rounding is used for the uniform quantization (see \citep{gupta15} or Appendix~\ref{app:unif}).
We now present the result (proof in Appendix~\ref{app:uniform}):
\begin{theorem}
	Let $X \in \RR^{n\times d}$ be a bounded embedding matrix with $X_{ij} \in [-\frac{1}{\sqrt{d}},\frac{1}{\sqrt{d}}]$\footnote{This bound on the entries of $X$ results in the entries of its Gram matrix being bounded by a constant independent of $d$.}
	 and smallest singular value $\sigma_{\min} = a \sqrt{n/d}$, for $a \in (0,1]$.\footnote{The
		maximum possible value of $\sigma_{\min}$ is $\sqrt{n/d}$, which occurs when $\|X\|_F^2 = n$ and $\sigma_{\min} = \sigma_{\max}$.
	}
	Let $\tX$ be an unbiased stochastic uniform quantization of $X$, where $b$ bits are used per entry.
	Then for $n\geq \max(33,d)$, we can lower bound the expected eigenspace overlap score of $\tX$, over the randomness of the stochastic quantization, as follows:%
	\begin{eqnarray*}
		\expect{}{1-\eigover(X,\tX)} &\leq& \frac{20}{(2^b-1)^2 a^4}.
	\end{eqnarray*}
	\label{thm1}
	\vspace{-0.18in}
\end{theorem}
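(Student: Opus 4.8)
The plan is to convert the quantity $1-\eigover(X,\tX)$ into a $\sin(\Theta)$ distance between the column spaces of $X$ and $\tX$, bound that distance by $\|\tX\tX^T-XX^T\|_F$ via the Davis--Kahan $\sin(\Theta)$ theorem, and then take expectations over the rounding noise. For the first step, note that uniform quantization preserves the dimension, so $\tX\in\RR^{n\times d}$ and (since $\tX$ is full rank, as required by the definition of $\eigover$) $\Span(U)$ and $\Span(\tU)$ are both $d$-dimensional subspaces of $\RR^n$. If $\theta_1,\dots,\theta_d$ denote their principal angles, the singular values of $U^T\tU$ are $\cos\theta_1,\dots,\cos\theta_d$, so $\|U^T\tU\|_F^2=\sum_i\cos^2\theta_i$ and hence $1-\eigover(X,\tX)=\tfrac1d\sum_i\sin^2\theta_i=\tfrac1d\|\sin(\Theta)\|_F^2$.

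Next, regard $XX^T$ as an $n\times n$ symmetric matrix whose range $\Span(U)$ is the invariant subspace for its $d$ nonzero eigenvalues, all of which lie in $[\sm^2,\sigma_{\max}^2]$ with $\sm^2=a^2n/d$, while its other $n-d$ eigenvalues equal $0$; likewise $\Span(\tU)$ is the invariant subspace of $\tX\tX^T$ for its nonzero eigenvalues. The gap separating these two groups of eigenvalues of $XX^T$ is exactly $\sm^2$, so the Davis--Kahan theorem gives $\|\sin(\Theta)\|_F\le 2\|\tX\tX^T-XX^T\|_F/\sm^2$, and therefore
\[
1-\eigover(X,\tX)\;\le\;\frac{4d}{a^4n^2}\,\bigl\|\tX\tX^T-XX^T\bigr\|_F^2 .
\]
It then remains to show $\E\bigl\|\tX\tX^T-XX^T\bigr\|_F^2\le \tfrac{5n^2}{d(2^b-1)^2}$.

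For this last bound I would write $\tX=X+N$, where the entries of $N$ are independent, mean zero (unbiased stochastic rounding), bounded by the sub-interval width $\Delta:=\tfrac{2}{\sqrt d\,(2^b-1)}$, and satisfy $\E[N_{ij}^2]\le\Delta^2/4$. Then $\tX\tX^T-XX^T=XN^T+NX^T+NN^T$, and I bound $\E\|\tX\tX^T-XX^T\|_F^2$ term by term. For the linear term, $\E\|XN^T\|_F^2=\sum_{i,j,k}X_{ik}^2\,\E[N_{jk}^2]\le\tfrac{\Delta^2}{4}\,n\,\|X\|_F^2\le\tfrac{n^2}{d(2^b-1)^2}$, using $\|X\|_F^2\le nd\cdot\tfrac1d=n$ together with independence and zero mean of the $N_{jk}$. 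For the quadratic term, expand $\E\|NN^T\|_F^2=\sum_{i,j,k,k'}\E[N_{ik}N_{jk}N_{ik'}N_{jk'}]$, keep only the index-coincidence patterns whose expectation is nonzero, and bound the survivors with $\E[N_{ij}^2]\le\Delta^2/4$ and $\E[N_{ij}^4]\le\Delta^2\E[N_{ij}^2]$; the hypotheses $n\ge d$ and $n\ge 33$ are precisely what makes the $i\ne j$ term dominate and lets the lower-order pieces be absorbed, giving $\E\|NN^T\|_F^2\le\tfrac{Cn^2}{d(2^b-1)^2}$ for a small absolute constant $C$. Combining the three pieces (handling the $XN^T$--$NX^T$ and $XN^T$--$NN^T$ cross terms directly, or bounding them away using $(u+v+w)^2\le 3(u^2+v^2+w^2)$) yields $\E\|\tX\tX^T-XX^T\|_F^2\le\tfrac{5n^2}{d(2^b-1)^2}$, and substituting into the displayed inequality gives $\E[1-\eigover(X,\tX)]\le\tfrac{20}{(2^b-1)^2a^4}$.

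The main obstacle is the fourth-moment computation: correctly enumerating the index patterns of $\E[N_{ik}N_{jk}N_{ik'}N_{jk'}]$, bounding each surviving contribution in terms of $d$, $n$, and $2^b-1$, and checking that $n\ge\max(33,d)$ is exactly strong enough to keep the final constant at $20$ rather than something larger. A secondary subtlety is that invoking Davis--Kahan with the "gap down to $0$" on the $\tX$ side presumes $\tX$ has full column rank, so that its top-$d$ eigenspace is precisely $\Span(\tU)$ as the definition of $\eigover$ demands; this should be noted to hold for the stochastic rounding used here (or the exceptional event argued to be negligible) before the bound is applied.
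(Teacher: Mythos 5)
Your proposal is correct and follows essentially the same route as the paper: Davis--Kahan applied to the Gram matrices with perturbation $H = XN^T + NX^T + NN^T$, followed by a second/fourth-moment computation for $\E\|H\|_F^2$ using unbiasedness and the per-entry variance bound $\tfrac{1}{d(2^b-1)^2}$. The one bookkeeping difference is that your two-sided Davis--Kahan variant carries a factor of $2$ (the paper's version has no such factor, instead using that the trailing eigenvalues of $\tX\tX^T$ are exactly zero since $\tX$ has rank at most $d$), so you must prove $\E\|H\|_F^2 \le 5n^2/(d(2^b-1)^2)$ rather than the paper's $20n^2/(d(2^b-1)^2)$; this is attainable, but only via the sharper moment bounds you sketch ($\E[N^4]\le\Delta^2\,\E[N^2]$, independence across coordinates, and the \emph{direct} treatment of the cross terms --- the fallback $(u+v+w)^2\le 3(u^2+v^2+w^2)$ loses roughly a factor of $2.5$ and would not recover the stated constant $20$).
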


A consequence of this theorem is that with only a logarithmic number of bits $b \geq \log_2\big(\frac{\sqrt{20}}{a^2\sqrt{\eps}}+1\big)$, uniform quantization can attain an expected eigenspace overlap score of at least $1-\eps$.
This helps explain the strong downstream performance of uniform quantization at high compression rates.

In Appendix~\ref{app:uniform_micros} we empirically validate that the scaling of the eigenspace overlap score with respect to the quantities in Theorem~\ref{thm1} matches the theory; we show $1-\eigover(X,\tX)$ drops as the precision $b$ and the scalar $a$ are increased, and is relatively unaffected by changes to the vocabulary size $n$ and dimension $d$.

\subsection{The Eigenspace Overlap Score as a Selection Criterion}
\label{subsec:overlap_selection}
Due to the theoretical connections between generalization performance and the eigenspace overlap score, we propose using the eigenspace overlap score as a selection criterion between different compressed embeddings. %
Specifically, the algorithm we propose takes as input an uncompressed embedding along with two or more compressed versions of this embedding, and returns the compressed embedding with the highest eigenspace overlap score to the uncompressed embedding.
Ideally, a selection criterion should be both accurate and robust.
For each downstream task, we consider \textit{accuracy} as the fraction of cases where a criterion selects the best-performing embedding on the task.
We quantify the \textit{robustness} as the maximum observed performance difference between the selected embedding and the one which performs the best on a downstream task.
In Section~\ref{subsec:exp_choosing}, we empirically validate that the eigenspace overlap score is a more accurate and robust criterion than existing measures of compression quality.

\vsp
\section{Experiments}
\label{sec:experiments}
\vsp
We empirically validate our theory relating the eigenspace overlap score with generalization performance, our analysis on the strong performance of uniform quantization, and the efficacy of the eigenspace overlap score as an embedding selection criterion.
We first demonstrate that this score correlates better with downstream performance than existing measures of compression quality in Section~\ref{subsec:exp_correlation}.
We then demonstrate in Section~\ref{subsec:exp_performance} that uniform quantization consistently matches or outperforms the compression methods to which we compare, both in terms of the eigenspace overlap score and downstream performance.
In Section~\ref{subsec:exp_choosing}, we show that the eigenspace overlap score is a more accurate and robust selection criterion than other measures of compression quality.

\paragraph{Experiment setup} We evaluate compressed versions of publicly available 300-dimensional fastText and GloVe embeddings on question answering and sentiment analysis tasks, and compressed 768-dimensional WordPiece embeddings from the pre-trained case-sensitive BERT\textsubscript{BASE} model \citep{bert18} on tasks from the General Language Understanding Evaluation (GLUE) benchmark \citep{glue19}.
We use the four compression methods discussed in Section~\ref{sec:prelim}: DCCL, k-means, dimensionality reduction, and uniform quantization.\footnote{For
	dimensionality reduction, we use PCA for fastText and BERT embeddings (compression rates: 1, 2, 4, 8), and publicly available lower-dimensional embeddings for GloVe (compression rates: 1, 1.5, 3, 6).
}
For the tasks, we consider question answering using the DrQA model \citep{drqa17} on the Stanford Question Answering Dataset (SQuAD) \citep{squad16}, sentiment analysis using a CNN model \citep{kim14} on all the datasets used by~\citet{kim14}, and language understanding using the BERT\textsubscript{BASE} model on the tasks in the GLUE benchmark~\cite{glue19}.
We present results on the SQuAD dataset, the largest sentiment analysis dataset (SST-1~\cite{socher13}) and the two largest GLUE tasks (MNLI and QQP) in this section, and include the results on the other sentiment analysis and GLUE tasks in Appendix~\ref{app:extended_results}.
We evaluate downstream performance using the F1 score for question answering, accuracy for sentiment analysis, and the standard evaluation metric for each GLUE task (Table~\ref{tab:GLUE} in Appendix~\ref{app:experiment_details}).
Across embedding types and tasks, we first compress the pre-trained embeddings, and then train the non-embedding model parameters in the standard manner for each task, keeping the embeddings fixed throughout training.
For the GLUE tasks, we add a linear layer on top of the final layer of the pre-trained BERT model (as in \citep{bert18}), and then fine-tune the non-embedding model parameters.\footnote{Freezing
	the WordPiece embeddings does not observably affect performance (see Appendix~\ref{app:perf_vs_comp}).
}
For more details on the various embeddings, tasks, and hyperparameters we use, see Appendix~\ref{app:experiment_details}.

\subsection{The Eigenspace Overlap Score and Downstream Performance}
\label{subsec:exp_correlation}

\begin{figure}
	\centering
	\begin{tabular}{@{\hskip -0.0in}c@{\hskip -0.0in}c@{\hskip -0.0in}c@{\hskip -0.0in}c@{\hskip -0.0in}}
\ifshowall
		\includegraphics[width=.249\linewidth]{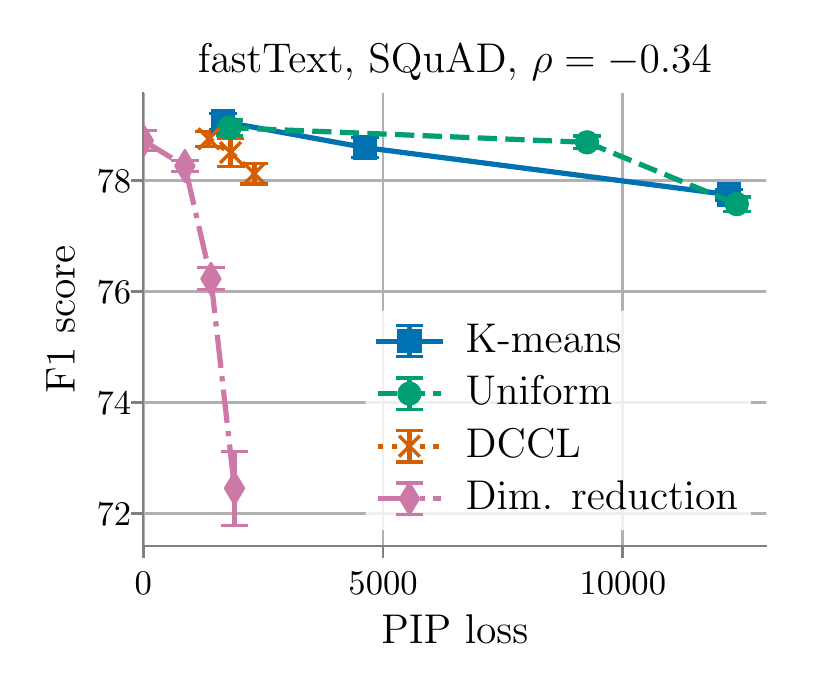} &
		\includegraphics[width=.249\linewidth]{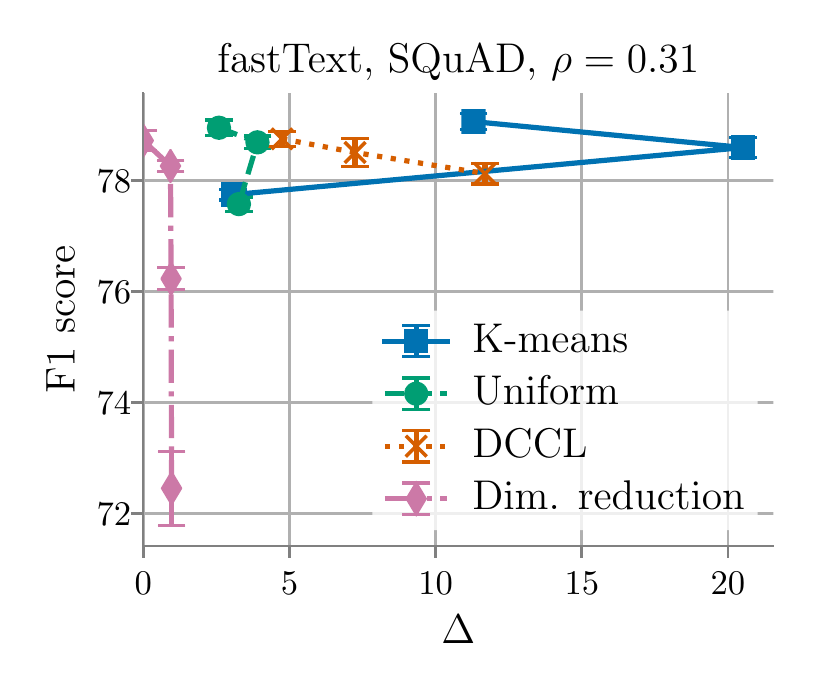} &
		\includegraphics[width=.249\linewidth]{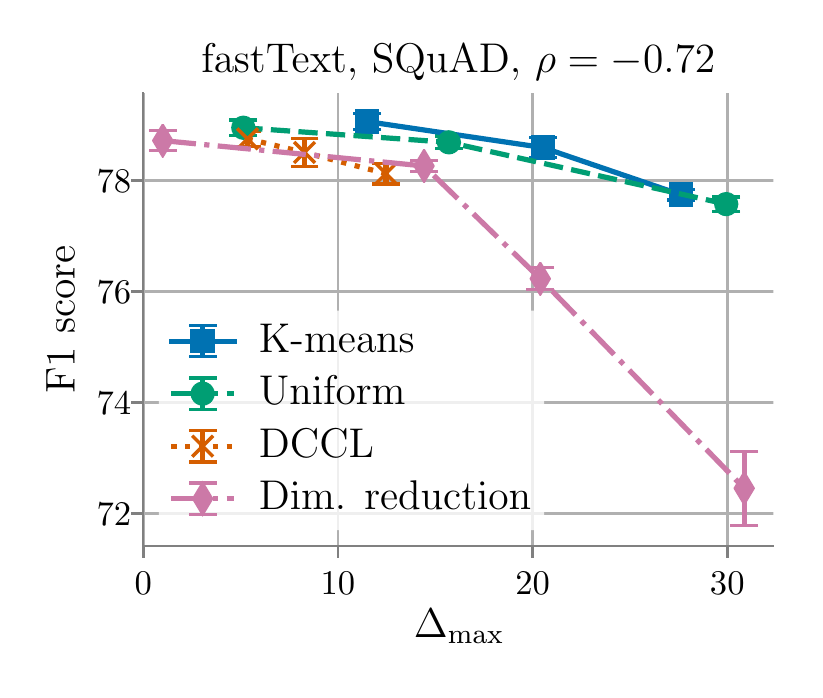} &
		\includegraphics[width=.249\linewidth]{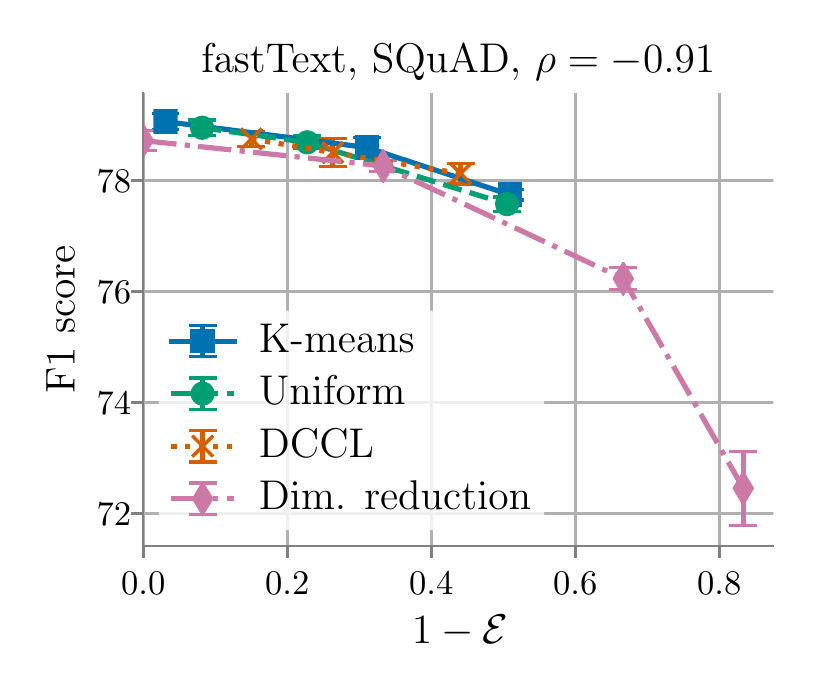} \\
		
		\includegraphics[width=.249\linewidth]{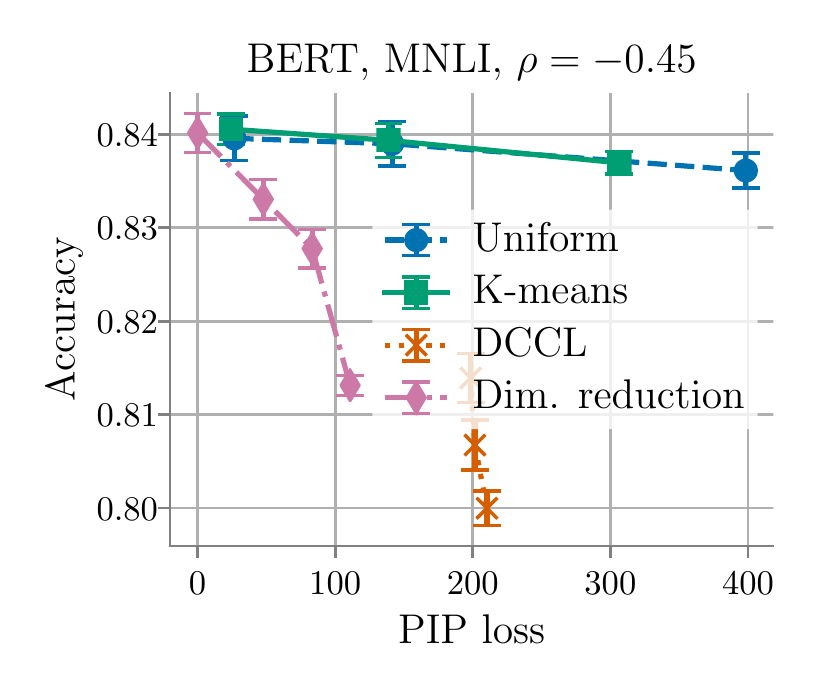} &
		\includegraphics[width=.249\linewidth]{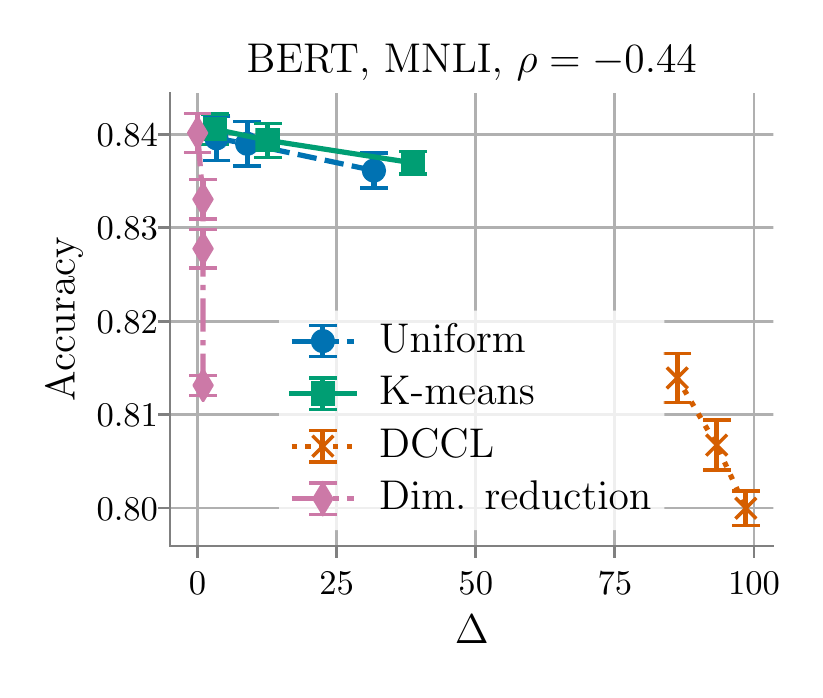} &
		\includegraphics[width=.249\linewidth]{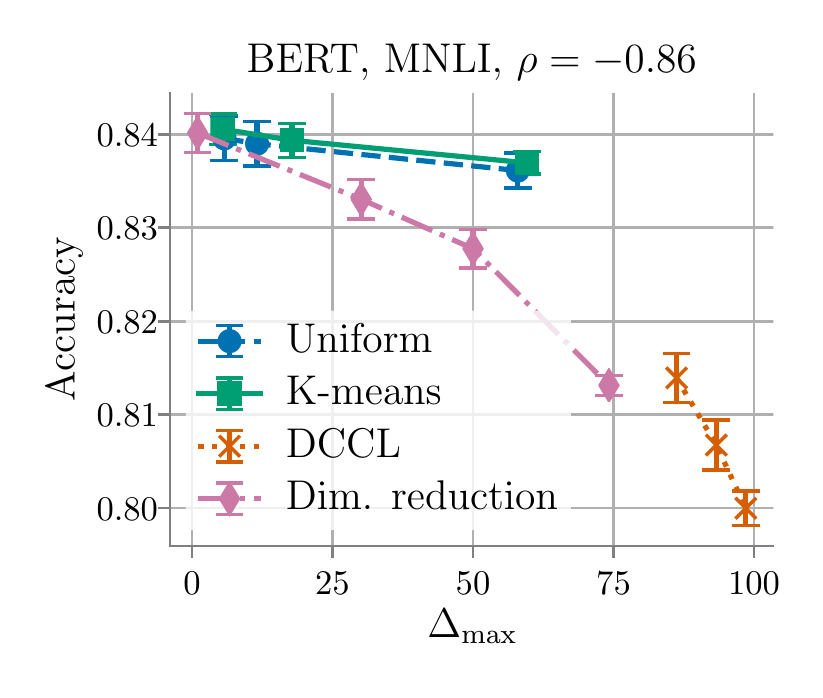} &
		\includegraphics[width=.249\linewidth]{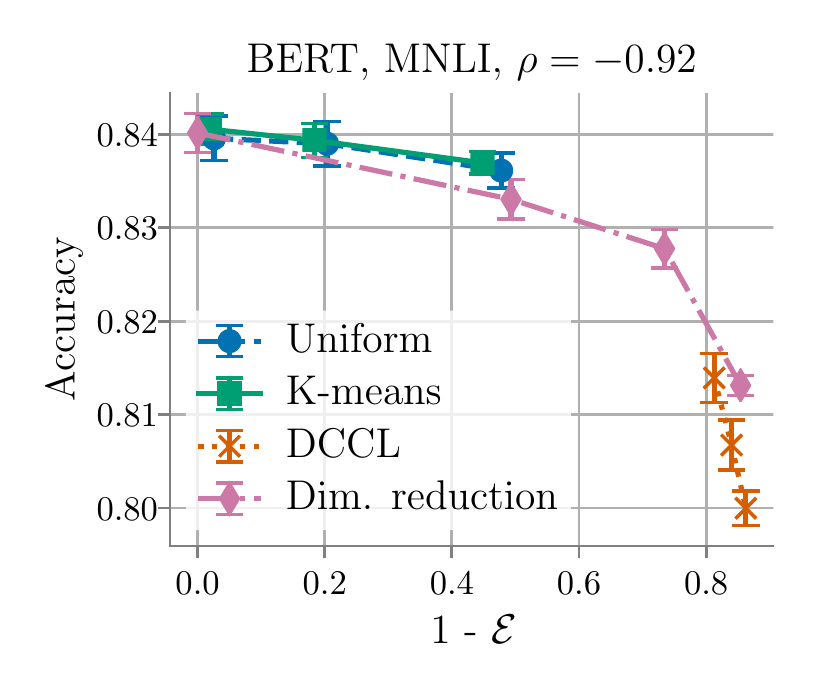} \\
\else
		\includegraphics[width=.245\linewidth]{figures/fasttext1m_qa_best-f1_vs_gram-large-dim-frob-error_linx_det_No_TT.pdf} &
		\includegraphics[width=.245\linewidth]{figures/fasttext1m_qa_best-f1_vs_gram-large-dim-deltamax-2_linx_det_No_TT.pdf} &
		\includegraphics[width=.245\linewidth]{figures/fasttext1m_qa_best-f1_vs_gram-large-dim-deltamax-avron-2_linx_det_No_TT.pdf} &
		\includegraphics[width=.245\linewidth]{figures/fasttext1m_qa_best-f1_vs_subspace-dist-normalized_linx_det_No_TT.pdf} \\
\fi
	\end{tabular}
	\vfigsp
	\caption{
	\textbf{Downstream performance vs.\ measures of compression quality.}
\ifshowall
	We plot the performance of compressed fastText embeddings on the SQuAD question answering task (top), and compressed BERT WordPiece embeddings on the MNLI language inference task (bottom), as a function of different measures of compression quality.
	The eigenspace overlap score $\mathcal{E}$ demonstrates better alignment with downstream performance across compression methods than the other measures.
	We quantify the degree of alignment using the Spearman correlation $\rho$, and include $\rho$ in the plot titles.
\else
	We plot the performance of compressed fastText embeddings on the SQuAD question answering task as a function of different measures of compression quality.
	The eigenspace overlap score $\mathcal{E}$ demonstrates better alignment with downstream performance across compression methods than the other measures.
	We quantify the degree of alignment using the Spearman correlation $\rho$, and include $\rho$ in the plot titles.
\fi	
	}
	\vfigsp
	\label{fig:bad_correlation_det_no_TT}
\end{figure}

\begin{table}
	\vspace{0.08in}
	\caption{\textbf{Spearman correlation between measures of compression quality and downstream performance.}
	For each measure of compression quality, we show the absolute value of its Spearman correlation with downstream performance, on the SQuAD (question answering), SST-1 (sentiment analysis), MNLI (natural language inference), and QQP (question pair matching) tasks.
	We see that the eigenspace overlap score $\mathcal{E}$ attains stronger correlation than the other measures.
}
	\small
	\centering
	\begin{tabular}{c | c | c | c | c | c | c }
		\toprule
		Dataset & \multicolumn{2}{|c|}{SQuAD} & \multicolumn{2}{|c|}{SST-1} & \multicolumn{1}{|c|}{MNLI}  & \multicolumn{1}{|c}{QQP} \\
		\midrule
		Embedding & GloVe & fastText & GloVe & fastText & BERT WordPiece & BERT WordPiece \\ 
		\midrule
PIP loss    &  $0.49$&$0.34$  &  $0.46$&$0.25$  &  $0.45$ & $0.45$  \\ 
$\Delta$    &  $0.46$&$0.31$  &  $0.33$&$0.29$  &  $0.44$ & $0.36$  \\ 
$\Delta_{\max}$    &  $0.62$&$0.72$  &  $0.51$&$0.60$  &  $0.86$ & $0.86$  \\ 
$1 - \mathcal{E}$    &  $\mathbf{0.81}$&$\mathbf{0.91}$  &  $\mathbf{0.75}$&$\mathbf{0.73}$  &  $\mathbf{0.92}$  &  $\mathbf{0.93}$  \\
		\bottomrule
	\end{tabular}
	\vfigsp
	\label{tab:sp_rank_main_body_no_TT}
\end{table}
\vspace{-0.5em}

To empirically validate the theoretical connection between the eigenspace overlap score and downstream performance, we show that the eigenspace overlap score correlates better with downstream performance than the existing measures of compression quality discussed in Section~\ref{sec:prelim}.
Thus, even though our analysis is for linear and logistic regression, we see the eigenspace overlap score also has strong empirical correlation with downstream performance on tasks using neural network models.

\ifshowall
In Figure~\ref{fig:bad_correlation_det_no_TT} we present results for question answering (SQuAD) performance for compressed fastText embeddings, and natural language inference (MNLI) performance for compressed BERT WordPiece embeddings, as a function of the various measures of compression quality.
\else
In Figure~\ref{fig:bad_correlation_det_no_TT} we present results for question answering (SQuAD) performance for compressed fastText embeddings as a function of the various measures of compression quality.
\fi
In each plot, for each combination of compression rate and compression method, we plot the average compression quality measure ($x$-axis) and the average downstream performance ($y$-axis) across the five random seeds used (error bars indicate standard deviations).
If the ranking based on the measure of compression quality was identical to the ranking based on downstream performance, we would see a monotonically decreasing sequence of points.
\ifshowall
As we can see from the rightmost plots in Figure~\ref{fig:bad_correlation_det_no_TT}, the downstream performance decreases smoothly as the eigenspace overlap score decreases;
the downstream performance does not align as well with the other measures of compression quality.
\else
As we can see from the rightmost plot in Figure~\ref{fig:bad_correlation_det_no_TT}, the downstream performance decreases smoothly as the eigenspace overlap value decreases;
the downstream performance does not align as well with the other measures of compression quality (left three plots).
\fi

To quantify how well the ranking based on the quality measures matches the ranking based on downstream performance, we compute the Spearman correlation $\rho$ between these quantities.
In Table~\ref{tab:sp_rank_main_body_no_TT} we can see that the eigenspace overlap score gets consistently higher correlation values with downstream performance than the other measures of compression quality.
Note that $\Delta_{\max}$ also attains relatively high correlation values, though the eigenspace overlap score still outperforms $\Delta_{\max}$ by $0.06$ to $0.24$ on the tasks in Table~\ref{tab:sp_rank_main_body_no_TT}.
See Appendix~\ref{app:correlation} for similar results on other tasks.

\subsection{Downstream Performance of Uniform Quantization}
\label{subsec:exp_performance}

\ifshowall

\begin{figure}
	\centering
	\begin{tabular}{@{\hskip -0.0in}c@{\hskip -0.0in}c@{\hskip -0.0in}c@{\hskip -0.0in}c@{\hskip -0.0in}}
		\includegraphics[width=.245\linewidth]{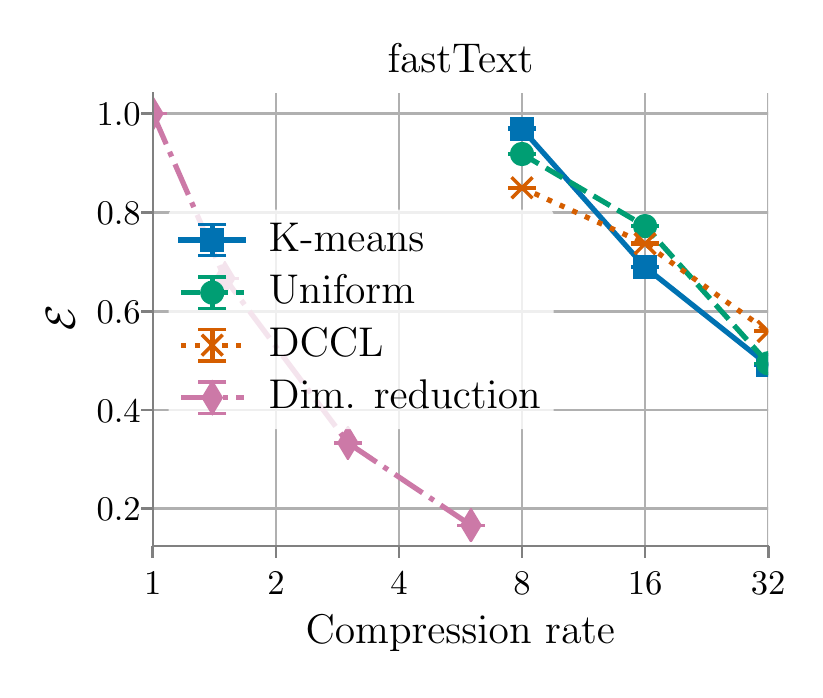} &
		\includegraphics[width=.245\linewidth]{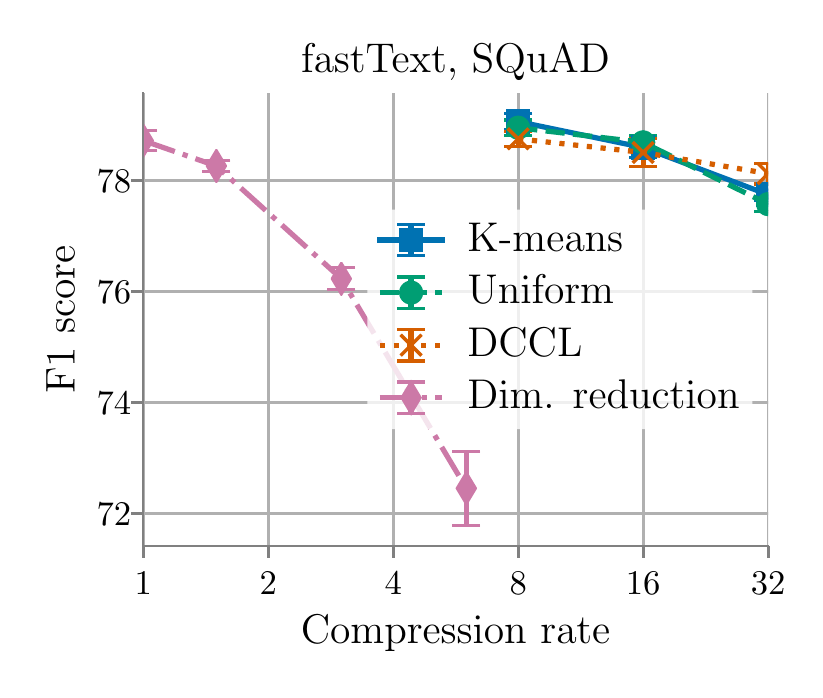} &
		\includegraphics[width=.245\linewidth]{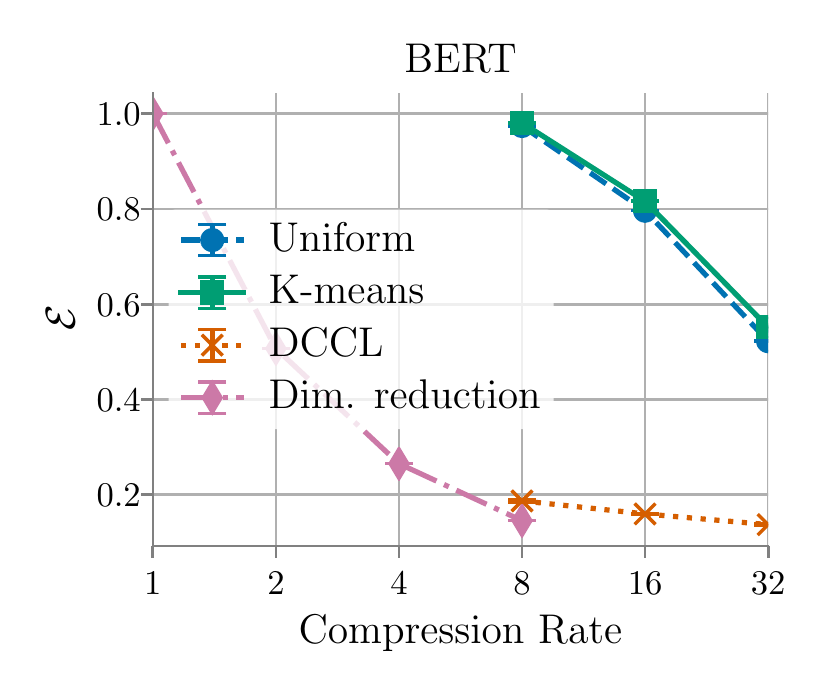} &
		\includegraphics[width=.245\linewidth]{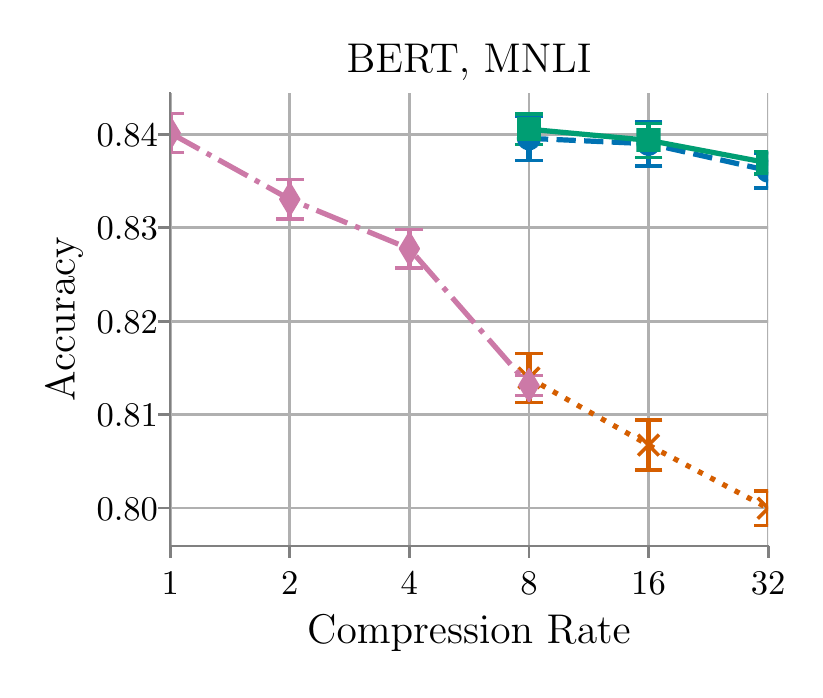} \\
	\end{tabular}
	\vfigsp
	\caption{
	\textbf{The eigenspace overlap score and downstream performance of compressed embeddings.}
	Uniform quantization can match or outperform the more complex k-means and DCCL compression methods, both in terms of the eigenspace overlap score $\mathcal{E}$ and downstream performance.
	We present results for fastText embeddings on question answering (SQuAD, left) and for BERT WordPiece embeddings on natural language inference (MNLI, right).
}
\label{fig:perf_comp}
\end{figure}

\else
	\begin{wrapfigure}[14]{r}{0.565\textwidth}
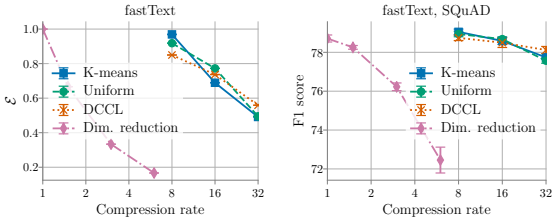

	\begin{minipage}{0.565\textwidth}
	\vspace{-3.1em}
	\begin{figure}[H]
	\centering
	\begin{tabular}{@{\hskip -0.0in}c@{\hskip -0.0in}c@{\hskip -0.0in}}
		\includegraphics[width=.485\linewidth]{figures/fasttext1m_synthetics-large-dim_subspace-overlap-normalized_vs_compression_linx_det_No_TT.pdf} &
		\includegraphics[width=.485\linewidth]{figures/fasttext1m_qa_best-f1_vs_compression_linx_det_No_TT.pdf} \\ %
	\end{tabular}
	\vspace{-0.75em}
	\caption{
		\textbf{Eigenspace overlap and downstream performance of uniform quantization.}
		Uniform quantization can attain high values for the eigenspace overlap $\mathcal{E}$, and match the k-means and DCCL methods for fastText embeddings on the question answering (SQuAD) task.
	}
	\label{fig:perf_comp}
	\end{figure}
	\end{minipage}
\end{wrapfigure}
\fi

We show that across tasks and compression rates uniform quantization consistently matches or outperforms the other compression methods, in terms of both the eigenspace overlap score and downstream performance.
These empirical results validate our analysis from Section~\ref{subsec:overlap_uniform} showing that uniformly quantized embeddings in expectation attain high eigenspace overlap scores, and are thus likely to attain strong downstream performance.
\ifshowall
In Figure~\ref{fig:perf_comp}, we plot the eigenspace overlap score and downstream performance averaged over five random seeds for different compression methods and compression rates;
we visualize the standard deviation with error bars. 
In particular, we show the results for the fastText embeddings on question answering (SQuAD, left) and BERT WordPiece embeddings on natural language inference (MNLI, right). %
\else
In Figure~\ref{fig:perf_comp} we plot the average eigenspace overlap (left) and average question answering (SQuAD) performance (right) of compressed fastText embeddings for different compression methods and compression rates;
we visualize the standard deviation over five random seeds with error bars. %
\fi
Our primary conclusion is that the simple uniform quantization method consistently performs similarly to or better than the other compression methods, both in terms of the eigenspace overlap score and downstream performance.\footnote{We
	apply uniform quantization to compress embeddings trained end-to-end for a translation task in Appendix~\ref{app:task_specific}; we show it outperforms a tensorized factorization \citep{tensor19} proposed for the task-specific setting.}
Given the connections between downstream performance and the eigenspace overlap score, the high eigenspace overlap scores attained by uniform quantization help explain its strong downstream performance.
For results with the same trend on the GLUE and sentiment tasks, see Appendices~\ref{app:perf_vs_comp}, \ref{app:overlap_vs_comp}.\footnote{We
	provide a memory-efficient implementation of the uniform quantization method in \url{https://github.com/HazyResearch/smallfry}.}

\subsection{Compressed Embedding Selection with the Eigenspace Overlap Score}
\label{subsec:exp_choosing}
We now show that the eigenspace overlap score is a more accurate and robust selection criterion for compressed embeddings than the existing measures of compression quality.
In our experiment, we first enumerate all the embeddings we compressed using different compression methods, compression rates, and five random seeds, and we evaluate each of these embeddings on the various downstream tasks; we use the same random seed for compression and for downstream training.
We then consider for each task all pairs of compressed embeddings, and for each measure of compression quality report the \textit{selection error rate}---the fraction of cases where the embedding with a higher compression quality score attains worse downstream performance.
We show in Table~\ref{tab:pred_acc} that across different tasks the eigenspace overlap score achieves lower selection error rates than the PIP loss and the spectral distance measures $\Delta$ and $\Delta_{\max}$, with $1.3\times$ to 
$2\times$ lower selection error rates than the second best measure.
To demonstrate the robustness of the eigenspace overlap score as a criterion, we measure the maximum difference in downstream performance, across all pairs of compressed embeddings discussed above, between the better performing embedding and the one selected by the eigenspace overlap score.
We observe that this maximum performance difference is $1.1\times$ to $5.5\times$ smaller for the eigenspace overlap score than for the measure of compression quality with the second smallest maximum performance difference.
See Appendix~\ref{app:selection} for more detailed results on the robustness of the eigenspace overlap score as a selection criterion.
\vspace{-0.04in}
\begin{table*}
	\caption{\textbf{The selection error rate of each measure of compression quality as a selection criterion.}
	Across all pairs of compressed embeddings from our experiments, we measure for each task the fraction of cases when a quality measure selects the worse performing embedding.
	We observe that the eigenspace overlap score $\mathcal{E}$ achieves lower error rates than other compression quality measures. 
}
	\vspace{0.05in}
	\small
	\centering
	\begin{tabular}{c | c | c | c | c | c | c }
		\toprule
		Dataset & \multicolumn{2}{|c|}{SQuAD} & \multicolumn{2}{|c|}{SST-1} & \multicolumn{1}{|c|}{MNLI}  & \multicolumn{1}{|c}{QQP} \\
		\midrule
		Embedding & GloVe & fastText & GloVe & fastText & BERT WordPiece & BERT WordPiece \\ 
		\midrule
	PIP loss & $0.32$ & $0.37$   & $0.32$ & $0.40$   & $0.31$   & $0.32$ \\
	$\Delta$ & $0.34$ & $0.58$   & $0.39$ & $0.57$   & $0.32$   & $0.33$ \\
	$\Delta_{\max}$ & $0.28$ & $0.22$   & $0.30$ & $0.27$   & $0.15$   & $0.16$ \\
	$1 - \mathcal{E}$ & $\mathbf{0.17}$ & $\mathbf{0.11}$   & $\mathbf{0.19}$ & $\mathbf{0.20}$   & $\mathbf{0.10}$   & $\mathbf{0.10}$ \\
		\bottomrule
	\end{tabular}
\ifshowall
\vspace{-0.5em}
\fi
	\vfigsp
	\label{tab:pred_acc}
\end{table*}

\vsp
\section{Related Work}
\label{sec:relwork}
\vsp
\vspace{-0.04in}
Compressing machine learning models is critical for training and inference in resource-constrained settings.
To enable low-memory training, recent work investigates using low numerical precision~\cite{Micikevicius2018MixedPT,de2018high} and sparsity~\cite{Sohoni2019LowMemoryNN,mostafa19}.
To compress a model for low-memory inference, \citet{han2015deep} investigate pruning and quantization for deep neural networks.

Our work on understanding the generalization performance of compressed embeddings is also closely related to work on understanding the generalization performance of kernel approximation methods \citep{nystrom,rahimi07random}.
In particular, training a linear model over compressed word embeddings can be viewed as training a model with a linear kernel using an approximation to the kernel matrix.
Recently, there has been work on how different measures of kernel approximation error relate to the generalization performance of the model trained using the approximate kernels, with \citet{avron17} and \citet{lprff18} proposing the spectral measures of approximation error which we consider in this work.

\vsp
\section{Conclusion and Future Work}
\label{sec:conclusion}
\vsp
\vspace{-0.04in}
We proposed the eigenspace overlap score, a new way to measure the quality of a compressed embedding without requiring training for each downstream task of interest.
We related this score to the generalization performance of linear and logistic regression models, used this score to better understand the strong empirical performance of uniformly quantized embeddings, and showed that this score is an accurate and robust selection criterion for compressed embeddings.
Although this work focuses on word embeddings, for future work we hope to show that the ideas presented here extend to other domains---for example, to other types of embeddings (\eg, graph node embeddings \citep{grover16}), and to compressing the activations of neural networks.
We also believe that our work can help understand the performance of any model trained using compressed or perturbed features, and to understand why certain proposed methods for compressing neural networks succeed while others fail.
We hope this work inspires improvements to compression methods in various domains.

\subsubsection*{Acknowledgments}
\label{ack}
We thank Tony Ginart, Max Lam, Stephanie Wang, and Christopher Aberger for all their work on the early stages of this project.
We further thank all the members of our research group for their helpful discussions and feedback throughout the course of this work.

We gratefully acknowledge the support of DARPA under Nos. FA87501720095 (D3M), FA86501827865 (SDH), and FA86501827882 (ASED); NIH under No. U54EB020405 (Mobilize), NSF under Nos. CCF1763315 (Beyond Sparsity), CCF1563078 (Volume to Velocity), and 1937301 (RTML);  ONR under No. N000141712266 (Unifying Weak Supervision); the Moore Foundation, NXP, Xilinx, LETI-CEA, Intel, IBM, Microsoft, NEC, Toshiba, TSMC, ARM, Hitachi, BASF, Accenture, Ericsson, Qualcomm, Analog Devices, the Okawa Foundation, American Family Insurance, Google Cloud, Swiss Re, and members of the Stanford DAWN project: Teradata, Facebook, Google, Ant Financial, NEC, VMWare, and Infosys. The U.S. Government is authorized to reproduce and distribute reprints for Governmental purposes notwithstanding any copyright notation thereon. Any opinions, findings, and conclusions or recommendations expressed in this material are those of the authors and do not necessarily reflect the views, policies, or endorsements, either expressed or implied, of DARPA, NIH, ONR, or the U.S. Government.

\bibliography{ref}
\bibliographystyle{plainnat}

\appendix
\section{Background}
\label{app:background}
\subsection{Uniform Quantization}
\label{app:unif}
A \textit{$b$-bit uniform quantization} of a real number $x \in [-r,r]$ is computed as follows:
First, the interval $[-r,r]$ is divided into $2^b - 1$ sub-intervals of equal size.
Then, $x$ is rounded to either the top or bottom of the sub-interval $[\ulx,\olx]$ containing $x$, where $\ulx = r + j\frac{2r}{2^b-1}$ and $\olx = r + (j+1)\frac{2r}{2^b-1}$, for $j\in\{0,1,\ldots,2^b-2\}$.
Given this rounded value, one can simply store the $b$-bit integer $j$ or $j+1$ in place of the real-valued $x$, depending on whether $x$ was rounded to $\ulx$ or $\olx$ respectively.
In this work, we will consider a deterministic rounding scheme which rounds $x$ to the nearest value (denoted by $Q_{b,r}(x)$), as well as an unbiased stochastic rounding scheme (denoted by $\tQ_{b,r}(x)$. More details below).
Note that our analysis will focus on the stochastic rounding scheme, while our experiments will focus on deterministic quantization;
however, for completeness, in Appendix~\ref{app:stoc_quant}, we show that stochastic quantization also performs quite well empirically.

We now define unbiased stochastic uniform quantization more formally.
We will denote by $\tQ_{b,r}(x)$ the $b$-bit unbiased stochastic uniform quantization of a real number $x \in [-r,r]$.
More formally, if $x\in[\ulx,\olx]$ for $\ulx = r + j\frac{2r}{2^b-1}$ and $\olx = r + (j+1)\frac{2r}{2^b-1}$, for $j\in\{0,1,\ldots,2^b-2\}$, $\Prob[\tQ_{b,r}(x) = \ulx] = \frac{\olx-x}{\olx-\ulx}$ and $\Prob[\tQ_{b,r}(x) = \olx] =\frac{x-\ulx}{\olx-\ulx}$.
Note that $\expect{}{\tQ_{b,r}(x)} = x$ and $\var{}{\tQ_{b,r}(x)} \leq \frac{r^2}{(2^b-1)^2} = \delta_b^2 r^2$ for $\delta_b^2 \defeq \frac{1}{(2^b-1)^2}$.
We bound the variance using the fact that a bounded random variable in an interval of length $c$ has variance at most $c^2/4$ by Popoviciu's inequality on variances \citep{popoviciu1935equations} (in our case, $c=\frac{2r}{2^b-1}$).

Using the above definition of $\tQ_{b,r}$, we define the $b$-bit stochastic uniform quantization of a matrix $X$:

\begin{definition}
\label{def:uniquant}
For a bounded embedding matrix $X$ with $X_{ij} \in [-r,r]$, we define a $b$-bit stochastic uniform quantization of $X$ to be a matrix $\tX$ such that $\tX_{ij} = \tQ_{b,r}(X_{ij})$.
\end{definition}

For details on how we use uniform quantization to compress word embeddings, please see Algorithm~\ref{alg:smallfry} and the associated discussion in Appendix~\ref{app:compress}.

\subsection{Fixed Design Linear Regression}
\label{app:fixed_design}

We derive here the close form expression for the risk of fixed design linear
regression.
In this setting we observe a set of labeled points $\{ (x_i, y_i) \}_{i=1}^n$
where the observe labels $y_i = \by_i + \eps_i \in \mathbb{R}$ are perturbed
versions of the true label $\by$ with independent zero-mean noise $\eps_i$
(with variance $\sigma^2$).
In other words, $y = \by + \eps$ with $\eps$ being a $n$-dimensional zero-mean
random variable with covariance $\sigma^2 I_n$.
Let $X \in \mathbb{R}^{n \times d}$ be the feature matrix.
The weight vector $w^*$ of the optimal linear regressor $f_{X,\eps}(x) = \dotp{x,w^*}$ is computed by minimizing the least square loss:
\begin{equation*}
  w^* = \argmin_{w \in d} \frac{1}{n} \norm{X w - y}^2.
\end{equation*}
From the normal equation, we know that $w^* = (X^T X)^{-1} X^T y$.
The risk, or expected error, of the optimal linear regressor $f_{X,\eps}$ trained on data matrix $X$ and label vector $y=\by+\eps$ is defined as
\begin{eqnarray*}
  {\cal R}_{\by}(X) &\defeq& \expect{\eps}{\frac{1}{n}\sum_{i=1}^n (f_{X,\eps}(x_i)-\by_i)^2}\\
  &=& \expect{\eps}{\frac{1}{n} \norm{X w^* - \by}^2}.
\end{eqnarray*}

\begin{proposition}
  If the feature matrix $X \in \mathbb{R}^{n \times d}$ is full-rank and has the SVD
  decomposition $X = U S V^T$, then the risk of the optimal linear regressor, in
  the fixed design linear regression problem with noise variance $\sigma^2$, is
  \begin{equation*}
    {\cal R}_{\by}(X) = \frac{1}{n} \left( \norm{\by}^2 - \norm{U^T \by}^2 + d \sigma^2 \right).
  \end{equation*}
  \label{prop:risk_linear_regression}
\end{proposition}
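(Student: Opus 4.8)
The plan is a short direct computation that exploits the fact that the fitted values $Xw^{*}$ are the orthogonal projection of the observed labels $y$ onto the column space of $X$, which under the SVD $X = USV^{T}$ equals $\Span(U)$.

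First I would plug the normal-equation solution $w^{*} = (X^{T}X)^{-1}X^{T}y$ into $Xw^{*}$ to get $Xw^{*} = X(X^{T}X)^{-1}X^{T}y$, and then use the SVD (together with the full-rank assumption, which makes $X^{T}X$ invertible) to identify the hat matrix $X(X^{T}X)^{-1}X^{T} = UU^{T}$, the orthogonal projector onto $\Span(U)$. Substituting $y = \by + \eps$ then gives the error decomposition
\begin{equation*}
  Xw^{*} - \by = UU^{T}\eps - (I_{n} - UU^{T})\by,
\end{equation*}
which splits the prediction error into a noise component lying in $\Span(U)$ and a bias (approximation) component lying in $\Span(U)^{\perp}$.

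Next I would expand $\norm{Xw^{*} - \by}^{2}$ with this decomposition and take the expectation over $\eps$. The cross term is identically zero, since $UU^{T}\eps \in \Span(U)$ while $(I_{n}-UU^{T})\by$ is orthogonal to $\Span(U)$ (it also vanishes in expectation because $\eps$ is zero-mean). For the noise term, idempotence of $UU^{T}$ and the trace identity give $\expect{\eps}{\norm{UU^{T}\eps}^{2}} = \expect{\eps}{\eps^{T}UU^{T}\eps} = \trace{UU^{T}\sigma^{2}I_{n}} = \sigma^{2}\trace{U^{T}U} = d\sigma^{2}$, using that $U$ has $d$ orthonormal columns. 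For the bias term, idempotence of $I_{n}-UU^{T}$ gives $\norm{(I_{n}-UU^{T})\by}^{2} = \by^{T}(I_{n}-UU^{T})\by = \norm{\by}^{2} - \norm{U^{T}\by}^{2}$. Adding the two contributions and dividing by $n$ yields $\cR_{\by}(X) = \frac{1}{n}\big(\norm{\by}^{2} - \norm{U^{T}\by}^{2} + d\sigma^{2}\big)$, as claimed.

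There is no genuine obstacle here: the whole argument is a few lines of linear algebra. The only steps needing a little care are identifying the hat matrix with $UU^{T}$ (which is where the full-rank hypothesis enters) and the trace manipulation turning the expected squared norm of the projected noise into $d\sigma^{2}$; everything else is bookkeeping with the two complementary orthogonal projections $UU^{T}$ and $I_{n}-UU^{T}$.
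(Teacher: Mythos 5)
Your proof is correct and follows essentially the same route as the paper's: identify the hat matrix $X(X^TX)^{-1}X^T = UU^T$ via the SVD, substitute $y = \by + \eps$, kill the cross term, and evaluate the noise term with the trace identity to get $d\sigma^2$. Your decomposition $Xw^* - \by = UU^T\eps - (I_n - UU^T)\by$ is just a slightly cleaner reorganization of the same algebra the paper carries out with $A = I_n - UU^T$.
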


\begin{proof}
  From the normal equation,
  \begin{equation*}
    w^* = (X^T X)^{-1} X^T y = (V S^2 V^T)^{-1} V S U^T y = V S^{-2} V^T V S U^T y = V S^{-2} S U^T y = V S^{-1} U^T y.
  \end{equation*}
  Substituting this expression into the definition of the risk, we obtain
  \begin{eqnarray*}
    {\cal R}_{\by}(X)
    &=& \frac{1}{n} \expect{\eps}{\norm{X w - \by}^2} \\
    &=& \frac{1}{n} \expect{\eps}{\norm{U S V^T V S^{-1} U^T y - \by}^2} \\
    &=& \frac{1}{n} \expect{\eps}{\norm{U U^T y - \by}^2} \\
    &=& \frac{1}{n} \expect{\eps}{\norm{U U^T (\by + \eps) - \by}^2} \\
    &=& \frac{1}{n} \expect{\eps}{\norm{(U U^T - I_n)(\by + \eps) + \eps}^2} \\
    &=& \frac{1}{n} \expect{\eps}{\norm{(I_n - U U^T)(\by + \eps) - \eps}^2}\\
    &=& \frac{1}{n} \expect{\eps}{\|A(\by+\eps) -\eps\|^2} \quad (\text{letting } A\defeq I_n - U U^T)\\
    &=& \frac{1}{n} \expect{\eps}{(\by+\eps)^T A^2 (\by+\eps) - (\by+\eps)^T A \eps - \eps^T A(\by+\eps) + \eps^T \eps}\\
    &=& \frac{1}{n} \expect{\eps}{\by^T A \by - \eps^T A\eps + \eps^T \eps} \quad (\text{using } A^2 = A, \text{ and } \expect{\eps}{\eps^T A \by} = \expect{\eps}{\by^T A \eps} = 0) \\
    &=& \frac{1}{n} \bigg(\by^T (I_n-UU^T) \by  + \expect{\eps}{-\eps^T (I_n - UU^T)\eps + \eps^T \eps} \bigg)\\
    &=& \frac{1}{n} \bigg(\|\by\|^2 - \|U^T\by\|^2 + \expect{\eps}{\eps^T UU^T \eps} \bigg)\\
	&=& \frac{1}{n} \bigg(\|\by\|^2 - \|U^T\by\|^2 + d\sigma^2 \bigg),
  \end{eqnarray*}
  where the last step follows from 
  \begin{align*}
  \expect{\eps}{\eps^T UU^T \eps} = \expect{\eps}{\tr(UU^T \eps\eps^T)} = \tr(UU^T \expect{\eps}{\eps\eps^T}) = \tr(UU^T \sigma^2 I_n) = \sigma^2 \|U\|_F^2 = d\sigma^2.
  \end{align*}
\end{proof}

\section{The Eigenspace Overlap Score: Theory and Extensions}
\label{app:overlap}
\subsection{Proof of Theorem~\ref{prop:eigen_dist}: Average Case Analysis for Fixed Design Linear Regression}
\label{app:proof_prop1}

We present the proof of Theorem~\ref{prop:eigen_dist}, relating the
generalization performance and eigenspace overlap score in the context of fixed design linear regression.
The true label $\by$ is assumed to be randomly distributed in the span of $U$, of the form $\by = Uz$ for some zero-mean $d$-dimensional random variable $z$.
While in Section~\ref{sec:new_metric} we assume for simplicity that $z$ has identity covariance, here we consider the more general setting of $z$ having covariance matrix $\Sigma$.
Note that because $\frac{1}{n}\expect{}{\|\by\|^2} = \frac{1}{n}\expect{z}{\|Uz\|^2} = \frac{1}{n}\expect{z}{\|z\|^2} = \frac{1}{n}\tr(\Sigma)$, to maintain a constant signal to noise ratio it makes sense for the variance $\sigma^2$ of the noise we add to each entry of $y$ to scale as $\sigma^2 = O\left(\frac{1}{n}\tr(\Sigma)\right)$.
Thus, we introduce a scalar $c\in\RR$ such that $\sigma^2 = \frac{c^2}{n}\tr(\Sigma)$;
this is the more general form of $\sigma^2 = c^2 \frac{d}{n}$ from Section~\ref{sec:new_metric}.
We now prove the more general version of Theorem~\ref{prop:eigen_dist}.
\begin{customthm}{1 (Generalized)}
	Let $X=USV^T\in \RR^{n\times d}$ be the singular value decomposition of a full-rank embedding matrix $X$, and let $\tX \in \RR^{n\times k}$ be another full-rank embedding matrix.
	Let $\by = Uz\in \RR^n$ denote a random label vector in $\Span(U)$, where $z \in \RR^d$ has mean zero and covariance matrix $\Sigma$.
	Let $\lambda_{\min}(\Sigma)$ be the smallest eigenvalue of $\Sigma$.
	Letting $\sigma^2 = \frac{c^2}{n}\tr(\Sigma)$ denote the variance of the label noise, it follows that
	\begin{eqnarray*}
    \expect{\by}{\cR_{\by}(\tX) - \cR_{\by}(X)}
    &\leq& \frac{\tr \Sigma - d \lambda_{\min}(\Sigma) \eigover(X, \tX)}{n} - c^2 \cdot \frac{\tr(\Sigma)(d-k)}{n^2}.
	\end{eqnarray*}
	If the random vector $z$ has identity covariance matrix, then
	\begin{eqnarray*}
    \expect{\by}{\cR_{\by}(\tX) - \cR_{\by}(X)} &=& \frac{d}{n}\cdot\Big(1 - \eigover(X,\tX)\Big) - c^2 \cdot \frac{d(d-k)}{n^2}.
	\end{eqnarray*}
	\label{prop:eigen_dist_general}
\end{customthm}

\begin{proof}
  Since $\by = Uz$, we have
  \begin{align*}
    \expect{\by}{\norm{U^T \by}^2}
    &= \expect{z}{\norm{U^T U z}^2} = \expect{z}{\norm{z}^2} = \expect{z}{\tr (z^T z)} \\
    &= \expect{z}{\tr (zz^T)} = \tr \left(\expect{z}{z z^T}\right) = \tr(\Sigma).
  \end{align*}
  Similarly,
  \begin{align*}
    \expect{\by}{\norm{\tU^T \by}^2}
    &= \expect{z}{\norm{\tU^T U z}^2} = \expect{z}{\tr (z^T U^T \tU \tU^T U z)} = \expect{z}{\tr ( U^T \tU \tU^T U zz^T)} \\
    &= \tr ( U^T \tU \tU^T U \expect{z}{zz^T}) = \tr (U^T \tU \tU^T U \Sigma) = \tr (\Sigma^{1/2} U^T \tU \tU^T U\Sigma^{1/2}) \\
    &= \norm{\tU^T U \Sigma^{1/2}}_F^2,
  \end{align*}
  where $\Sigma^{1/2}$ is the positive semidefinite (PSD) matrix such that $(\Sigma^{1/2})^2 = \Sigma$.
  From Proposition~\ref{prop:risk_linear_regression},
  the risks are
  $\cR_{\by}(X) = \frac{1}{n} \left( \norm{\by}^2 - \norm{U^T \by}^2 + d \sigma^2 \right)$
  and $\cR_{\by}(\tX) = \frac{1}{n} \left( \norm{\by}^2 - \norm{\tU^T \by}^2 + k \sigma^2 \right)$.
  We thus obtain:
  \begin{align}
    \expect{\by}{\cR_{\by}(\tX) - \cR_{\by}(X)}
    &= \frac{1}{n} \left( \expect{\by}{\norm{U^T \by}^2} - \expect{\by}{\norm{\tU^T\by}^2} - (d - k)\sigma^2 \right) \nonumber \\
    &= \frac{1}{n} \left( \tr(\Sigma) - \norm{\tU^T U \Sigma^{1/2}}_F^2 \right) - \frac{d - k}{n} \sigma^2.
    \label{equ:projection_diff}
  \end{align}

  We can lower bound $\norm{\tU^T U \Sigma^{1/2}}_F^2$ in terms of the smallest
  eigenvalue of $\Sigma$ and the eigenspace overlap score of $X$ and $\tX$.
  Specifically, we now show that $\norm{\tU^T U \Sigma^{1/2}}_F^2 \geq d\lambda_{\min}(\Sigma)\eigover(X,\tX)$, where $\lambda_{\min}(\Sigma)$ is the smallest eigenvalue of $\Sigma$.
  We will use the fact that $\Sigma - \lambda_{\min}(\Sigma)I_d$ is PSD, and so $(\Sigma - \lambda_{\min}(\Sigma)I_d)^{1/2}$ exists.
  We now prove the above inequality:
  \begin{eqnarray*}
  	\norm{\tU^T U \Sigma^{1/2}}_F^2 - d\lambda_{\min}(\Sigma)\eigover(X,\tX) &=&
  	\norm{\tU^T U \Sigma^{1/2}}_F^2 - \lambda_{\min}(\Sigma)\|\tU^T U\|_F^2 \\
  	&=&
  	\tr(\Sigma^{1/2} U^T \tU \tU^T U \Sigma^{1/2}) - \lambda_{\min}(\Sigma) \tr(U^T\tU\tU^T U) \\
  	&=&	\tr\big(U^T \tU \tU^T U \Sigma\big) - \tr\big(U^T\tU\tU^T U \lambda_{\min}(\Sigma) I_d\big)\\
  	&=&	\tr\Big(U^T \tU \tU^T U (\Sigma-\lambda_{\min}(\Sigma) I_d)\Big) \\
  	&=& \tr\Big((\Sigma-\lambda_{\min}(\Sigma) I_d)^{1/2}U^T \tU \tU^T U (\Sigma-\lambda_{\min}(\Sigma) I_d)^{1/2}\Big) \\
  	&=& \|\tU^T U (\Sigma-\lambda_{\min}(\Sigma) I_d)^{1/2}\|_F^2 \\
  	&\geq& 0.
  \end{eqnarray*}
  Thus, we have shown that $\expect{\by}{\norm{\tU^T \by}^2} = \norm{\tU^T U \Sigma^{1/2}}_F^2 \geq d\lambda_{\min}(\Sigma)\eigover(X,\tX)$.
  Substituting this lower bound into Equation~\eqref{equ:projection_diff} yields
  \begin{align*}
    \expect{\by}{\cR_{\by}(\tX) - \cR_{\by}(X)}
    &\leq \frac{1}{n}\left(\tr(\Sigma) - d \lambda_{\min}(\Sigma) \eigover(X, \tX)\right) - \frac{d - k}{n} \sigma^2.
  \end{align*}

  In the case where $\Sigma = I_d$, we obtain $\tr(\Sigma) = d$ and
  $\norm{\tU^T U \Sigma^{1/2}}_F^2 = \norm{\tU^T U}_F^2 = d\eigover(X,\tX)$.
  Thus from Equation~\eqref{equ:projection_diff}, we obtain
  \begin{equation*}
    \expect{\by}{\cR_{\by}(\tX) - \cR_{\by}(X)}
    = \frac{d}{n} (1 - \eigover(X, \tX)) - \frac{d - k}{n} \sigma^2.
  \end{equation*}
  Substituting  $\sigma^2 = \frac{c^2}{n}\tr(\Sigma)$ in the above expressions, and noting that $\tr(\Sigma) = d$ when $\Sigma = I_d$,  completes the proof.
\end{proof}

\subsection{Average Case Analysis for Lipschitz-Continuous Loss Function}

\label{app:logistic}

We now consider the fixed design setting with a Lipschitz-continuous loss
function, and discuss how the average risk of training on $\tX$ can be bounded
in terms of the average risk of training on $X$ and the eigenspace overlap score
$\eigover(X, \tX)$.

Let $\ell\colon \RR \times \RR \rightarrow \RR$ be a non-negative loss function which is $L$-Lipschitz in both its first and second arguments, $X \in \RR^{n\times d}$ be a fixed data matrix with SVD $X=USV^T$, $x_i \in \RR^d$ be the $i^{th}$ row of $X$, and $y \in \RR^n$ be a label vector.
We assume that $\argmin_{v'}\ell(v', v) = v$ for all $v\in\RR$.
We will consider a linear model $f(x)=x^T w$ parameterized by some weight vector $w$, such that the loss function for each data point $x_i$ under this model is $\ell(x_i^T w, y_i)$.

Similar to the fixed design linear regression setting, we assume that $y$ is
generated from the true label $\by \in \mathbb{R}^n$ by adding zero-mean
independent noise: $y = \by + \epsilon$ where $\eps = [\eps_1, \dots, \eps_n]^T \in \RR^n$ and the $\eps_i$ are independent
with zero mean and variance $\sigma^2$.
We also assume that the true label $\by$ is randomly distributed in the span
of $U$, of the form $\by = Uz$ for some zero-mean $d$-dimensional random
variable $z$, with covariance matrix $\Sigma$.
Lastly, we let $\tX = \tU \tS \tV^T \in \RR^{n \times k}$ be any matrix, which for our purposes will represent a compressed version of $X$.

We now define the optimal weight vectors $w^*$ and $\tw^*$ trained on $X$ and $\tX$ respectively, along with their corresponding vectors of predictions $u$ and $\tu$:
\begin{eqnarray}
  w^* &=& \argmin_{w} \sum_{i=1}^n \ell(x_i^T w, y_i), \quad u \defeq X w^* \nonumber \\
  \tw^* &=& \argmin_{\tw} \sum_{i=1}^n \ell(\tx_i^T \tw, y_i), \quad \tu \defeq \tX\tw^*. \label{eq:tw_star}
\end{eqnarray}
Note that $u$ and $\tu$ depend on $\epsilon$ and $\by$, which are random.

The risks, or expected errors (expectation taken over $\epsilon$, for a fixed $\by$) for the models trained with $X$ and $\tX$ respectively, are defined as
\begin{eqnarray}
  \cR_{\by}(X) &\defeq& \expect{\epsilon}{\frac{1}{n} \sum_{i=1}^{n} \ell(x_i^T w^*, \by_i)}.\\
  \cR_{\by}(\tX) &\defeq& \expect{\epsilon}{\frac{1}{n} \sum_{i=1}^{n} \ell(\tx_i^T \tw^*, \by_i)}.
\end{eqnarray}

We are now ready to present our Theorem for the case of average-case generalization performance with $L$-Lipschitz continuous loss functions.

\begin{customthm}{2 (Generalized)}
	Let $X \in \RR^{n\times d}$, $\tX\in\RR^{n\times k}$, $\by \in \RR^n$, $z\in\RR^d$, $\Sigma \in \RR^{d\times d}$, $\lambda_{min}\in\RR$, and $c\in\RR$ be defined as in Theorem~\ref{prop:eigen_dist_general}.
	Let $\ell\colon\RR\times\RR\rightarrow\RR$ be a convex non-negative loss function which is $L$-Lipschitz continuous in both arguments and satisfies $\argmin_{v'}\ell(v',v) = v \; \forall v\in\RR$.
	It follows that
	\begin{eqnarray*}
	\expect{\by}{\cR_{\by}(\tX) - \cR_{\by}(X)}
	&\leq& \frac{L}{\sqrt{n}} \,\bigg(\sqrt{\tr(\Sigma) - d \lambda_{\min}(\Sigma) \eigover(X, \tX)} + 2c \sqrt{\tr(\Sigma)}\bigg).
	\end{eqnarray*}
	If the random vector $z$ has identity covariance matrix, then
	\begin{eqnarray*}
	\expect{\by}{\cR_{\by}(\tX) - \cR_{\by}(X)}
	&\leq& \frac{L\sqrt{d}}{\sqrt{n}} \left(\sqrt{1 - \eigover(X, \tX)} + 2c\right). %
	\end{eqnarray*}
	\label{thm:lipschitz_general}
\end{customthm}

\begin{proof}
Let $f(v, y) = \frac{1}{n}\sum_{i=1}^n \ell(v_i,y_i)$ be the average loss on the
training set, given predictions $v$, and let
$f(v, \by) = \frac{1}{n} \sum_{i=1}^{n} \ell(v_i, \by_i)$ be the average test
loss.
Note that $f$ is $L/\sqrt{n}$-Lipschitz in its first argument, because
$\|\nabla_u f(v, y)\|^2 \leq \sum_{i=1}^n (L/n)^2=L^2/n$.
Similarly, $f$ is $L/\sqrt{n}$-Lipschitz in its second argument.
The training loss on $X$ is then $f(u, y)$, and the risk is
$\expect{\epsilon}{f(u, \by)}$.
Similarly, the training loss on $\tX$ is $f(\tu, y)$ and the risk is
$\expect{\epsilon}{f(\tu, \by)}$.

We can bound the difference in the average risk (average over $\by$) when
training on $X$ and $\tX$ in terms of the eigenspace overlap score.
We do this in three steps:
First, we lower bound $\cR_{\by}(X)$.
Second, we upper bound $\cR_{\by}(\tX)$.
Third, we used the bounds from the first two steps to upper bound the expectation over $\by$ of the difference between $\cR_{\by}(X)$ and $\cR_{\by}(\tX)$.
We now go through these steps one at a time:
\begin{itemize}[leftmargin=*]
	\item \textbf{Step 1}: We show that $\cR_{\by}(X) \geq f(\by, \by)$.
	\begin{equation*}
	  \cR_{\by}(X) = \expect{\epsilon}{f(u, \by)} \geq \expect{\epsilon}{f(\by, \by)} = f(\by, \by).
	\end{equation*}
	Here, we used the fact that $\by = \argmin_{v} f(v,\by)$ (which follows from our assumption on the loss function $\ell$).
	
	\item \textbf{Step 2}: We show that for all $\tw\in\RR^k$, $\cR_{\by}(\tX) \leq f(\tX \tw, \by) + 2L\sigma$.
	\begin{eqnarray*}
	  \cR_{\by}(\tX)
	&=& \expect{\epsilon}{f(\tu, \by)} \\
	&\leq& \expect{\epsilon}{f(\tu, y) + \frac{L}{\sqrt{n}} \norm{y - \by}} \quad \text{($f$ is $L/\sqrt{n}$-Lipschitsz)}\\
	&=& \expect{\epsilon}{f(\tX\tw^*, y)} + \expect{\epsilon}{\frac{L}{\sqrt{n}} \norm{\epsilon}} \\
	&\leq& \expect{\epsilon}{f(\tX\tw, y)} + \expect{\epsilon}{\frac{L}{\sqrt{n}} \norm{\epsilon}}  \quad \text{(by Equation~\eqref{eq:tw_star})}\\
	&\leq& \expect{\epsilon}{f(\tX \tw, \by) + \frac{L}{\sqrt{n}} \norm{\by - y}} + \expect{\epsilon}{\frac{L}{\sqrt{n}} \norm{\epsilon}} \quad \text{($f$ is $L/\sqrt{n}$-Lipschitsz)}\\\\
	&=& f(\tX \tw, \by) + \frac{2L}{\sqrt{n}} \expect{\epsilon}{\norm{\eps}}  \\
	&\leq& f(\tX \tw, \by) + 2L\sigma \quad (\text{by } \expect{\epsilon}{\norm{\epsilon}}^2 \leq \expect{\epsilon}{\norm{\epsilon}^2} = n\sigma^2).
	\end{eqnarray*}

	\item \textbf{Step 3}: We bound the expected difference, over the randomness in the label vector $\by$, between $\cR_{\by}(\tX)$ and $\cR_{\by}(X)$, leveraging the results from steps 1 and 2 above.	
	\begin{eqnarray*}
		\expect{\by}{\cR_{\by}(\tX)-\cR_{\by}(X)} &\leq&  \expect{\by}{f(\tX \tw, \by) + 2L\sigma - f(\by, \by)} \quad \text{(by steps 1 and 2)}\\
		&\leq&  \expect{\by}{\frac{L}{\sqrt{n}}\|\tX \tw - \by\| + 2L\sigma} \quad \text{($f$ is $L/\sqrt{n}$-Lipschitsz)}.\\
	\end{eqnarray*}
	To get the tightest bound, we can minimize $\|\tX \tw - \by\|$ over $\tw \in \mathbb{R}^k$.
	But this is exactly the least squares problem, with solution
	$\tw = (\tX^T \tX)^{-1} \tX^T \by = \tV \tS^{-1} \tU^T \by$, and minimum
	value
	$\sqrt{\norm{\by}^2 - \norm{\tU^T \by}^2}$
	(by proof of Proposition~\ref{prop:risk_linear_regression}).
	We can substitute this bound into the above inequalities and continue:
	\begin{eqnarray*}
	\expect{\by}{\cR_{\by}(\tX) - \cR_{\by}(X)}
	&\leq& \frac{L}{\sqrt{n}} \expect{\by}{\sqrt{\norm{\by}^2 - \norm{\tU^T \by}^2}} + 2L\sigma \\
	&\leq& \frac{L}{\sqrt{n}} \sqrt{\expect{\by}{\norm{\by}^2 - \norm{\tU^T \by}^2}} + 2L\sigma \quad \text{(by Jensen's inequality)}\\
	&\leq& \frac{L}{\sqrt{n}} \sqrt{\tr(\Sigma) - d \lambda_{\min}(\Sigma) \eigover(X, \tX)} + 2L\sigma, \\
	\end{eqnarray*}
	where this last step follows from $\expect{\by}{\norm{\by}^2} = \expect{z}{z^T U^T U z} = \expect{z}{z^Tz} = \tr(\Sigma)$, and 
	$\expect{\by}{\norm{\tU^T \by}^2} \geq d \lambda_{\min}(\Sigma) \eigover(X, \tX)$, 
	which we show in the proof of Theorem~\ref{prop:eigen_dist_general}.
	Using the assumption $\sigma^2 =\frac{c^2}{n}\tr(\Sigma)$, and thus $\sigma = \frac{c\sqrt{\tr(\Sigma)}}{\sqrt{n}}$, completes the proof.
\end{itemize}
\end{proof}
Note that in the case of logistic regression where we observe the noisy logits,\footnote{For
	logistic regression, we can write
	$\ell(z',z) \defeq -\left(\sigma(z)\log\big(\sigma(z')\big) + (1-\sigma(z))\log\big(1-\sigma(z')\big)\right)$.
	Here $z$ and $z'$ both represent logits.
	We can recover the standard logistic loss by letting $z \in \{-\infty,\infty\}$.
	Or, you can think of $\sigma(z)$ as $p(y=1\;|\;x)$, the parameter of the Bernoulli
	generating the label for a datapoint $x$.
}
the loss is 1-Lipschitz in the first argument.
If we assume that the weight vector $w$ has bounded norm (say, because of L2
regularization), and that the data matrix $X$ is bounded, then the loss function
is also Lipschitz in the second argument.
We can think of $z_i$ as being the optimal logits such that $P(y_i=1) = \sigma(z_i)$
(one can think of these $z_i = x_i^T w$ as the parameters of the generative
model which generated the data).
Just like in the linear regression case, we see that the overlap
$\eigover(X, \tX) = \frac{1}{d}\|\tU^T U\|_F^2$ gives an upper bound on the maximum possible
expected difference in the loss functions when training on $X$ vs.\ $\tX$.

\subsection{Robustness of the Eigenspace Overlap Score to Perturbations}
\label{app:robustness_eigenspace_overlap}
For a measure of compression quality to correlate strongly with downstream performance, a necessary condition is for it to be robust to embedding perturbations which are unlikely to significantly affect generalization performance.
Here, we give an example of an embedding perturbation which has minimal effect on the eigenspace overlap score and on average-case generalization performance, while having a much larger impact on the other measures of compression quality.
We consider the following simple perturbation:
if $X = \sum_{i=1}^{d} \sigma_i U_i V_i^T$ is the singular value decomposition of $X$, we consider setting it's largest singular value to 0, resulting in the perturbed matrix $\tX \defeq \sum_{i=2}^{d} \sigma_i U_i V_i^T$.
Assuming a label vector $y = Uz$, $\tX$ would have generalization error of $\|y\|^2 - \|\tU^T y\|^2 = z_1^2$.
If we assume that $z_1^2 \ll \sum_{i=2}^d z_i^2$ (as would be expected in our average-case analysis), then $\tX$ would perform similarly to $X$.

In Table~\ref{table:perturb}, we show the impact of the above perturbation on the various measures of compression quality we have discussed.
At a high-level, we observe that this perturbation can have a dramatic effect of the previously proposed measures, while having minimal effect on the eigenspace overlap score.
For example, the eigenspace overlap score after this perturbation is equal to $\eigover(X,\tX) = \frac{d-1}{d}$, relative to the maximum possible overlap of 1.
In contrast, this perturbation results in a $\Delta_1$ value very close to 1 if $\lambda \ll \sigma_1$ (note that 1 is the maximum possible value for $\Delta_1$, and that \citet{lprff18} show generalization bounds scale with $\frac{1}{1-\Delta_1}$).
This makes sense, because $\Delta_1$ can be used to attain a worst-case generalization bound for the perturbed embeddings, and there exist cases where setting the largest singular value to 0 can significantly harm the generalization performance of the embeddings (\eg, if $z_1^2 \approx \|z\|^2$).
Thus, while the $\Delta_1$ measure is important for understanding the worst-case performance of the compressed embeddings, it is generally an overly pessimistic measure.
The eigenspace overlap score, on the other hand, is generally unable to provide worst-case guarantees, but aligns nicely with the expected performance of the compressed embeddings in the average-case setting.

\begin{table}
	\caption{
		\textbf{Effect of perturbation on measures of compression quality.} 
		In this table, we consider the effect of perturbing an embedding matrix $X$ by setting its largest singular value to 0 on the various measures of compression quality discussed above.
		As we can see, setting the largest singular value of $X$ to 0 can have a disproportionately large effect on the relative reconstruction error $\left(\frac{\|X-\tX\|_F}{\|X\|_F}\right)$, relative PIP loss $\left(\frac{\|XX^T-\tX\tX^T\|_F}{\|XX^T\|_F}\right)$, $\Delta_1$, $\Delta$, and $\Delta_{\max}$ measures (values can approach 1), while having a modest effect on the eigenspace overlap score (value of $1/d$ always).
	}
	\centering
	\begin{tabular}{ c | c }
		Compression quality measure & Measure after perturbation \\ \midrule
		Rel. reconstruction error &  $\sigma_1/\sqrt{\sum_{i=1}^d \sigma_i^2}$ \\
		Rel. PIP loss & $\sigma_1^2 / \sqrt{\sum_{i=1}^d \sigma_i^4}$ \\
		$\Delta_1$ & $\sigma_1^2 / \Big(\sigma_1^2 + \lambda\Big)$ \\
		$\Delta_2$ & 0 \\
		$\Delta$ & $\sigma_1^2 / \Big(\sigma_1^2 + \lambda\Big)$ \\
		$\Delta_{\max}$ & $\Big(\sigma_1^2 + \lambda\Big)/\lambda$ \\
		$1-\eigover(X,\tX)$ & $\frac{1}{d}$\\ %
	\end{tabular}
	\label{table:perturb}
\end{table}

\subsection{Relating the Eigenspace Overlap Score to Embedding Reconstruction Error}
\label{app:overlap_reconstruction}
We now define a variant of embedding reconstruction error which we show is closely related to the eigenspace overlap score.
As we mention in Section~\ref{subsec:existing_metrics}, the definition of embedding reconstruction error $\|X-\tX\|_F$ is only applicable when $X\in\RR^{n\times d}$ and $\tX \in\RR^{n\times k}$ have the same dimensions ($d=k$).
To get around this limitation, we define the \textit{projected embedding reconstruction error} as $\min_{P \in \RR^{k \times d}} \|\tX P - X \|_F^2$.
It is easy to show that the matrix $P$ minimizing the above expression is $P^{\star} \defeq (\tX^T \tX)^{-1}\tX^TX$.
Letting $X=USV^T$ and $\tX=\tU\tS\tV^T$ be the singular value decompositions of $X$ and $\tX$, 
we can simplify the expression for the projected embedding reconstruction error as follows:
\begin{eqnarray*}
\min_{P \in \RR^{k \times d}} \|\tX P - X\|_F^2 &=& \|\tX (\tX^T \tX)^{-1}\tX^TX - X \|_F^2\\
&=& \|\tU\tS\tV^T (\tV \tS^{-2} \tV^T)\tV S \tU^T X - X\|_F^2 \\
&=& \|\tU\tU^T X - X\|_F^2 \\
&=& \trace{\left(\tU\tU^T X - X\right)^T\left(\tU\tU^T X - X\right)} \\
&=& \|X\|_F^2 - \|\tU^T X\|_F^2\\
&=& \|X\|_F^2 - \|\tU^T USV^T\|_F^2\\
&=& \|X\|_F^2 - \|\tU^T US\|_F^2
\end{eqnarray*}

Thus, the projected embedding reconstruction error is equal to a term ($\|X\|_F^2$) which is constant in $\tX$, minus a term $\|\tU^T US\|_F^2 = \sum_{i=1}^d \sigma_i^2 \|\tU^T U_i\|_2^2$.
Note that this second term is simply a version of the eigenspace overlap score $\frac{1}{\max(d,k)}\|\tU^T U\|_F^2 = \frac{1}{\max(d,k)}\sum_{i=1}^d \|\tU^T U_i\|_2^2$ which weights the projections of the different singular vectors $U_i$ of $X$ onto $\tU$ according to the singular values of $X$.
In Section~\ref{app:proof_prop1} we show that in the case where the random label vector $\by = Uz$ where $z$ is a zero mean random variable in $\RR^d$ with covariance $\Sigma$, the expected error depends on a term $\|\tU^T U \Sigma^{1/2}\|_F^2$.
Thus, the projected embedding reconstruction error is directly related to the expected error when $z$ is sampled with covariance matrix $\Sigma = S^2$.

In Table~\ref{tab:sp_rank_proj} we show that the projected embedding reconstruction error, like the eigenspace overlap score, attains high Spearman correlation with downstream performance.

\begin{table*}
	\caption{
		\textbf{Spearman correlation between projected embedding reconstruction error and downstream performance.}
		In this table, we show that the Spearman correlation $\rho$ between the projected embedding reconstruction error and downstream performance is relatively similar to the Spearman correlation between the eigenspace overlap score and downstream performance.
		We show results on the SQuAD question answering task, and the SST-1 sentiment analysis task, for both GloVe and fastText embeddings.
		In each table entry, we present the correlation absolute values as ``GloVe $|\rho|$ $|$ fastText $|\rho|$.''
	}
	\small
	\centering
	\begin{tabular}{ c | c | c | c | c }
		\toprule
		& \multicolumn{2}{|c|}{SQuAD} & \multicolumn{2}{|c}{SST-1} \\
		\midrule
		Projected embed. reconst. error &  $0.82$&$0.86$  &  $0.75$&$0.64$  \\
		$1 - \mathcal{E}$ & $0.81$&$0.91$  &  $0.75$&$0.73$   \\  
		\bottomrule
	\end{tabular}
	\label{tab:sp_rank_proj}
\end{table*}

\section{The Eigenspace Overlap Score of Uniformly Quantized Embeddings}
\label{app:uniform}
This Appendix focuses on the eigenspace overlap score of uniformly quantized embeddings.
In Appendix~\ref{app:thm1_proof} we prove our result on the expected eigenspace overlap score of uniformly quantized embeddings (Theorem~\ref{thm1}).
In Appendix~\ref{app:uniform_micros} we validate that the empirical scaling of the eigenspace overlap score with respect to the vocabulary size, embedding dimension, compression rate, and smallest singular value of the embedding matrix, matches the scaling predicted by the theory.
Lastly, in Appendix~\ref{app:clip}, we demonstrate that choosing the clipping value for uniform quantization is crucial for attaining a high eigenspace overlap score, and that choosing the clipping threshold with lowest reconstruction error is very similar to choosing the clipping threshold with highest eigenspace overlap score.
Additionally, we demonstrate that the optimal clipping thresholds for deterministic and stochastic quantization are very similar, and that deterministic quantization attains slightly higher eigenspace overlap scores than stochastic quantization.

\subsection{Theorem~\ref{thm1} Proof}
\label{app:thm1_proof}
We now prove Theorem~\ref{thm1}, which bounds the expected eigenspace overlap scores for uniformaly quantized embeddings.
The core of our proof is an application of the Davis-Kahan $\sin(\Theta)$ theorem \citep{sintheta70}.
We now review this classic theorem, and then prove our result.

\begin{theorem}{(Davis-Kahan $\sin(\Theta)$ Theorem (adapted))}
Let $K=U_0 S_0 U_0^T + U_1 S_1 U_1^T$ be the eigendecomposition of $K$ such that $U_0 \in \RR^{n \times d}$ are the first $d$ eigenvectors of $K=USU^T$, $S_0$ the first $d$ eigenvalues, $U_1,S_1$ the rest.
Similarly, let $\tK = V_0 R_0 V_0^T + V_1 R_1 V_1^T$ be the equivalent eigendecomposition for $\tK = K + H$.
If the eigenvalues of $S_0$ are contained in the interval $(a_0,a_1)$, and the eigenvalues of $R_1$ are excluded from the interval $(a_0 - \delta,a_1 +\delta)$ for some $\delta>0$, then
\begin{eqnarray}
\|V_1^T U_0\| \leq \frac{\|V_1^T H U_0\|}{\delta}
\end{eqnarray}
for any unitarily invariant norm $\|\cdot \|$.
\end{theorem}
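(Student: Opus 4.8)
The plan is to reduce the statement to a Sylvester equation relating $M \defeq V_1^T U_0$ to the perturbation $H$, and then invert the associated operator using the assumed spectral gap $\delta$. First I would exploit the two eigendecompositions directly. Since the columns of $U_0$ are eigenvectors of $K$ with eigenvalue block $S_0$, we have $K U_0 = U_0 S_0$; and since the columns of $V_1$ are eigenvectors of $\tK$ with eigenvalue block $R_1$, the symmetry of $\tK$ gives $V_1^T \tK = (\tK V_1)^T = R_1 V_1^T$. Writing $H = \tK - K$ and multiplying on the left by $V_1^T$ and on the right by $U_0$ then yields the key identity
\begin{equation*}
V_1^T H U_0 = V_1^T \tK U_0 - V_1^T K U_0 = R_1\,(V_1^T U_0) - (V_1^T U_0)\,S_0 .
\end{equation*}
Setting $M \defeq V_1^T U_0$ and $G \defeq V_1^T H U_0$, this is exactly the Sylvester equation $R_1 M - M S_0 = G$, so the theorem amounts to bounding $\|M\|$ by $\|G\|/\delta$.

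Next I would invert this Sylvester operator using the spectral separation. Because $S_0$ and $R_1$ are the diagonal eigenvalue blocks, in their respective eigenbases the equation is diagonal entrywise: writing $\lambda_j$ for the diagonal entries of $S_0$ and $\mu_i$ for those of $R_1$, it reads $(\mu_i - \lambda_j)\,M_{ij} = G_{ij}$. The hypotheses say every $\lambda_j$ lies in $(a_0, a_1)$ while every $\mu_i$ lies outside $(a_0 - \delta, a_1 + \delta)$; a short case check ($\mu_i \le a_0 - \delta$ versus $\mu_i \ge a_1 + \delta$) gives $|\mu_i - \lambda_j| \geq \delta$ for all $i,j$. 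Hence $M_{ij} = G_{ij}/(\mu_i - \lambda_j)$ is well defined with $|M_{ij}| \leq |G_{ij}|/\delta$, i.e. $M = \Phi \odot G$ is the Hadamard product of $G$ with the separation kernel $\Phi_{ij} \defeq 1/(\mu_i - \lambda_j)$, whose entries are uniformly bounded by $1/\delta$.

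For the Frobenius norm the conclusion is then immediate: $\|M\|_F^2 = \sum_{ij}|M_{ij}|^2 \leq \delta^{-2}\sum_{ij}|G_{ij}|^2 = \delta^{-2}\|G\|_F^2$, which gives $\|V_1^T U_0\|_F \leq \|V_1^T H U_0\|_F/\delta$. This is the case the paper actually invokes when bounding the eigenspace overlap score of quantized embeddings, so one could stop here for the application. To obtain the statement for an \emph{arbitrary} unitarily invariant norm, the entrywise bound no longer suffices, since Hadamard multiplication does not in general respect such norms. The main obstacle is therefore the final step: showing that the inverse Sylvester operator $\mathcal{T}^{-1}\colon G \mapsto M$ has norm at most $1/\delta$ with respect to every unitarily invariant norm. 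I would establish this via the standard Rosenblum integral representation of the solution, writing $M$ as a line integral of resolvents of $R_1$ and $S_0$ whose spectra, separated by $\delta$, make the integrand decay at a rate controlled by $\delta$; this exhibits $\mathcal{T}^{-1}$ as a contraction scaled by $1/\delta$ simultaneously for all unitarily invariant norms, completing the proof.
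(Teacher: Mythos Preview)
The paper does not prove this statement; it is stated as the classical Davis--Kahan $\sin(\Theta)$ theorem with a citation to the 1970 original, and is then invoked as a black box in the proof of Theorem~\ref{thm1}. So there is no ``paper's own proof'' to compare against.

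That said, your argument is the standard one and is correct where it is complete. The derivation of the Sylvester equation $R_1 M - M S_0 = G$ from the two eigendecompositions is exactly right, and the entrywise bound $|M_{ij}|\le |G_{ij}|/\delta$ follows cleanly from the spectral separation hypothesis. This gives the Frobenius-norm inequality $\|V_1^T U_0\|_F \le \|V_1^T H U_0\|_F/\delta$ in full, which---as you note---is the only case the paper actually uses (see the proof of Theorem~\ref{thm1}, where the very first step applies Davis--Kahan with the Frobenius norm).

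The one place your proposal is genuinely only a sketch is the extension to \emph{all} unitarily invariant norms. You correctly observe that the Hadamard bound $|\Phi_{ij}|\le 1/\delta$ is not enough, since Schur multipliers with bounded entries do not in general contract unitarily invariant norms. Pointing to the Rosenblum contour-integral representation of the Sylvester solution is a legitimate route, but extracting the constant $1/\delta$ from that integral is not automatic: one has to choose the contour carefully and bound the resolvent norms along it, and a naive estimate typically loses a constant factor. The cleanest way to get the sharp $1/\delta$ in this Hermitian, spectrally separated setting is to recognize that the Cauchy-type kernel $\Phi_{ij}=1/(\mu_i-\lambda_j)$, with $\{\lambda_j\}\subset(a_0,a_1)$ and $\{\mu_i\}$ outside $(a_0-\delta,a_1+\delta)$, defines a Schur multiplier of norm at most $1/\delta$ on all unitarily invariant norms (this is the content of the original Davis--Kahan argument, or see Bhatia's \emph{Matrix Analysis}, Theorem~VII.3.1). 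If you want a self-contained proof, that is the lemma to isolate; as written, your final paragraph gestures at the idea but does not close the gap.
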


To prove Theorem~\ref{thm1}, we will apply the Davis-Kahan $\sin(\Theta)$ theorem to the setting where $K$ is the Gram matrix of an uncompressed matrix $X$, and $\tK$ is the gram matrix of a $b$-bit stochastic uniform quantization $\tX$ of $X$ (See Definition~\ref{def:uniquant}).
We now present and prove Theorem~\ref{thm1}.

\begin{customthm}{3}
	Let $X \in \RR^{n\times d}$ be a bounded embedding matrix with $X_{ij} \in [-\frac{1}{\sqrt{d}},\frac{1}{\sqrt{d}}]$ and smallest singular value $\sigma_{\min} = a \sqrt{n/d}$, for $a \in (0,1]$.\footnote{The
		maximum possible value of $\sigma_{\min}$ is $\sqrt{n/d}$, which occurs when $\|X\|_F^2 = n$ and $\sigma_{\min} = \sigma_{\max}$.
	}
	Let $\tX$ be a $b$-bit stochastic uniform quantization of $X$.
	Then for $n\geq \max(33,d)$, we can lower bound the expected eigenspace overlap score of $\tX$, over the randomness of the stochastic quantization, as follows:%
	\begin{eqnarray*}
		\expect{}{1-\eigover(X,\tX)} &\leq& \frac{20}{(2^b-1)^2 a^4}.
	\end{eqnarray*}
	\label{thm1_app}
\end{customthm}

\begin{proof}

We will denote the Gram matrices of $X$ and $\tX$ by $K=XX^T=USU^T$ and $\tK = \tX\tX^T = (X+C)(X+C)^T = VRV^T$.
Here, $C$ is a stochastic matrix satisfying $\expect{}{C_{ij}} = 0$ and $\var{}{C_{ij}} \leq \delta_b^2/d \;\;\forall i,j$, for $\delta_b^2 \defeq \frac{1}{(2^b-1)^2}$ (see Appendix~\ref{app:unif}).
In our application of the Davis-Kahan $\sin(\Theta)$ theorem, we will use $a_0 = \sigma_{\min}(K)$, $a_1 = \infty$, $\delta=\sigma_{\min}(K)$.
Note also the $H = \tK-K = (X+C)(X+C)^T - XX^T = XC^T + CX^T + CC^T$.
We will let $a\in [0,1]$ be the scalar such that $\sigma_{\min}(X) = a \sqrt{\frac{n}{d}}$ (equivalently, $\sigma_{\min}(K) = a^2 \frac{n}{d}$).

Using the Davis-Kahan $\sin(\Theta)$ theorem, along with Lemma~\ref{lem:expect_h} (below), we can show the following:

\begin{eqnarray*}
	\|V_1^T U_0\|_F
	&\leq& \frac{\|V_1^T H U_0\|_F}{\sigma_{\min}(K)}\\
	&=& \frac{\|V_1^T (XC^T + CX^T + CC^T) U_0\|_F}{\sigma_{\min}(K)}\\
	&\leq& \frac{\|V_1^T\|_2 \|XC^T + CX^T + CC^T\|_F \|U_0\|_2}{\sigma_{\min}(K)} \quad \text{(using $\|AB\|_F \leq \|A\|_2 \|B\|_F$ twice.)}\\
	&\leq& \frac{ \|XC^T + CX^T + CC^T\|_F}{\sigma_{\min}(K)} \quad \text{(using $\|V_1^T\|_2 = \|U_0\|_2 = 1$.)}\\
	\Longrightarrow \frac{1}{d}	\|V_1^T U_0\|_F^2 &\leq& \frac{ \|XC^T + CX^T + CC^T\|_F^2}{d \cdot \sigma_{\min}(K)^2} \\
	\Longleftrightarrow 1-\frac{1}{d}\|V_0^T U_0\|_F^2 &\leq&  \frac{ \|XC^T + CX^T + CC^T\|_F^2}{d \cdot \sigma_{\min}(K)^2} \quad \text{(using $\|V_0^T U_0\|_F^2 + \|V_1^T U_0\|_F^2 = \|U_0\|_F^2 = d$)} \\
	\Longleftrightarrow 1-\eigover(X,\tX) &\leq&  \frac{ \|XC^T + CX^T + CC^T\|_F^2}{d \cdot \sigma_{\min}(K)^2}
\end{eqnarray*}
\begin{eqnarray*}
	\Longrightarrow \expect{}{1-\eigover(X,\tX)} &\leq&  \expect{}{\frac{ \|XC^T + CX^T + CC^T\|_F^2}{d \cdot \sigma_{\min}(K)^2}} \\
	&=&  \frac{ \expect{}{\|XC^T + CX^T + CC^T\|_F^2}}{d \cdot \sigma_{\min}(K)^2}\\
	&\leq& \frac{\frac{20n^2\delta_b^2}{d}}{d \cdot \sigma_{\min}(K)^2} \quad \text{(by Lemma~\ref{lem:expect_h}).} \\
	&=& \frac{20n^2\delta_b^2}{d^2 a^4 (n^2/d^2)} \\
	&=& \frac{20 \delta_b^2}{a^4}
\end{eqnarray*}
\end{proof}

We now present and prove Lemma~\ref{lem:expect_h}.

\begin{lemma}
Let $X \in \RR^{n\times d}$ be a bounded embedding matrix with $X_{ij} \in [-\frac{1}{\sqrt{d}},\frac{1}{\sqrt{d}}]$.
Let $\tX=X+C$ be a $b$-bit stochastic uniform quantization of $X$.
Then for $n\geq \max(33,d)$, it follows that
\begin{eqnarray}
\expect{}{\|XC^T + CX^T + CC^T\|_F^2} &\leq& \frac{20n^2\delta_b^2}{d}.
\end{eqnarray}
\label{lem:expect_h}
\end{lemma}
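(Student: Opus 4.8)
The plan is to expand $\norm{XC^T+CX^T+CC^T}_F^2$ into squared-norm and Frobenius-inner-product terms and bound each in expectation, exploiting that the quantization-error entries $C_{ij}\defeq\tQ_{b,r}(X_{ij})-X_{ij}$ are mutually independent, mean zero, with $\mathrm{Var}(C_{ij})\le\delta_b^2/d$ and almost surely $|C_{ij}|\le 2\delta_b/\sqrt{d}$ (from Appendix~\ref{app:unif} with clipping radius $r=1/\sqrt d$, since $\delta_b^2=\frac1{(2^b-1)^2}$). Writing $A\defeq XC^T$, $B\defeq CX^T$, $D\defeq CC^T$ and using $\langle\cdot,\cdot\rangle$ for the Frobenius inner product, we have $\norm{A+B+D}_F^2=\norm{A}_F^2+\norm{B}_F^2+\norm{D}_F^2+2\langle A,B\rangle+2\langle A,D\rangle+2\langle B,D\rangle$. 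Both $\langle A,D\rangle=\mathrm{tr}(CX^T CC^T)$ and $\langle B,D\rangle=\mathrm{tr}(XC^T CC^T)$ expand, in coordinates, into sums of products of exactly three entries of $C$; since those entries are independent with mean zero, every such product has zero expectation, so $\mathbb{E}\langle A,D\rangle=\mathbb{E}\langle B,D\rangle=0$. It therefore suffices to bound $\mathbb{E}\norm{A}_F^2$ (which equals $\mathbb{E}\norm{B}_F^2$ as $B=A^T$), $\mathbb{E}\langle A,B\rangle$, and $\mathbb{E}\norm{D}_F^2$.

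For the first two, expand $\norm{XC^T}_F^2=\sum_{p,q}\big(\sum_j X_{pj}C_{qj}\big)^2$; taking expectations kills the $j\ne j'$ cross terms, leaving $\mathbb{E}\norm{XC^T}_F^2=\sum_{p,q,j}X_{pj}^2\,\mathrm{Var}(C_{qj})\le\frac{\delta_b^2}{d}\cdot n\cdot\norm{X}_F^2\le\frac{n^2\delta_b^2}{d}$, where we used $\norm{X}_F^2\le nd\cdot(1/\sqrt d)^2=n$. The same bookkeeping gives $\mathbb{E}\langle XC^T,CX^T\rangle=\sum_{p,j}X_{pj}^2\,\mathrm{Var}(C_{pj})\le\frac{\delta_b^2}{d}\norm{X}_F^2\le\frac{n\delta_b^2}{d}$ (alternatively, $2\langle A,B\rangle\le\norm{A}_F^2+\norm{B}_F^2$ avoids this computation entirely).

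The main work is $\mathbb{E}\norm{CC^T}_F^2=\sum_{p,q}\sum_{j,j'}\mathbb{E}\big[C_{pj}C_{qj}C_{pj'}C_{qj'}\big]$. A short case analysis on the index tuples — a product of four independent mean-zero entries has nonzero expectation only if each entry appears an even number of times — shows the only surviving contributions are: (i) $p=q,\ j=j'$, giving $\sum_{p,j}\mathbb{E}[C_{pj}^4]$; (ii) $p=q,\ j\ne j'$, giving $\sum_p\sum_{j\ne j'}\mathrm{Var}(C_{pj})\mathrm{Var}(C_{pj'})$; and (iii) $p\ne q,\ j=j'$, giving $\sum_{p\ne q}\sum_j\mathrm{Var}(C_{pj})\mathrm{Var}(C_{qj})$. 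Using $\mathbb{E}[C_{pj}^4]\le\frac{4\delta_b^2}{d}\mathrm{Var}(C_{pj})\le\frac{4\delta_b^4}{d^2}$ (from $|C_{pj}|\le 2\delta_b/\sqrt d$) and $\mathrm{Var}(C_{pj})\le\delta_b^2/d$ throughout, these three sums are at most $\frac{4n\delta_b^4}{d}$, $n\delta_b^4$, and $\frac{n^2\delta_b^4}{d}$, so $\mathbb{E}\norm{CC^T}_F^2\le\delta_b^4\big(\frac{n^2}{d}+n+\frac{4n}{d}\big)\le\delta_b^2\big(\frac{n^2}{d}+n+\frac{4n}{d}\big)$ using $\delta_b\le1$. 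Summing all pieces gives $\mathbb{E}\norm{XC^T+CX^T+CC^T}_F^2\le\delta_b^2\big(\frac{3n^2}{d}+\frac{6n}{d}+n\big)$, and the claimed bound $\frac{20n^2\delta_b^2}{d}$ follows since, after multiplying by $d/n$, it reduces to $d+6\le 17n$, which holds for $n\ge\max(33,d)$. The one delicate step is the fourth-moment analysis of $\mathbb{E}\norm{CC^T}_F^2$: it is essential to notice that this term scales like $n^2/d$ (not $n/d$), so both the $O(\delta_b^4/d^2)$ bound on $\mathbb{E}[C_{pj}^4]$ and the crude estimate $\delta_b^2\le1$ are needed to keep it below $\frac{20n^2\delta_b^2}{d}$; everything else is routine second-moment bookkeeping.
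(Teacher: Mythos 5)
Your block decomposition and second/fourth-moment bookkeeping reproduce essentially the same computation as the paper's proof (which organizes the expansion entry-by-entry, splitting $H=XC^T+CX^T+CC^T$ into its diagonal and off-diagonal entries, rather than block-by-block as you do), but one step is genuinely false: the claim that $\E\langle XC^T, CC^T\rangle = \E\langle CX^T, CC^T\rangle = 0$. Expanding $\langle XC^T, CC^T\rangle = \sum_{p,q,j,j'} X_{pj}C_{qj}C_{pj'}C_{qj'}$, the terms with $p=q$ and $j=j'$ contribute $\sum_{p,j}X_{pj}\,\E[C_{pj}^3]$, and there the "three entries of $C$" are the \emph{same} random variable, not three independent mean-zero factors. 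The stochastic quantization error $C_{pj}$ is a two-point random variable supported on $\{\olx-x,\;\ulx-x\}$ with probabilities chosen to make it mean zero; unless $x$ sits at the midpoint of its subinterval this distribution is asymmetric, so $\E[C_{pj}^3]\neq 0$ in general. The paper's proof confronts exactly this term in its diagonal case ($i=j$), where the contribution $4\sum_k x_{ik}\E[c_{ik}^3]$ is bounded (by roughly $32\delta_b^3/d$ per row) rather than discarded.

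The error is quantitatively harmless, which is why your final constant still comes out right after the fix: $|\E[C_{pj}^3]| \le \E[|C_{pj}|\cdot C_{pj}^2] \le (2\delta_b/\sqrt d)\,\mathrm{Var}(C_{pj}) \le 2\delta_b^3/d^{3/2}$, so the omitted contribution $2\E\langle XC^T,CC^T\rangle + 2\E\langle CX^T,CC^T\rangle$ is at most $4\cdot nd\cdot d^{-1/2}\cdot 2\delta_b^3 d^{-3/2} = 8n\delta_b^3/d \le 8n\delta_b^2/d$. This changes your tally from $\delta_b^2(3n^2/d + 6n/d + n)$ to $\delta_b^2(3n^2/d + 14n/d + n)$, and the comparison against $20n^2\delta_b^2/d$ reduces to $d+14\le 17n$, which still holds under $n\ge\max(33,d)$. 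Everything else — the variance bound giving $\E\|XC^T\|_F^2\le n^2\delta_b^2/d$, the identification of the surviving index patterns in $\E\|CC^T\|_F^2$, and the use of $|C_{pj}|\le 2\delta_b/\sqrt d$ to control the fourth moment — is correct and matches the paper's estimates. But as written, the assertion that "every such product has zero expectation" fails on the coincident-index terms and must be replaced by the third-moment bound above.
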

\begin{proof}
We will let $H \defeq XC^T + CX^T + CC^T$.
To bound $\expect{}{\|H\|_F^2} = \sum_{i,j=1}^n \expect{}{H_{ij}^2}$, we will consider two cases: $H_{ij}$ for $i\neq j$ and $H_{ij}$ for $i=j$.
We will let $x_i, c_i \in \RR^d$ denote the $i^{th}$ rows of $X$ and $C$ respectively.
\begin{enumerate}
\item \textbf{Case 1: $i\neq j$}
\begin{eqnarray*}
	\expect{}{H_{ij}^2} &=& \expect{}{(x_i^T c_j + c_i^T x_j + c_i^T c_j)^2} \\
	&=& \expect{}{(x_i^T c_j)^2 + (c_i^T x_j)^2 + (c_i^T c_j)^2} \\
	&=& \expect{}{\Big(\sum_{k=1}^d x_{ik} c_{jk}\Big)^2} + \expect{}{\Big(\sum_{k=1}^d c_{ik} x_{jk}\Big)^2} + \expect{}{\Big(\sum_{k=1}^d c_{ik} c_{jk}\Big)^2} \\
	&=& \expect{}{\sum_{k=1}^d x_{ik}^2 c_{jk}^2} + \expect{}{\sum_{k=1}^d c_{ik}^2 x_{jk}^2} + \expect{}{\sum_{k=1}^d c_{ik}^2 c_{jk}^2}\\
	&=& \sum_{k=1}^d x_{ik}^2 \expect{}{c_{jk}^2} + \sum_{k=1}^d \expect{}{c_{ik}^2} x_{jk}^2 + \sum_{k=1}^d \expect{}{c_{ik}^2} \expect{}{c_{jk}^2}\\
	&\leq& \frac{\delta_b^2}{d}\cdot \sum_{k=1}^d x_{ik}^2 + \frac{\delta_b^2}{d}\cdot \sum_{k=1}^d x_{jk}^2 + \sum_{k=1}^d \Big(\frac{\delta_b^2}{d}\Big)^2 \\
	&\leq& \frac{\delta_b^2}{d}\cdot \|x_i\|^2 + \frac{\delta_b^2}{d}\cdot \|x_j\|^2 + \sum_{k=1}^d \Big(\frac{\delta_b^2}{d}\Big)^2 \\
	&\leq& \frac{2\delta_b^2 + \delta_b^4}{d} \quad \text{(using $\|x_i\|^2 \leq 1$)} \\
	&\leq& \frac{3\delta_b^2}{d} \quad \text{(using $\delta_b \leq 1$).}
\end{eqnarray*}
\item \textbf{Case 2: $i = j$}
\begin{eqnarray*}
	\expect{}{H_{ii}^2} &=& \expect{}{(x_i^T c_i + c_i^T x_i + c_i^T c_i)^2} \\
	&=& \expect{}{\Big(2 \sum_{k=1}^d x_{ik} c_{ik}  + \sum_{l=1}^d c_{il}^2 \Big)^2} \\
	&=& \expect{}{4 \Big(\sum_{k=1}^d x_{ik} c_{ik}\Big)^2  + 4 \Big(\sum_{k=1}^d x_{ik} c_{ik}\Big)\cdot \Big(\sum_{l=1}^d c_{il}^2 \Big) + \Big(\sum_{l=1}^d c_{il}^2 \Big)^2} \\
	&=& \expect{}{4 \sum_{k=1}^d x_{ik}^2 c_{ik}^2  + 4 \sum_{k,l=1}^d x_{ik} c_{ik} c_{il}^2  + \sum_{k,l=1}^d c_{il}^2 c_{ik}^2 } \\
	&=& 4 \sum_{k=1}^d x_{ik}^2 \expect{}{c_{ik}^2}  + 4 \sum_{k=1}^d x_{ik} \expect{}{c_{ik}^3}  + \sum_{k,l=1}^d \expect{}{c_{il}^2 c_{ik}^2} \\
	&\leq& 4\cdot \frac{\delta_b^2}{d}\cdot \sum_{k=1}^d x_{ik}^2  + 4 \sum_{k=1}^d \frac{1}{\sqrt{d}} \bigg(\frac{2}{\sqrt{d}(2^b-1)}\bigg)^3  + \sum_{k,l=1}^d \bigg(\frac{2}{\sqrt{d}(2^b-1)}\bigg)^4 \\
	&=& 4\cdot \frac{\delta_b^2}{d}\cdot \|x_i\|^2  + 4 d\cdot \frac{8}{d^2(2^b-1)^3}  + d^2 \cdot \frac{16}{d^2(2^b-1)^4} \\
	&\leq&  \frac{4\delta_b^2}{d}  + \frac{32}{d(2^b-1)^3}  + \frac{16}{(2^b-1)^4} \\
	&=& \frac{4\delta_b^2 + 32\delta_b^3}{d} + 16 \delta_b^4 \\
	&\leq& \frac{36\delta_b^2}{d} + 16 \delta_b^4  \quad \text{(using $\delta_b \leq 1$).}
\end{eqnarray*}
\end{enumerate}
Now we can combine the above results:
\begin{eqnarray*}
	\sum_{i,j=1}^n \expect{}{H_{ij}^2} &\leq& \sum_{i \neq j} \Bigg(\frac{3\delta_b^2}{d} \Bigg) 
	+ \sum_{i=1}^n \Bigg(\frac{36\delta_b^2}{d} + 16 \delta_b^4\Bigg) \\
	&=& n(n-1)\Bigg(\frac{3\delta_b^2}{d} \Bigg) 
	+ n\Bigg(\frac{36\delta_b^2}{d} + 16 \delta_b^4\Bigg) \\
	&=& \frac{3n^2\delta_b^2 - 3n \delta_b^2 + 36 n \delta_b^2}{d} + 16n \delta_b^4 \\
	&=& \frac{3n^2\delta_b^2 + 33 n \delta_b^2}{d} + 16n \delta_b^4 \\
	&\leq& \frac{4n^2\delta_b^2}{d} + 16n \delta_b^4  \quad \text{(assuming $n\geq 33$.)}\\
	&\leq& \frac{4n^2\delta_b^2}{d} + \frac{16n^2\delta_b^2}{d}  \quad \text{(assuming $n\geq d$.)}\\
	&=& \frac{20n^2\delta_b^2}{d}
\end{eqnarray*}
\end{proof}

\subsection{Empirical Validation of Theorem~\ref{thm1} Scaling}
\label{app:uniform_micros}
We now validate Theorem~\ref{thm1} empirically by showing the impact of the precision ($b$), the scalar ($a$), the vocabulary size ($n$), and the embedding dimension ($d$) on the eigenspace overlap score $\eigover(X,\tX)$ of uniformly quantized embeddings matrices.
As predicted by the theory, we will show in Figure~\ref{fig:micro_eigoverlap} that $1-\eigover(X,\tX)$ drops as $b$ and $a$ are increased, and is relatively unaffected by changes in $n$ and $d$.

We now describe our experimental protocol for studying the impact of each of these parameters on the eigenspace overlap score:
\begin{itemize}
	\item \textbf{Precision ($b$), Figure~\ref{fig:micro_eigoverlap}(a)}: We randomly generate a $10^4 \times 10$ matrix, with entries drawn uniformly from $[-\frac{1}{\sqrt{10}},\frac{1}{\sqrt{10}}]$. 
	We uniformly quantize this matrix with precisions $b \in \{1,2,4,8,16\}$, and compute the eigenspace overlap score between the quantized matrix and the original matrix.
	As one can see, $1-\eigover(X,\tX)$ drops rapidly as the precision is increased.
	\item \textbf{Scalar ($a$), Figure~\ref{fig:micro_eigoverlap}(b)}: We randomly generate a $10^4 \times 10$ matrix, with entries drawn uniformly from $[-\frac{1}{\sqrt{10}},\frac{1}{\sqrt{10}}]$.
	We then multiply this matrix on the right by diagonal matrices with diagonal entries spaced logarithmically between 1 and $\{1, 0.1,.01,.001,.0001\}$, thus generating matrices with increasingly small values of the scalar $a$.
	We uniformly quantize each of these matrices with precisions $b \in \{1,2,4\}$, and compute the eigenspace overlap score between the quantized matrices and the original matrices.
	As one can see, $1-\eigover(X,\tX)$ drops as the scalar $a$ increases.
	\item \textbf{Vocabulary size ($n$), Figure~\ref{fig:micro_eigoverlap}(c)}: We randomly generate $n \times 10$ matrices for $n \in \{10^2, 3\times 10^2, 10^3, 3\times 10^3,10^4, 3\times 10^4,10^5\}$, with entries drawn uniformly from $[-\frac{1}{\sqrt{10}},\frac{1}{\sqrt{10}}]$.
	We uniformly quantize these matrices with precisions $b \in \{1,2,4\}$, and compute the corresponding eigenspace overlap scores.
	As one can see, the vocabulary size $n$ has minimal impact on the eigenspace overlap score.
	\item \textbf{Embedding dimension ($d$), Figure~\ref{fig:micro_eigoverlap}(d)}: We randomly generate $10^4 \times d$ matrices for $d \in \{10,30,100,300,1000\}$ with entries drawn uniformly from $[-\frac{1}{\sqrt{d}},\frac{1}{\sqrt{d}}]$.
	We uniformly quantize these matrices with precisions $b \in \{1,2,4\}$, and compute the corresponding eigenspace overlap scores.
	As one can see, the embedding dimension $d$ has minimal impact on the eigenspace overlap score.
\end{itemize}

An important thing to mention about Theorem~\ref{thm1} is that this bound can be vacuous when the embedding matrix has a quickly decaying spectrum, and thus a small value of $a$.
This is a consequence of the proof of the Davis-Kahan $\sin(\Theta)$ theorem, which uses the smallest eigenvalue of $XX^T$ to lower bound a matrix multiplication; this inequality is relatively tight when the spectrum of $XX^T$ decays slowly, but is quite loose if it doesn't.

\begin{figure*}
	\begin{tabular}{c c c c}
		\includegraphics[width=0.23\linewidth]{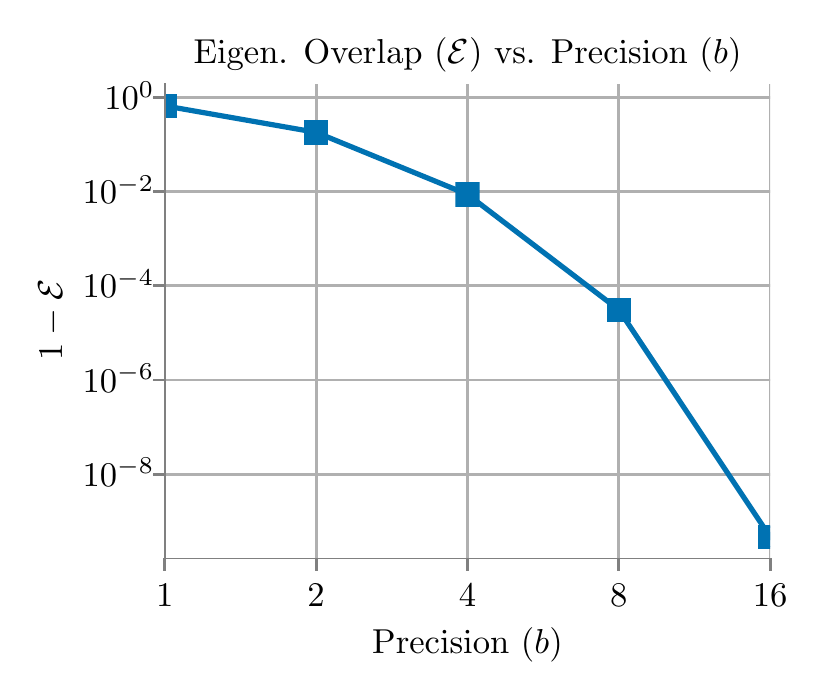} &
		\includegraphics[width=0.23\linewidth]{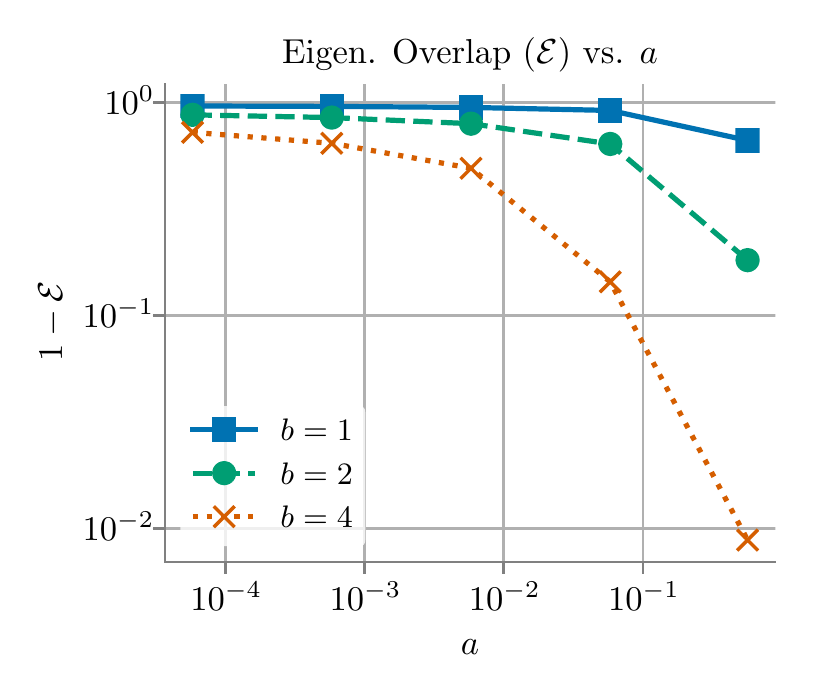} &
		\includegraphics[width=0.23\linewidth]{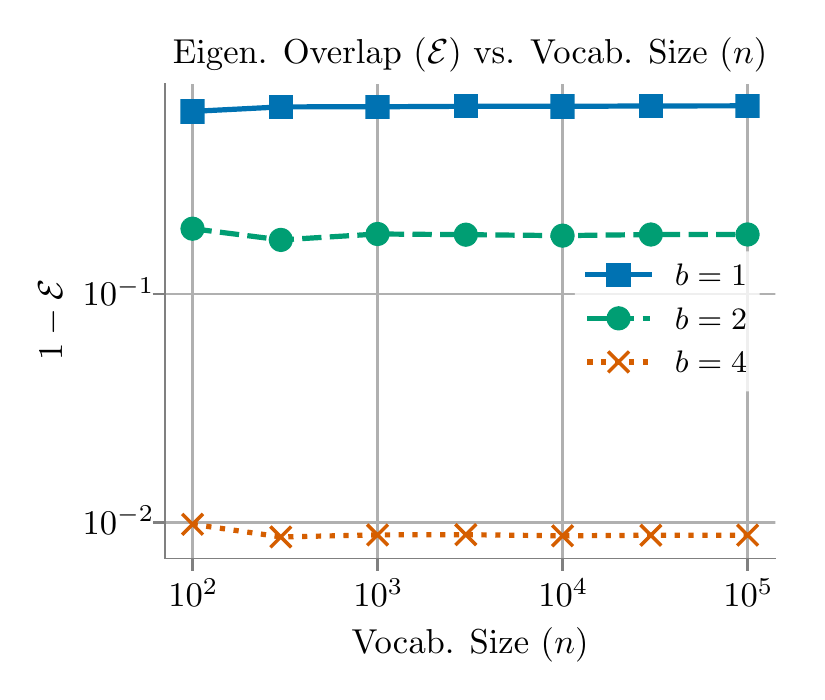} &
		\includegraphics[width=0.23\linewidth]{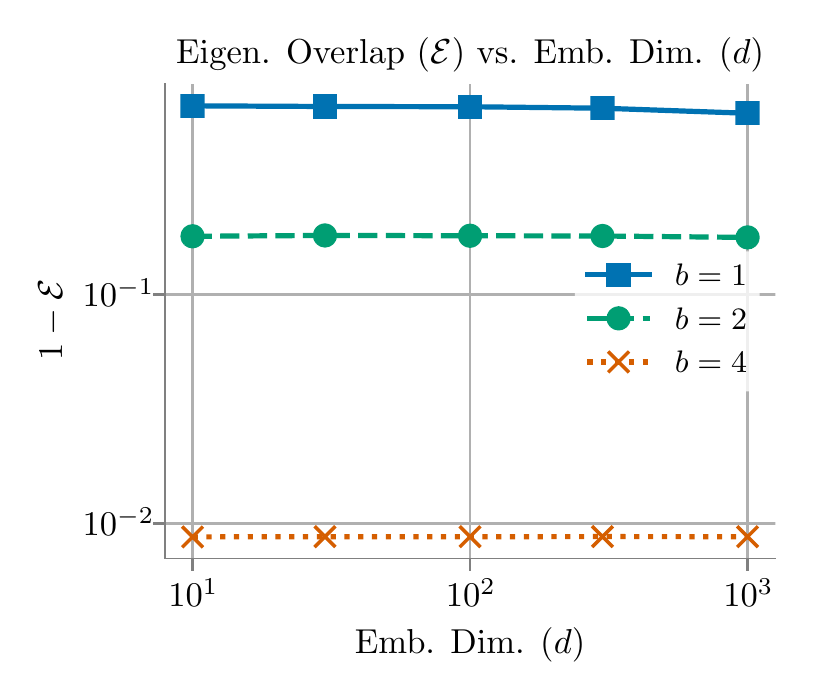} \\
		(a) & (b) & (c) & (d)
	\end{tabular}
	\caption{\textbf{Empirical Validation of Theorem~\ref{thm1}}.
	We measure the eigenspace overlap score $\eigover$ of uniformly quantized embeddings with the uncompressed embedding for various precisions, values of $a$, vocabulary sizes $n$, and dimensions $d$.
	We observe that $1-\eigover$ decays as the precision $b$ and scalar $a$ grow, and that $1-\eigover$ is largely unaffected by the vocabulary size $n$ and embedding dimension $d$.
	}
	\label{fig:micro_eigoverlap}
\end{figure*}

\subsection{Impact of Clipping and Deterministic vs.\ Stochastic Quantization on the Eigenspace Overlap Score}
\label{app:clip}
\begin{figure}
\centering
	\begin{tabular}{c c c}
		\includegraphics[width=0.3\linewidth]{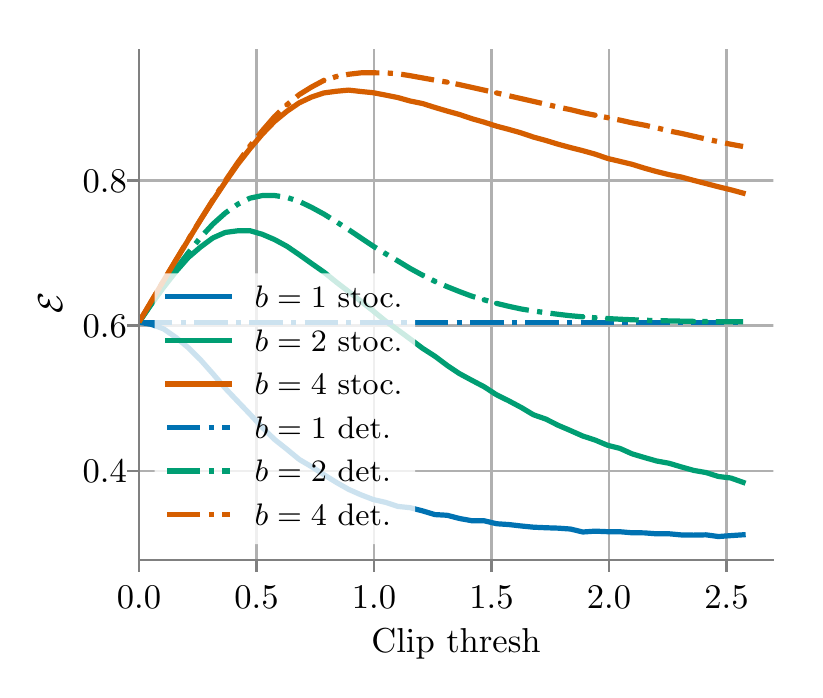} &
		\includegraphics[width=0.3\linewidth]{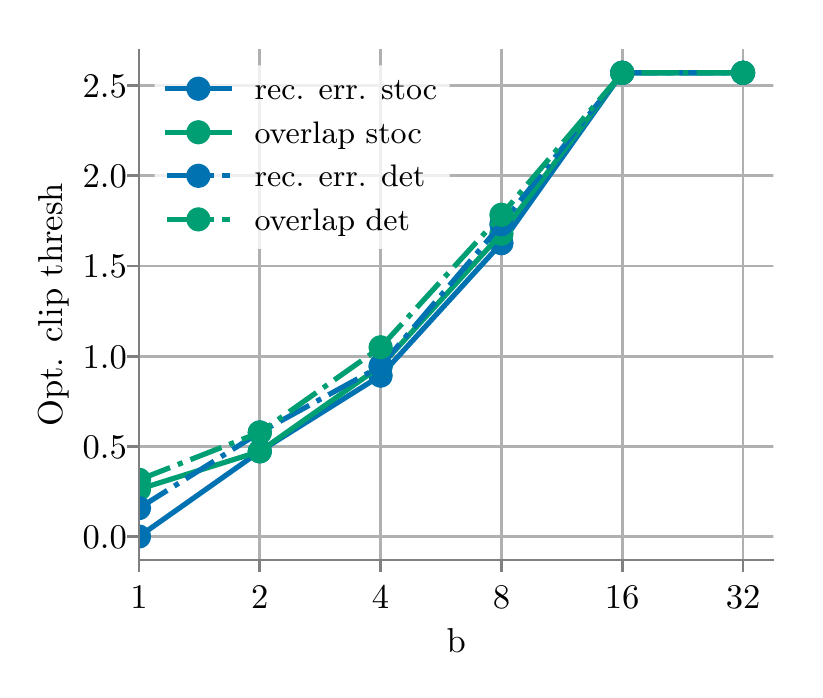}  \\
		(a) & (b)
	\end{tabular}

	\caption{\textbf{The impact of clipping and deterministic vs.\ stochastic quantization on the eigenspace overlap score.} 
		(a) We plot the eigenspace overlap score as a function of the clipping threshold, for precisions $b \in \{1,2,4\}$ and for both stochastic and deterministic quantization.
		We observe that choosing the value of $r$ appropriately is crucial for attaining high eigenspace overlap scores,
		and that deterministic quantization generally gives slightly higher eigenspace overlap scores than stochastic quantization.
		(b) For each precision $b$, we plot the clipping thresholds which give the highest eigenspace overlap scores and embedding reconstruction errors, for both stochastic and deterministic quantization.
		We observe that for both types of quantization, the optimal clipping threshold chosen according to embedding reconstruction error is very similar to the optimal clipping threshold chosen according to the eigenspace overlap score.
}
\label{fig:clipping_effect}
\end{figure}

As shown in Algorithm~\ref{alg:smallfry} (described in Section~\ref{app:compress}), clipping is the first step in the uniform quantization method we use for compressing word embeddings.
Here, we show that clipping is important because it can significantly improve the eigenspace overlap scores of the compressed embeddings, compared to uniform quantization without clipping.
Specifically, we compute the eigenspace overlap score of $Q_{b,r}(\clip_r(X))$ (and $\tQ_{b,r}(\clip_r(X))$) with $X$, for a range of clipping values $r \in [0,\max(|X|)]$, using the publicly available 300-dimensional pre-trained GloVe embeddings as $X$ (see Appendix~\ref{app:embed_details} for embedding details).
Recall that $Q_{b,r}$ and $\tQ_{b,r}$ are the deterministic and stochastic $b$-bit uniform quantization functions for the interval $[-r,r]$, respectively (defined in Section~\ref{app:unif}).
In Figure~\ref{fig:clipping_effect}(a), we plot the eigenspace overlap scores attained by both quantization methods as a function of the clipping value $r$, for precisions $b\in\{1,2,4\}$.
We observe that choosing the value of $r$ appropriately is crucial for attaining high eigenspace overlap scores.
We also observe that deterministic quantization typically attains slightly higher eigenspace overlap scores than stochastic quantization.
This result helps explain our empirical observation in Appendix~\ref{app:stoc_quant} that deterministic quantization often attains slightly better downstream performance than stochastic quantization.

In Algorithm~\ref{alg:smallfry}, we choose the clipping threshold $r^*$ which minimizes the embedding reconstruction error of the clipped and quantized embeddings.
In Figure~\ref{fig:clipping_effect}(b), we show that choosing the clipping threshold based on the embedding reconstruction error gives very similar results to choosing the clipping threshold based on the eigenspace overlap score, for both deterministic and stochastic quantization.
This helps explain the strong downstream performance of the embeddings compressed using Algorithm~\ref{alg:smallfry}.

\section{Experiment Details}
\label{app:experiment_details}
We now discuss in detail the protocols we used for all our experiments.
In Appendix~\ref{app:task}, we describe the model architectures and datasets we use for each downstream task, including the train/development/test splits for each dataset.
We then discuss in Appendix~\ref{app:embed_details} the details of the pre-trained word embeddings we compress, and in Appendix~\ref{app:compress} the details of the different compression methods we use.
In Appendix~\ref{app:training} we discuss the training details for each of the downstream tasks, including the hyperparameter grids we use to tune our models.

\subsection{Task Details}
\label{app:task}
\paragraph{Question Answering}
For the question answering task, we use the DrQA model~\cite{drqa17} trained and evaluated on the Stanford Question and Answering Dataset (SQuAD)~\cite{squad16}.
For this task, given a paragraph and a corresponding question in natural language, the model must predict the start and end position, within the paragraph, of the answer to the question.
We use the default train and development set splits for the SQuAD dataset, and report all results on the development set, as the test set is not publicly available.
The DrQA model consists of a three-layer bidirectional LSTM model with 128-dimensional hidden units on top of a pretrained word embedding.
We train the DrQA model on the SQuAD-v1.1 training set, and report the F1 score on the SQuAD-v1.1 development set.
We use the implementation of the DrQA model from the Facebook Research DrQA repository.\footnote{\url{https://github.com/facebookresearch/DrQA}.}

\paragraph{Sentiment Analysis}
For the sentiment analysis tasks, we use the convolutional neural network (CNN) architecture proposed by~\citet{kim14}, and evaluate performance on the datasets used in that work (see Section 3 of that paper for dataset details).
We use the data released as part of the Harvard NLP group's sentiment analysis repository.\footnote{\url{https://github.com/harvardnlp/sent-conv-torch/tree/master/data}.}
For the datasets which are pre-split into train/development/test (SST-1, SST-2), we use these dataset splits.
For the datasets which are pre-split into train/test (TREC), we take a random 10\% of the training set as a development set.
For the datasets which have no pre-specified splits (MR, Subj, CR, MPQA), we take a random 10\% of the data as a test set, and a random 10\% of the remaining data as a development set;
the rest of the data is used as the training set.
We tune hyperparameters (learning rate) on the development sets, and report results on the test sets.
The CNN architecture we use for this task has one convolutional layer with multiple filters, followed by a ReLU non-linearity and a max-pooling layer.
The convolutional layer uses filter windows of size 3, 4, and 5, each with 100 feature maps.
As we use PyTorch~\cite{paszke2017automatic} for all our experiments, we reimplemented this model architecture in PyTorch,
using the original Theano implementation as a template.\footnote{\url{https://github.com/yoonkim/CNN_sentence}.}.

\paragraph{GLUE Tasks}
The General Language Understanding Evaluation (GLUE) benchmark~\cite{glue19} is a collection of nine natural language understanding tasks.
We summarize these tasks in Table~\ref{tab:GLUE}, along with the evaluation metric used for each task.
We use the default train and development set splits for each of these tasks.
We tune hyperparameters (learning rate) on the development sets, and also report results on the development sets, as the test sets are not publicly available.
For each task, we use the standard approach of adding a linear layer on top of the pre-trained BERT model, and then fine-tuning the model using the data for that task.
To evaluate the performance of compressed embeddings on these tasks, we compress the WordPiece~\cite{wu2016google} embeddings in the pre-trained case-sensitive BERT\textsubscript{BASE} model, and then fine-tune all the non-embedding model parameters, keeping the embeddings frozen during training.
We use a third-party implementation of the BERT model, and of the fine-tuning procedure.\footnote{PyTorch
	implementation of the pre-trained BERT model: \url{https://github.com/huggingface/pytorch-pretrained-BERT}.
	We use the examples/run\_classifier.py file provided in this repo for fine-tuning.
}
We run experiments on all the GLUE tasks except WNLI.
We skip the WNLI dataset because this is a dataset on which it is very difficult to outperform the trivial model which always outputs the majority class.
This trivial model attains 65.1\% accuracy, and only two of the contributors to the GLUE leaderboard\footnote{\url{https://gluebenchmark.com/leaderboard/}} have outperformed this model, as of this writing.

\begin{table}
	\centering
	\caption{\textbf{The GLUE datasets, along with the evaluation metric used for each dataset.}
		For the MRPC and QQP datasets, the average of the F1 score and accuracy on the development set is used.
		For the STS-B dataset, the average of the Pearson and Spearman correlations on the development set is used.
		For the MNLI dataset, the average of the accuracies on the matched and mismatched development sets is used.
	}
	\begin{tabular}{c |c}
		\toprule
		Datasets & Evaluation Metrics \\
		\midrule
		\midrule
		The Corpus of Linguistic Acceptability (CoLA) & Matthew's Correlation\\
		\hline
		The Stanford Sentiment Treebank (SST-2) & Accuracy \\
		\hline
		Microsoft Research Paraphrase Corpus (MRPC) & F1 / Accuracy\\
		\hline
		Semantic Textual Similarity Benchmark (STS-B) & Pearson-Spearman Correlation\\
		\hline
		Quora Question Pairs (QQP) & F1 / Accuracy\\
		\hline
		Multi-Genre Natural Language Inference (MNLI) & Accuracy (matched/mismatched)\\
		\hline
		Question Natural Language Inference (QNLI) & Accuracy\\
		\hline
		Recognizing Textual Entailment (RTE) & Accuracy\\
		\hline
		Winograd Natural Language Inference (WNLI) & Accuracy\\
		\bottomrule
	\end{tabular}
	\label{tab:GLUE}
\end{table}

\subsection{Word Embedding Details}
\label{app:embed_details}
For the GloVe embeddings, we use publicly available embeddings pre-trained on the Wikipedia 2014 and Gigaword 5 corpora.\footnote{\url{http://nlp.stanford.edu/data/glove.6B.zip}.}
These are available for dimensions $d\in\{50,100,200,300\}$;
we use the 300-dimensional embeddings for all our experiments, except for our GloVe dimensionality reduction experiments, where we use the lower-dimensional embeddings.
For the fastText embeddings, we use the publicly available 300-dimensional embeddings trained on the Wikipedia 2017 corpus, the UMBC webbase corpus, and the statmt.org news dataset.\footnote{\url{https://s3-us-west-1.amazonaws.com/fasttext-vectors/wiki-news-300d-1M.vec.zip}.}
For the WordPiece embeddings~\cite{wu2016google}, we use the embeddings which are part of the pre-trained case-sensitive BERT\textsubscript{BASE} model, available through the Hugging Face BERT repository.\footnote{\url{https://github.com/huggingface/pytorch-pretrained-BERT}.}

\subsection{Compression Method Details}
\label{app:compress}

\begin{table*}[t]
	\caption{The optimal learning rates $\eta$ and dictionary sizes $k$ for DCCL.}
	\vspace{0.05in}
	\small
	\centering
	\begin{tabular}{@{\hskip -0.0in}c | c | c | c | c | c | c | c | c | c@{\hskip -0.0in}}
		\toprule
		Embedding & \multicolumn{3}{|c|}{GloVe} & \multicolumn{3}{|c|}{fastText} & \multicolumn{3}{|c}{BERT WordPiece} \\
		\midrule
		\midrule
		\makecell{Compression \\ rate} & $8\times$ & $16\times$ & $32\times$ & $8\times$ & $16\times$ & $32\times$ & $8\times$ & $16\times$ & $32\times$ \\ 
		\hline
		$k$ & $8$ & $4$ & $4$  & $8$ & $4$ & $8$ & $128$ & $64$ & $32$ \\ 
		\hline
		$\eta$ & $0.0003$ & $0.0003$ & $0.0003$ & $0.0001$ & $0.0001$ & $0.0001$ & $0.0003$ & $0.0003$ & $0.0003$ \\ 
		\bottomrule
	\end{tabular}
	\vfigsp
	\label{tab:dccl_hyper}
\end{table*}

\paragraph{Deep Compositional Code Learning (DCCL)}
We give an overview of the DCCL method~\citep{dccl17} in Section~\ref{subsec:existing_methods}.
The important hyperparameters for this method include the learning rate $\eta$ of the Adam optimizer~\citep{kingma2014adam}, the number of dictionaries $m$, the size $k$ of each dictionary, the temperature parameter $\tau$ for Gumbel sampling, and the mini-batch size.
To select the learning rate $\eta$ and the dictionary size $k$ for each compression rate, we perform a grid search using the Cartesian product of $\eta \in \{0.00001, 0.00003, 0.0001, 0.0003, 0.001\}$ and $k \in \{2,4,8,16\}$ for each uncompressed embedding type (GloVe, fastText, BERT WordPiece embeddings) and compression rate.
Note that given a compression rate and a dictionary size $k$, this uniquely determines the number of dictionaries $m$ to use.
We select the combination of learning rate and dictionary size which minimizes the reconstruction error of the compressed embeddings.
When compressing BERT WordPiece embedding, we extended the dictionary size grid to $k \in \{2,4,8,16,32,64,128,256\}$ to avoid the optimal dictionary size touching the boundary of the grid.
We provide the optimal learning rates and dictionary sizes in Table~\ref{tab:dccl_hyper} for reproducibility.
For the temperature parameter $\tau$, we follow~\citet{dccl17} and consistently use $\tau=1.0$.
For all our experiments we use a mini-batch size of 64, which is the default value in the DCCL repository.\footnote{\url{https://github.com/zomux/neuralcompressor}.}

\paragraph{K-means}
The k-means clustering method can be used to compress embeddings as follows:
First, the one-dimensional k-means clustering algorithm is run on all the scalar entries in the full-precision embedding matrix $X$.
Then, each entry in $X$ is replaced by the centroid to which it is closest.
If $2^b$ centroids are used during the clustering step, then for each entry of the compressed embedding matrix, only the integer $j \in \{0,1,\ldots,2^b-1\}$ of the corresponding centroid needs to be stored; this requires $b$ bits per entry.
In our experiments, we use the Scikit Learn~\cite{scikit-learn} implementation of k-means.
We use the default configuration from Scikit Learn, which runs for a maximum of 300 iterations and can early stop if the relative decrease of the loss function is smaller than $10^{-4}$.

\paragraph{Dimensionality Reduction}
The two dimensionality reduction methods we consider are (1) using pre-trained lower-dimensional embeddings, and (2) principal component analysis (PCA).
For the GloVe embeddings, we use the publicly available lower-dimensional embeddings described in Appendix~\ref{app:embed_details}.
These embeddings are available for dimensions $d\in\{50,100,200,300\}$, where we consider the 300-dimensional embeddings to be the ``uncompressed'' embeddings.
For our experiments with fastText and BERT WordPiece embeddings, we use PCA to reduce the dimension of the embeddings, as these embeddings are not publicly available in lower dimensions.
When we compress the 300-dimensional fastText and GloVe embeddings with dimensionality reduction, we use compression rates in $\{1\times, 1.5\times, 3\times, 6\times\}$.
For the 768-dimensional BERT WordPiece embeddings, we use compression rates in $\{1\times, 2\times, 4\times, 8\times\}$.

We now give details on how we implement the PCA dimensionality reduction method.
For an embedding $X\in\RR^{n \times d}$ with vocabulary size $n$ and dimension $d$, let $X=USV^T$ be the SVD of $X$ with $U = [U_1,\ldots U_d]$,  $S=\diag([s_1,\ldots s_d])$, and $V = [V_1,\ldots V_d]$.
If we let $U_{(k)} \defeq [U_1,\ldots,U_k]$, $S_{(k)} \defeq \diag([s_1,\ldots s_k])$, and $V_{(k)} \defeq [V_1,\ldots,V_k]$ then we use $\tX \defeq U_{(k)} S_{(k)}$ as the $k$-dimensional compressed embedding.
Note that for the GLUE tasks, we instead use $\tX \defeq U_{(k)} S_{(k)} V_{(k)}^T$ to ensure that these compressed embeddings are compatible with the parameters of the pre-trained BERT model;
because the dimension $k$ of these compressed embeddings is small compared to the vocabulary size $n$, storing $V_{(k)}^T$ requires a relatively small amount of additional memory.

\paragraph{Uniform Quantization}
In Algorithm~\ref{alg:smallfry} we show how we use uniform quantization to compress word embeddings.
The input to the algorithm is an embedding matrix $X \in \RR^{n\times d}$, where $n$ is the size of the vocabulary, and $d$ is the dimension of the embeddings.
We define the function $\clip_r(x) = \max(\min(x,r),-r)$ for any non-negative $r$;
when matrices are passed in as inputs to this function, it clips the entries in an element-wise fashion.
Given an input embedding and a desired numbers of bits to use per entry of the compressed embedding matrix, the uniform quantization method operates in two steps:

\begin{algorithm}[t]
	\caption{Uniform quantization for word embeddings}
	\label{alg:smallfry}
	\begin{algorithmic}[1]
		\STATE {\bfseries Input:}  Embedding $X \in \RR^{n \times d}$; quantization func.\ $Q_{b,r}$; clipping func.\ $\clip_r\colon\RR\rightarrow[-r,r]$.
		\STATE {\bfseries Output:} Quantized embedding $\tX \in \RR^{n \times d}$.
		\STATE $r^* \defeq \argmin_{r \in [0,\max(|X|)]} \|Q_{b,r}(\clip_r(X))-X\|_F$.
		\STATE {\bfseries Return:} $Q_{b,r^*}(\clip_{r^*}(X))$.
	\end{algorithmic}
\end{algorithm}
\vspace{-0.05in}

\begin{itemize}
	\item \textbf{Step 1}: We find the value of $r \in [0,\max(|X|)]$ which minimizes the reconstruction error of the quantized embeddings after $X$ is clipped to $[-r,r]$.
	More formally, we let $r^* \defeq \argmin_{r \in [0,\max(|X|)} \|Q_{b,r}(\clip_r(X))-X\|_F$, and use this value $r^*$ to clip $X$.
	In our experiments, we find $r^*$ to within a specified tolerance $\eps = 0.01$ using the golden-section search algorithm \citep{golden53}.
	To avoid stochasticity impacting the search process for the clipping threshold, we always use deterministic rounding in the search for $r^*$, regardless of whether we use stochastic rounding or deterministic nearest rounding in the final quantization after clipping the extremal values.
	\item \textbf{Step 2}: We quantize the clipped embeddings to $b$ bits per entry with $Q_{b,r}$.
\end{itemize}
In all of our main experiments on the downstream performance (question answering, sentiment analysis, GLUE tasks) of compressed word embeddings, we use the deterministic quantization function $Q_{b,r}$ introduced in Appendix~\ref{app:unif} for both steps of this algorithm.
However, in Appendix~\ref{app:stoc_quant} we use the stochastic quantization function $\tQ_{b,r}$ for the second step of this compression algorithm, and show that it performs similarly to deterministic quantization on downstream tasks.

\subsection{Training Details}
\label{app:training}
We now discuss the training details for the different tasks we consider, focusing on how we tune the hyperparameters.

\paragraph{Question Answering}
We use the default hyperparameters from the Facebook Research DrQA implementation for all our question answering experiments, as these are tuned for the SQuAD dataset.\footnote{\url{https://github.com/facebookresearch/DrQA}.}
We summarize these hyperparameters in Table~\ref{tab:hyperparam_drqa}.

\begin{table}
\caption{Training hyperparameter for DrQA on the SQuAD dataset.}
\centering
\begin{tabular}{c | c}
	\toprule
	Hyperparameter & Value \\
	\midrule
	\midrule
	Optimizer & Adamax \\
	\hline
	Decay rates for 1st moment $\beta_1$ & 0.9 \\
	\hline
	Decay rates for 2nd moment $\beta_2$ & 0.999 \\
	\hline
	Adamax $\epsilon$ & $10^{-8}$ \\
	\hline
	Learning rate & $2\times 10^{-3}$ \\
	\hline
	Batchsize & 32 \\
	\hline
	Training epochs & 40 \\
	\hline
	Dropout & 0.4 \\
	\bottomrule
\end{tabular}
\label{tab:hyperparam_drqa}	
\end{table}

\paragraph{Sentiment Analysis}
We tune the learning rate for each of the sentiment analysis datasets using the grid $\{10^{-6}, 10^{-5}, 10^{-4}, 10^{-3}, 10^{-2}, 10^{-1}, 1.0\}$.
For this tuning process, we use the uncompressed embedding for each dataset and embedding type (GloVe, fastText), and pick the learning rate which attains highest average accuracy on the development set across five random seeds.
This learning rate is then used to train the models that use the uncompressed embeddings, as well as the embeddings compressed using uniform quantization, k-means, and DCCL.
Note that we tune the learning rate individually for each embedding compressed using dimensionality reduction (for both GloVe and fastText).
We do this to ensure that the lower dimensionality of these compressed embeddings does not result in the learning rate being improperly tuned.
We list the hyperparameters shared across datasets in Table~\ref{tab:hyperparam_sent} and list the optimal learning rate for each dataset and embedding type in Table~\ref{tab:hyperparam_sent_per_dataset}.

\begin{table}
\caption{Training hyperparameter shared across sentiment analysis datasets.}
\centering
\begin{tabular}{c | c}
	\toprule
	Hyperparameter & Value \\
	\midrule
	\midrule
	Optimizer & Adam \\
	\hline
	Decay rates for 1st moment $\beta_1$ & 0.9 \\
	\hline
	Decay rates for 2nd moment $\beta_2$ & 0.999 \\
	\hline
	Adam $\epsilon$ & $10^{-8}$ \\
	\hline
	Batchsize & 32 \\
	\hline
	Training epochs & 100 \\
	\hline
	Dropout & 0.5 \\
	\bottomrule
\end{tabular}
\label{tab:hyperparam_sent}	
\end{table}

\begin{table*}[t]
	\caption{The optimal learning rate $\eta$ for different sentiment analysis datasets.}
	\vspace{0.05in}
	\small
	\centering
	\begin{tabular}{c | c | c | c | c | c | c | c }
		\toprule
		Datasets & MR & SST-1 & SST-2 & Subj & TREC & CR & MPQA \\ 
		\midrule
		\midrule
		GloVe uncompressed & $0.001$ & $0.001$ & $0.001$ & $0.001$ & $0.001$ & $0.001$ & $0.001$ \\ 
		\hline
		GloVe dim. red. $1\times$ & $0.001$ & $0.001$ & $0.001$ & $0.001$ & $0.001$ & $0.001$ & $0.001$ \\ 
		\hline
		GloVe dim. red. $1.5\times$ & $0.001$ & $0.001$ & $0.001$ & $0.001$ & $0.001$ & $0.001$ & $0.001$ \\ 
		\hline
		GloVe dim. red. $3\times$ & $0.0001$ & $0.001$ & $0.001$ & $0.001$ & $0.001$ & $0.001$ & $0.001$ \\ 
		\hline
		GloVe dim. red. $6\times$ & $0.001$ & $0.001$ & $0.001$ & $0.001$ & $0.001$ & $0.001$ & $0.001$ \\ 
		\hline
		fastText uncompressed & $0.001$ & $0.001$ & $0.001$ & $0.001$ & $0.001$ & $0.001$ & $0.001$ \\ 
		\hline
		fastText dim. red. $1\times$ & $0.0001$ & $0.001$ & $0.001$ & $0.001$ & $0.001$ & $0.001$ & $0.001$ \\ 
		\hline
		fastText dim. red. $1.5\times$ & $0.0001$ & $0.001$ & $0.001$ & $0.001$ & $0.001$ & $0.001$ & $0.001$ \\
		\hline
		fastText dim. red. $3\times$ & $0.0001$ & $0.001$ & $0.001$ & $0.001$ & $0.001$ & $0.001$ & $0.001$ \\ 
		\hline
		fastText dim. red. $6\times$ & $0.001$ & $0.0001$ & $0.001$ & $0.001$ & $0.001$ & $0.001$ & $0.001$ \\ 
		\bottomrule
	\end{tabular}
	\vfigsp
	\label{tab:hyperparam_sent_per_dataset}
\end{table*}

\paragraph{GLUE Tasks}
We tune the learning rate for each of the GLUE tasks using the grid $\{10^{-5}, 2\times10^{-5}, 3\times10^{-5}, 5\times10^{-5}, 10^{-4}\}$.
When tuning the learning rate, we use the uncompressed WordPiece embeddings, and we fine-tune the entire model, without freezing the embedding parameters.
For each task we pick the learning rate which gives the best average performance (according to the metrics in Table~\ref{tab:GLUE}) on the development set, across five random seeds.
The optimal learning rates are listed in Table~\ref{tab:hyperparam_glue} for all the GLUE tasks we run.
We use the default values (from both the Google Research TensorFlow BERT repository\footnote{\url{https://github.com/google-research/bert/blob/master/run_classifier.py}.} and the Hugging Face PyTorch BERT repository\footnote{\url{https://github.com/huggingface/pytorch-pretrained-BERT/blob/master/examples/run_classifier.py}.}) for the other hyperparameters.
Specifically, we fine-tune the model for 3 epochs using the Adam optimizer with a mini-batch size of 32, and a weight decay strength of $0.01$ (weight decay is not applied to the layer norm layers or to the bias parameters).
We use a linear learning rate warm-up for the first 10\% of training (learning rate grows linearly from $0\times$ to $1\times$ the specified learning rate), and then a linear learning rate decay for the remaining 90\% of training (learning rate decays linearly from $1\times$ to $0\times$ the specified learning rate).

\begin{table*}
	\caption{The optimal learning rate $\eta$ for different GLUE tasks}
	\vspace{0.05in}
	\small
	\centering
	\begin{tabular}{c | c | c | c | c | c | c | c | c}
		\toprule
		Tasks & MNLI & QQP & QNLI & SST-2 & CoLA & STS-B & MRPC & RTE \\ 
		\midrule
		\midrule
		$\eta$ & $3\times10^{-5}$ & $3\times10^{-5}$ & $3\times10^{-5}$ & $3\times10^{-5}$ & $10^{-5}$ & $2\times10^{-5}$ & $2\times10^{-5}$ & $2\times10^{-5}$ \\ 
		\bottomrule
	\end{tabular}
	\vfigsp
	\label{tab:hyperparam_glue}
\end{table*}

\subsection{Infrastructure Details}
We run our experiments using AWS p2.xlarge instances, which have NVIDIA Tesla K80 GPUs.
We use Python 3.6 for our experiments.
For compatibility with the DrQA repository (which had not been ported to PyTorch 1.0 when we began our experiments), we use PyTorch 0.3.1 for the question answering and sentiment analysis tasks.
For the GLUE tasks we use PyTorch 1.0.

\section{Extended Empirical Results}
\label{app:extended_results}
We now provide a more complete version of the empirical results included in the main body of the paper, as well as a number of additional experiments validating claims related to our work.
More specifically:
\begin{itemize}
	\item In Appendix~\ref{app:perf_vs_comp} we present extended results comparing the downstream performance of the different compression methods across a range of compression rates for the GloVe, fastText, and BERT WordPiece embeddings, on question answering, sentiment analysis, and GLUE tasks.
	We show that uniform quantization can consistently match or outperform the other compression methods across these settings.
	\item In Appendix~\ref{app:task_specific} we present experiments comparing the performance of the different compression methods when applied to compressing task-specific embeddings which have been trained end-to-end for a translation task.
	Though our main focus in this paper is compressing task-agnostic embeddings (\eg, GloVe, fastText), we show that uniform quantization can effectively compete with a recently proposed tensorized factorization \citep{tensor19} of the embedding matrix designed for the task-specific setting.
	\item In Appendix~\ref{app:dim_vs_prec} we study whether, under a fixed memory budget, it is better to use low-dimensional high-precision embeddings, or high-dimensional low-precision embeddings.
	We show that under a wide range of memory budgets, one can attain large improvements in downstream performance on the SQuAD question answering task by using high-dimensional low-precision embeddings in place of lower dimensional high-precision embeddings.
	\item In Appendix~\ref{app:overlap_vs_comp} we present extended results comparing the eigenspace overlap scores of the different compression methods, for different compression rates and embeddings types.
	We show that uniform quantization can attain comparable or higher eigenspace overlap scores relative to the other compression methods, helping to explain the strong empirical performance of this compression method.
	\item In Appendix~\ref{app:correlation} we present extended results on the correlations between downstream performance and the different measures of compression quality.
	We show that across the question answering, sentiment analysis, and GLUE tasks we consider, the eigenspace overlap score consistently attains higher Spearman correlation with downstream performance than the other measures of compression quality (PIP loss, $\Delta$, $\Delta_{\max}$).
	\item In Appendix~\ref{app:d1_d2} we show that the eigenspace overlap score also correlates better with downstream performance than the $\Delta_1$ and $\Delta_2$ compression quality metrics, across a range of tasks.
	\item In Appendix~\ref{sec:delta_with_different_lambda} we show that our claim that the  eigenspace overlap score correlates better with downstream performance than $\Delta$ and $\Delta_{\max}$ is robust to the choice of the parameter $\lambda$ used when computing the values of $\Delta$ and $\Delta_{\max}$.
	\item In Appendix~\ref{app:selection} we show that the eigenspace overlap score is a more robust selection criterion for choosing between pairs of compressed embeddings than the other measures of compression quality.
	\item In Appendix~\ref{app:stoc_quant} we compare the downstream performance of embeddings compressed using deterministic vs.\ stochastic uniform quantization.
	We show these methods perform similarly, though the deterministic quantization performs slightly better at precision $b=1$.
\end{itemize}

We present all these results in more detail below.

\subsection{Downstream Performance vs.\ Compression Rate: Pre-Trained Embeddings}
\label{app:perf_vs_comp}
In Figures~\ref{fig:glove_all_perf_vs_comp} (GloVE), \ref{fig:fasttext_all_perf_vs_comp} (fastText), and \ref{fig:bert_all_perf_vs_comp} (BERT), we show the downstream performance of the embeddings compressed using different compression methods, across question answering, sentiment analysis, and GLUE tasks.
We show that the simple uniform quantization method can match or outperform the other compression methods across these tasks.
We also observe that for the GLUE tasks (Figure~\ref{fig:bert_all_perf_vs_comp}), freezing the WordPiece embeddings during the BERT model fine-tuning does not observably hurt downstream performance.
\label{app:pre_train_results}
\begin{figure}
\centering
\begin{tabular}{c c}
	\includegraphics[width=.4\linewidth]{figures/glove400k_qa_best-f1_vs_compression_linx_det_No_TT.pdf} &
	\includegraphics[width=.4\linewidth]{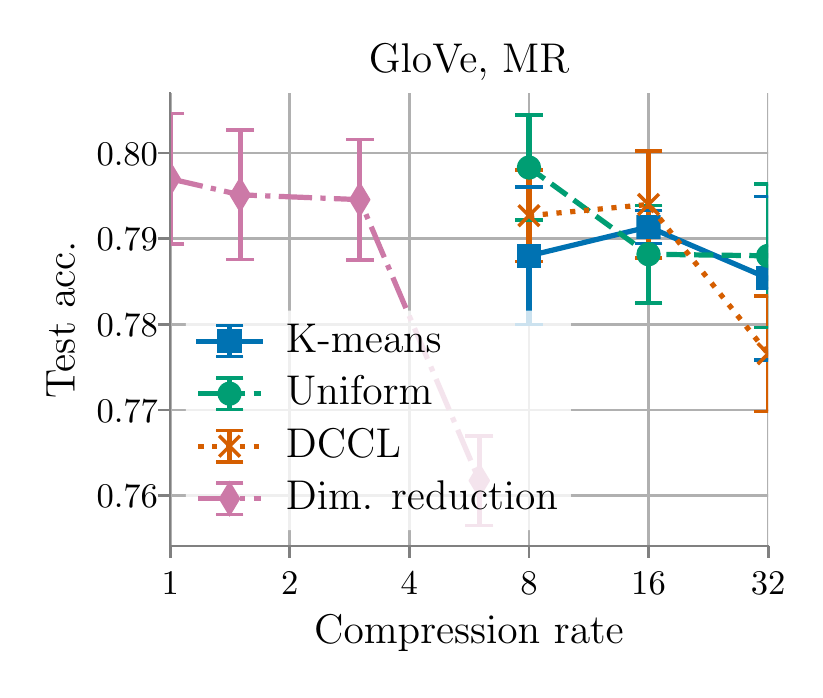} \\
	\includegraphics[width=.4\linewidth]{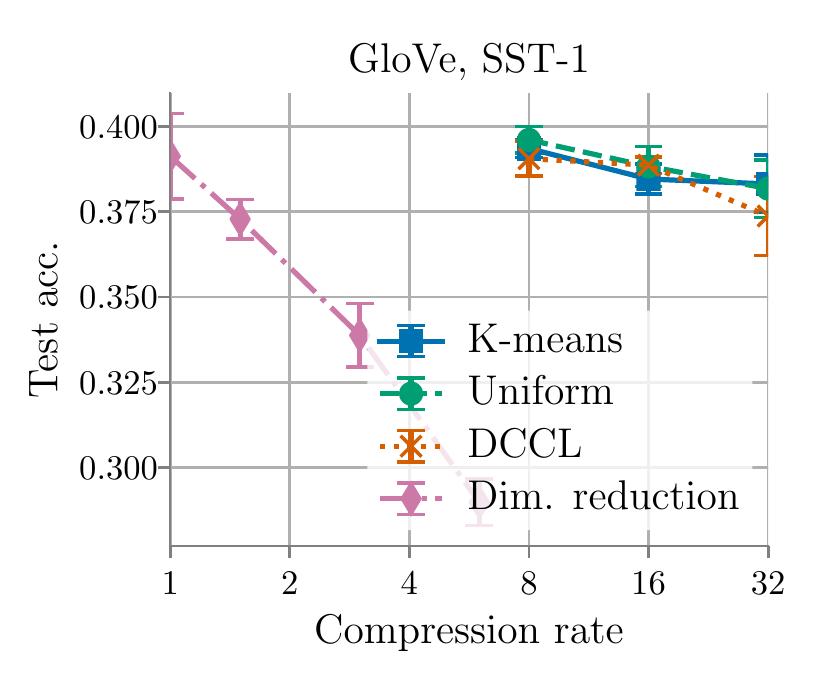} &	
	\includegraphics[width=.4\linewidth]{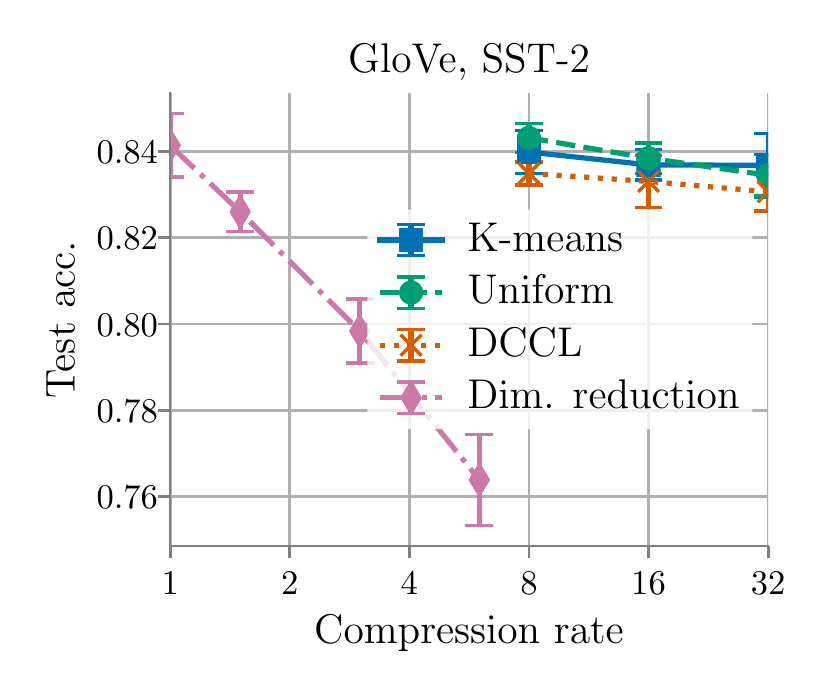} \\
	\includegraphics[width=.4\linewidth]{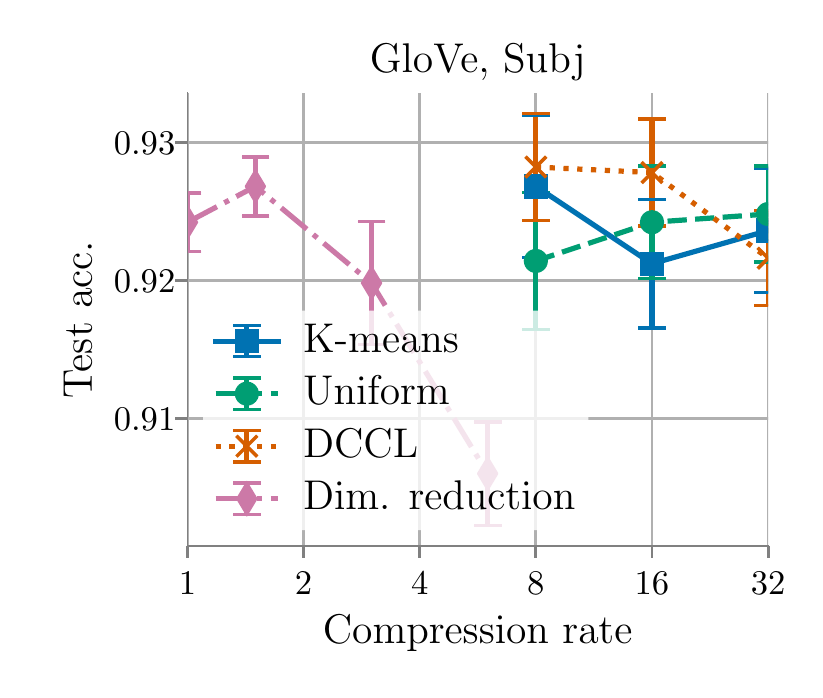} &	
	\includegraphics[width=.4\linewidth]{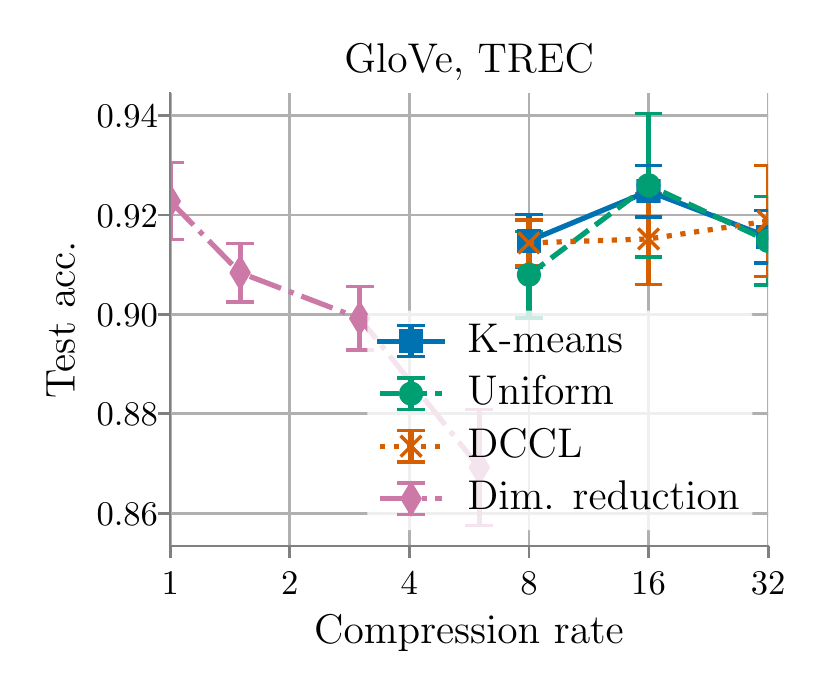} \\
	\includegraphics[width=.4\linewidth]{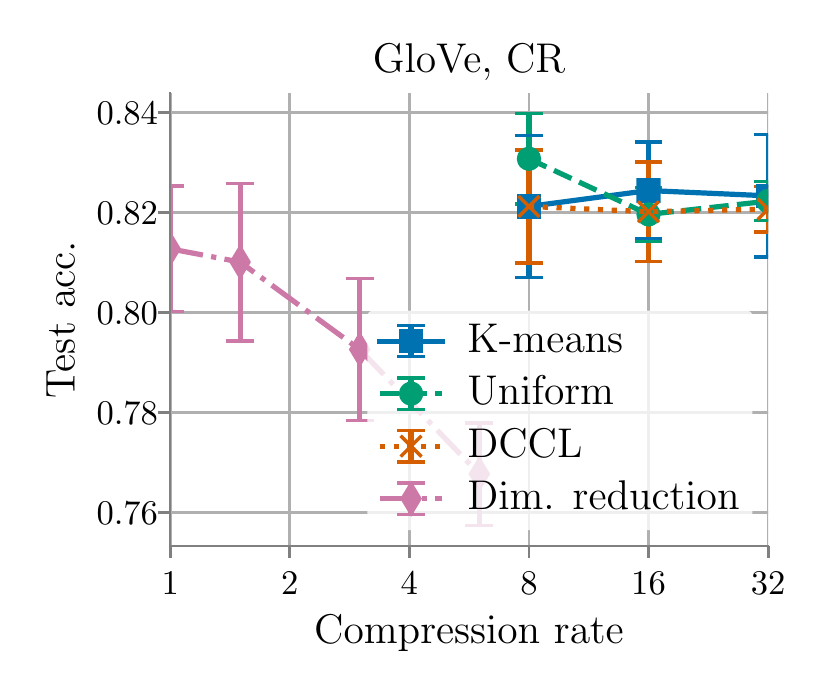} &	
	\includegraphics[width=.4\linewidth]{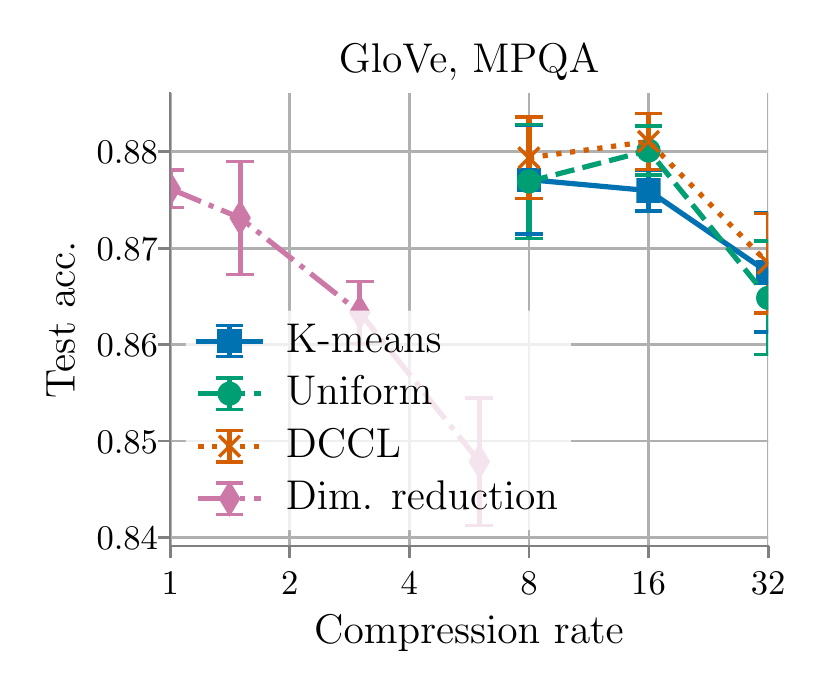} \\
\end{tabular}
\caption{
	\textbf{Downstream performance vs.\ compression rate for compressed GloVe embeddings.}
	We evaluate the downstream performance of the different compression methods on question answering and sentiment analysis tasks, across different compression rates.
	For question answering, we use the SQuAD dataset, and for sentiment analysis we use the MR, SST-1, SST-2, Subj, TREC, CR and MPQA datasets.
	We show average performance across five random seeds, with error bars indicating standard deviations.
}
\label{fig:glove_all_perf_vs_comp}	
\end{figure}

\begin{figure}
\centering
\begin{tabular}{c c}
	\includegraphics[width=.4\linewidth]{figures/fasttext1m_qa_best-f1_vs_compression_linx_det_No_TT.pdf} &
	\includegraphics[width=.4\linewidth]{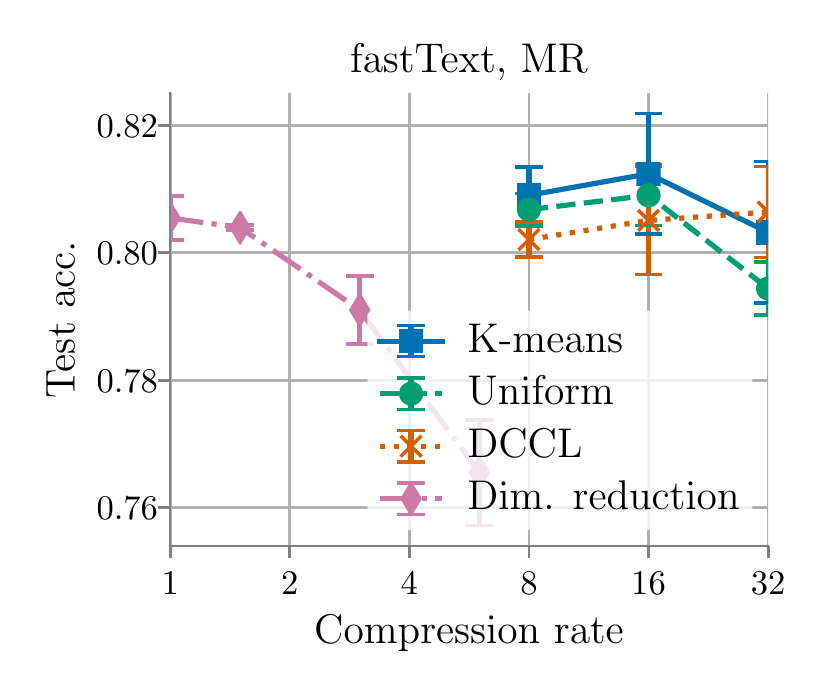} \\
	\includegraphics[width=.4\linewidth]{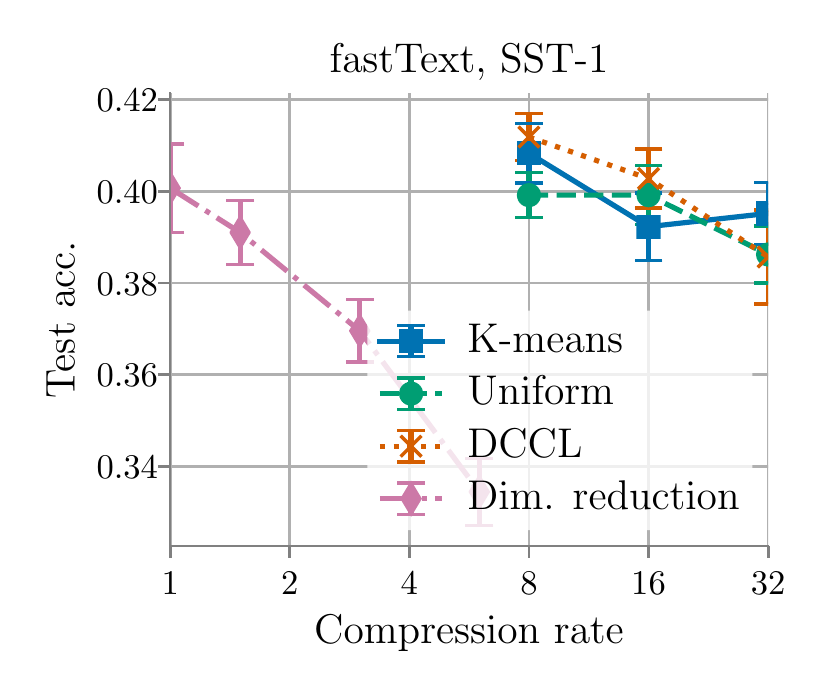} &	
	\includegraphics[width=.4\linewidth]{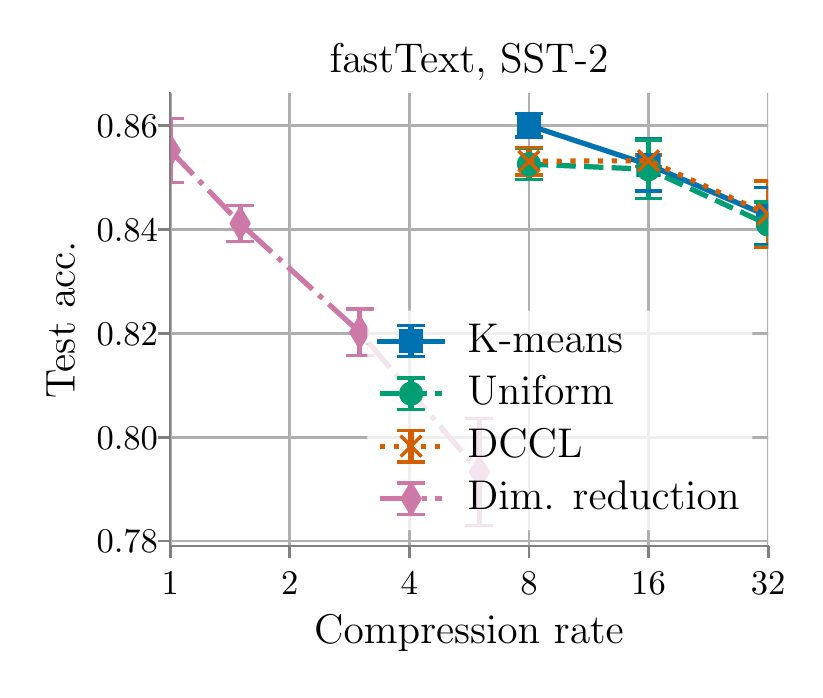} \\
	\includegraphics[width=.4\linewidth]{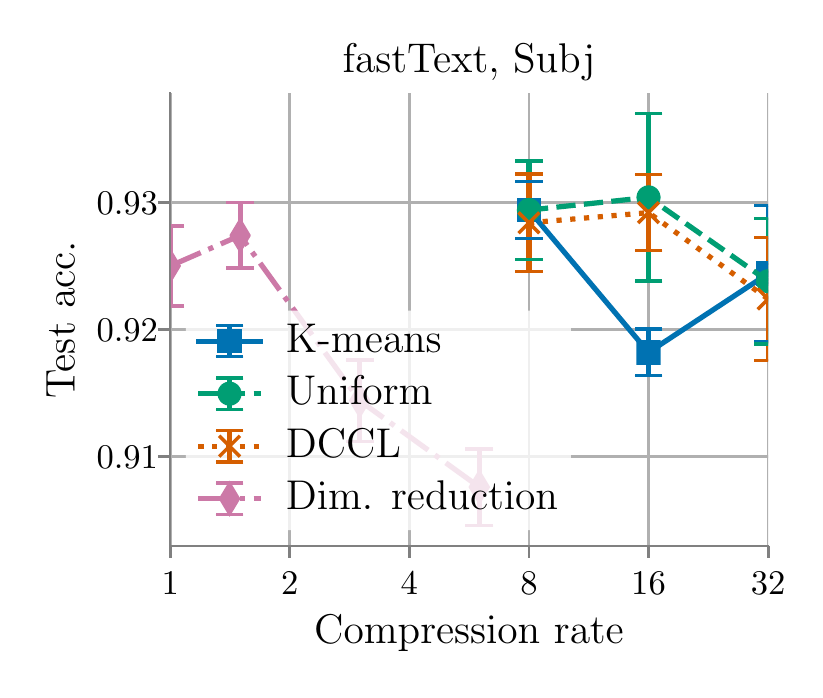} &	
	\includegraphics[width=.4\linewidth]{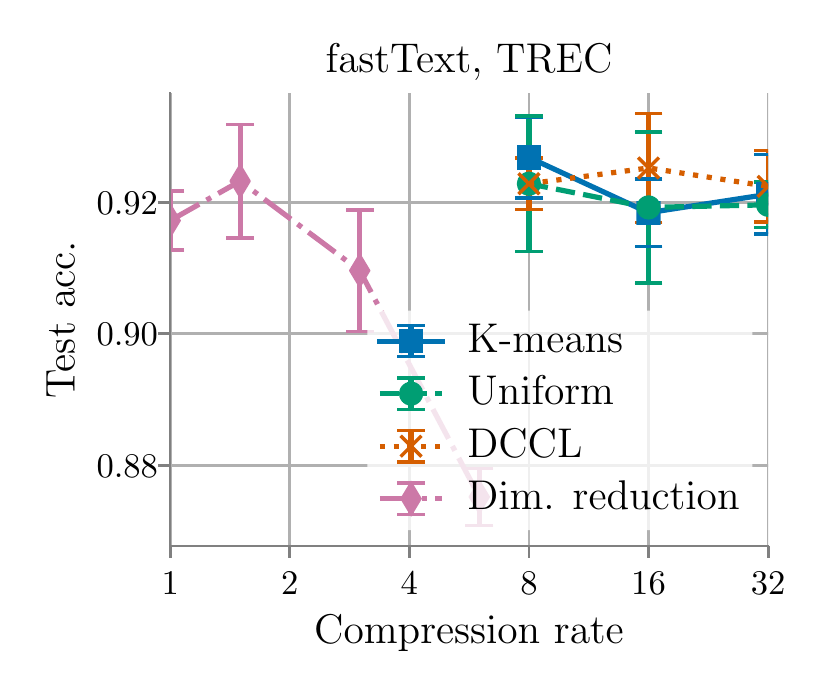} \\
	\includegraphics[width=.4\linewidth]{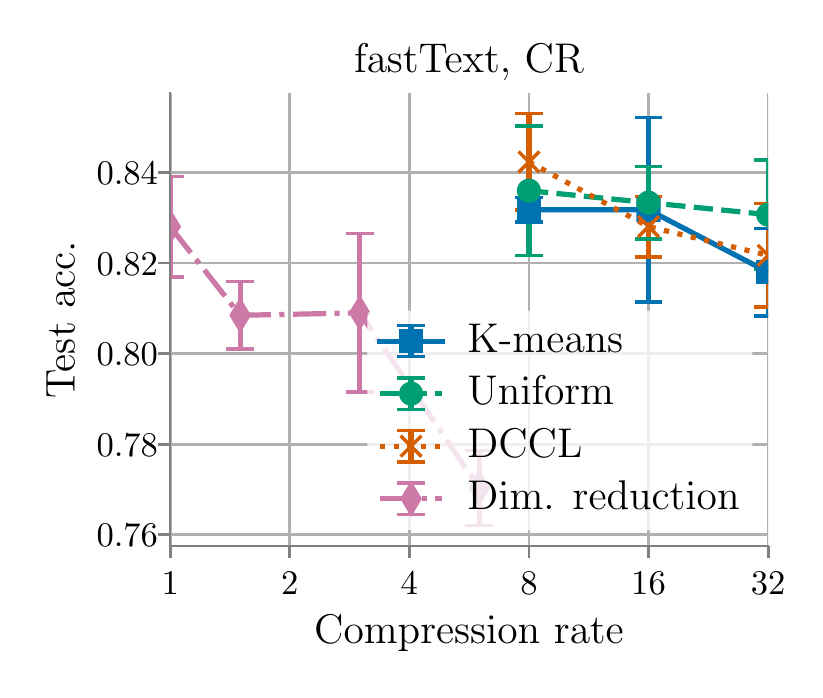} &	
	\includegraphics[width=.4\linewidth]{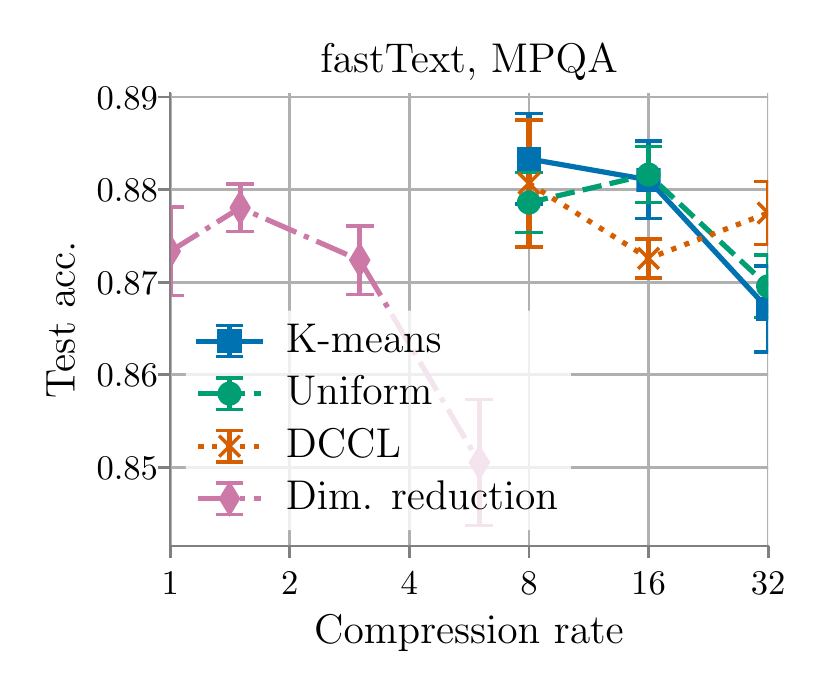} \\
\end{tabular}
\caption{
	\textbf{Downstream performance vs.\ compression rate for compressed fastText embeddings.}
	We evaluate the downstream performance of the different compression methods on question answering and sentiment analysis tasks, across different compression rates.
	For question answering, we use the SQuAD dataset, and for sentiment analysis we use the MR, SST-1, SST-2, Subj, TREC, CR and MPQA datasets.
	We show average performance across five random seeds, with error bars indicating standard deviations.
}
\label{fig:fasttext_all_perf_vs_comp}	
\end{figure}

\begin{figure}
\centering
\begin{tabular}{@{\hskip -0.0in}c@{\hskip -0.0in}c@{\hskip -0.0in}}
	\includegraphics[height=.325\linewidth]{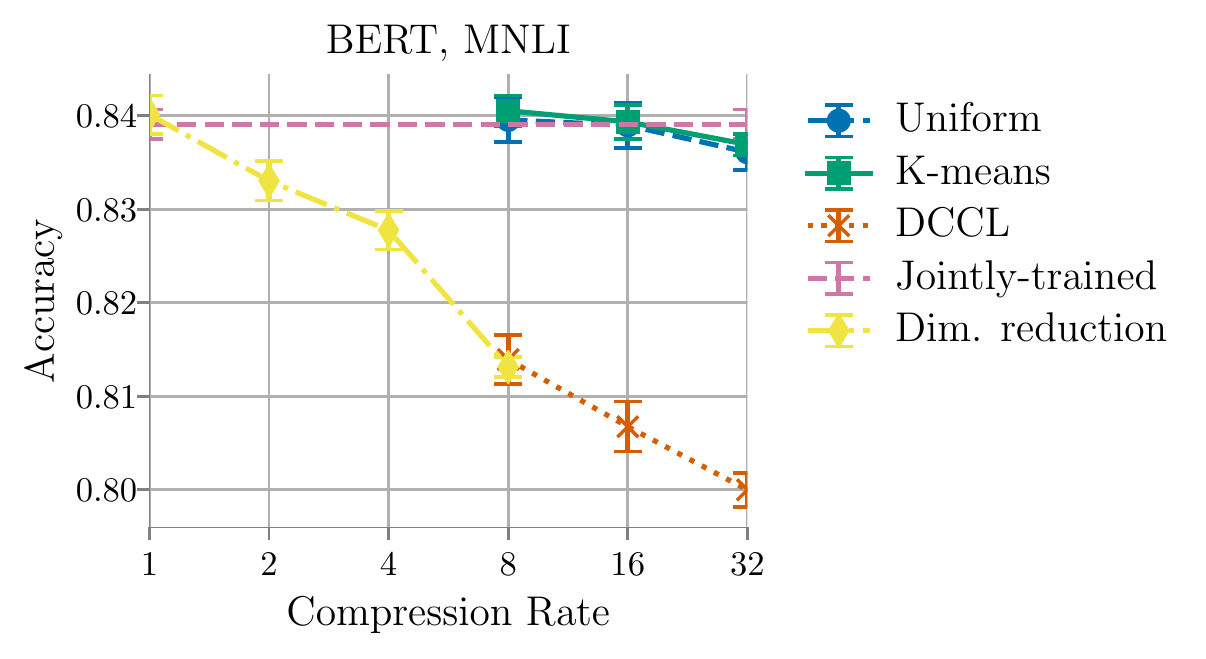} &
	\includegraphics[height=.325\linewidth]{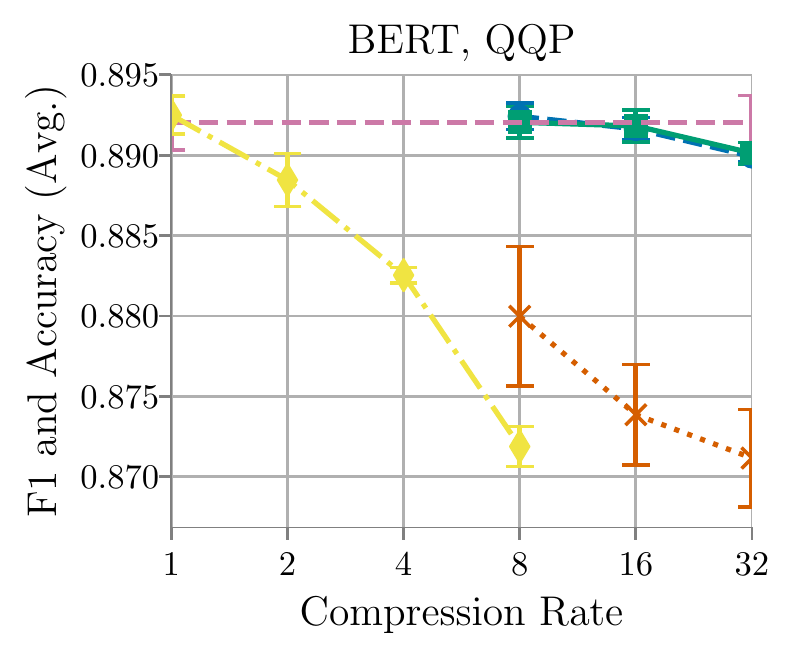} \\
	\includegraphics[height=.325\linewidth]{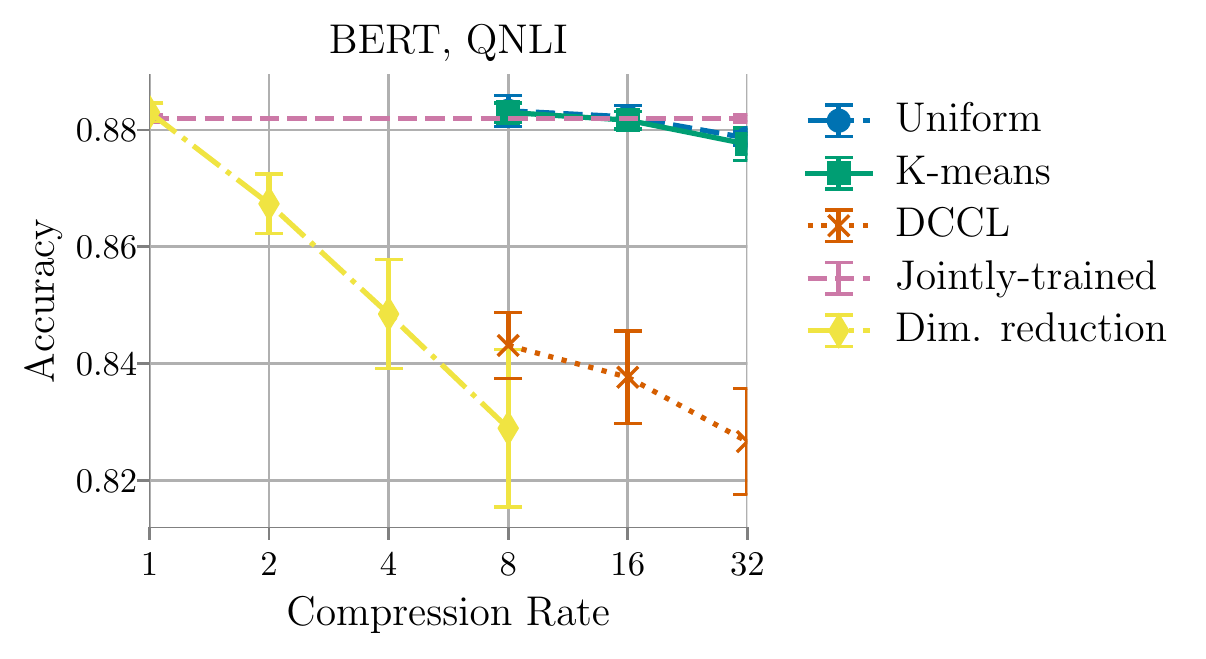} &
	\includegraphics[height=.325\linewidth]{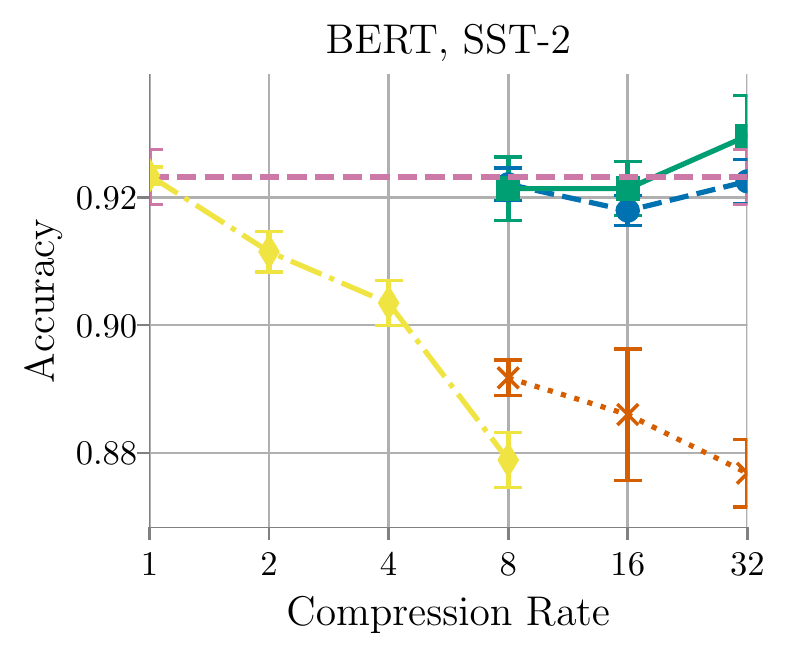} \\
	\includegraphics[height=.325\linewidth]{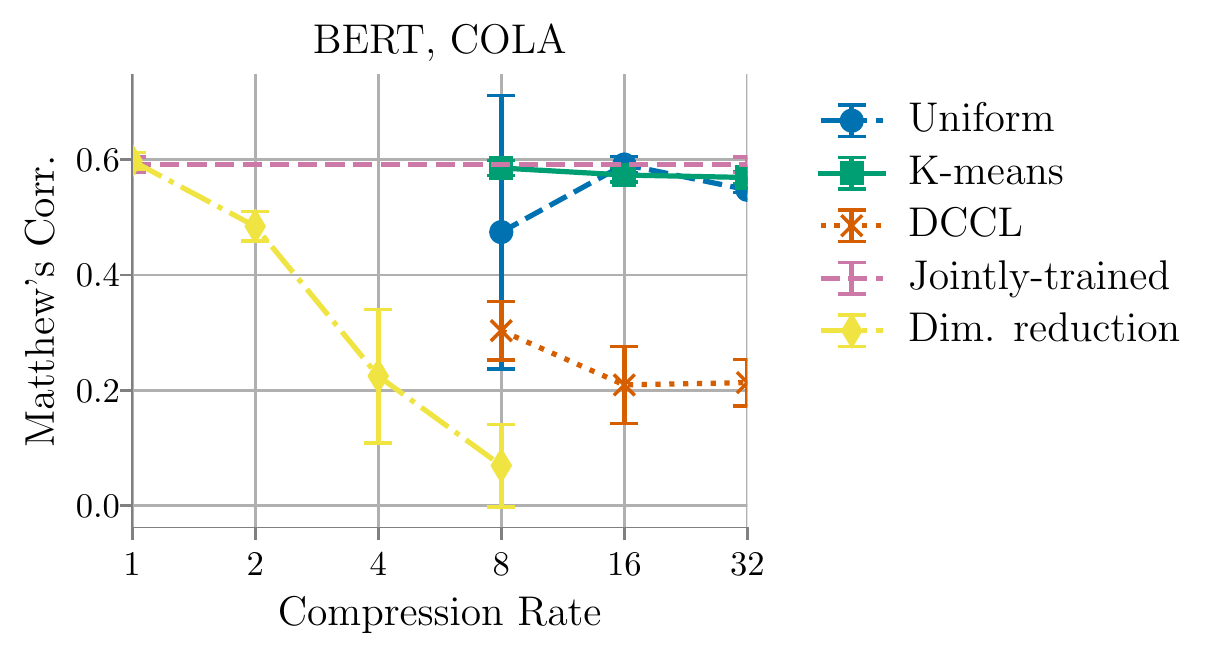} &
	\includegraphics[height=.325\linewidth]{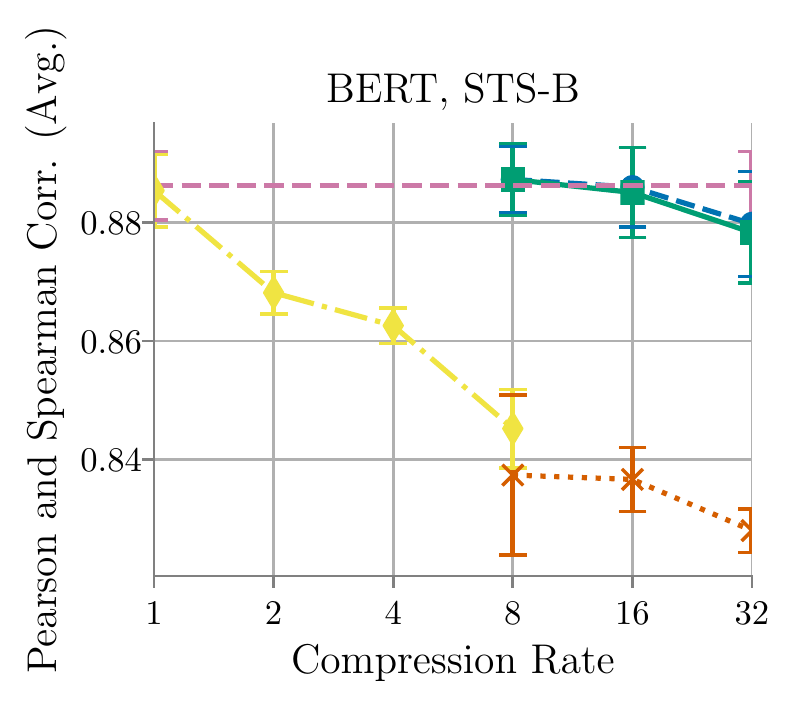} \\
	\includegraphics[height=.325\linewidth]{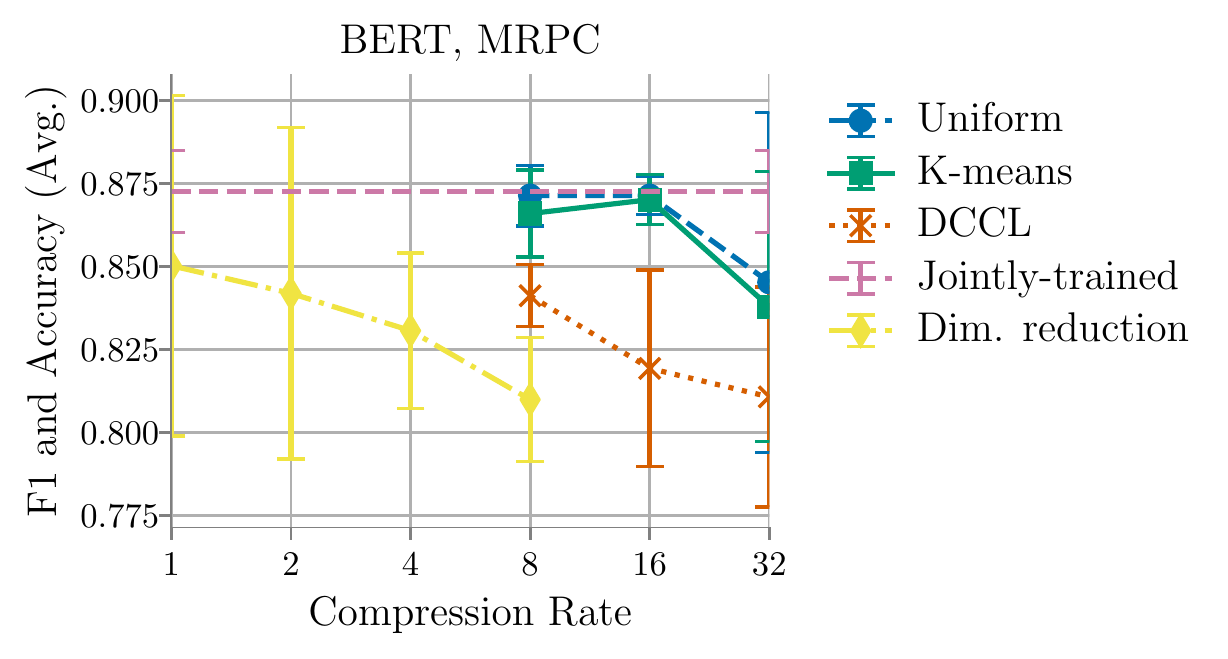} &
	\includegraphics[height=.325\linewidth]{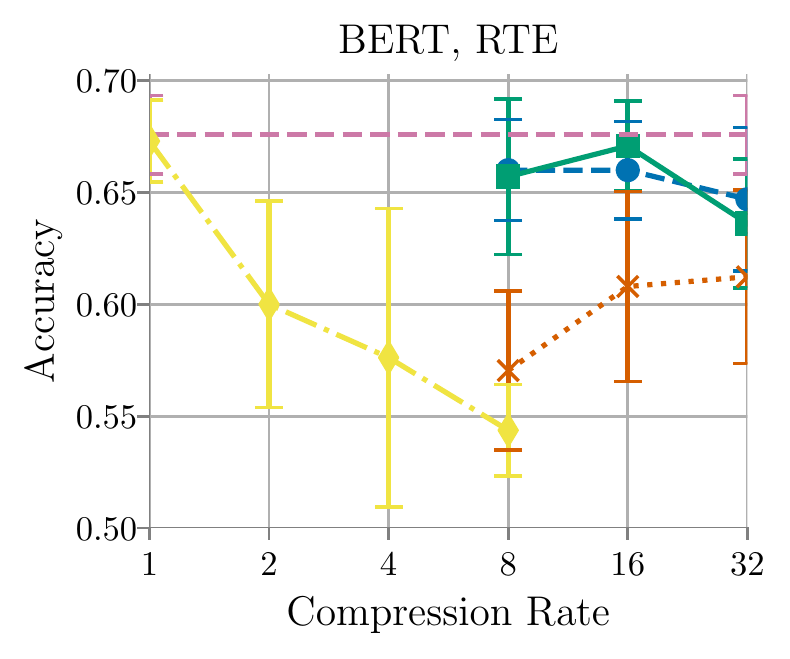} \\
\end{tabular}
\caption{
	\textbf{Downstream performance vs.\ compression rate for compressed BERT WordPiece embeddings.}
	We evaluate the downstream performance of the different compression methods on all GLUE tasks except WNLI (as discussed in Appendix~\ref{app:task}), across different compression rates.
	In these plots, the horizontal dashed pink line marks the performance of the BERT model fine-tuned with uncompressed and unfrozen WordPiece embeddings.
	We show average performance across five random seeds, with error bars indicating standard deviations.
}
\label{fig:bert_all_perf_vs_comp}
\end{figure}

\subsection{Downstream Performance vs.\ Compression Rate: Task-Specific Embeddings}
\label{app:task_specific}
The main focus of our work is on understanding the downstream performance of NLP models trained using compressed pre-trained word embeddings.
Recently, \citet{tensor19} proposed compressing word embedding matrices by parameterizing them as a product of tensors, and then learning the entries of these tensors jointly with the downstream NLP model in a task-specific, end-to-end fashion;
they call this method a \textit{Tensor Train (TT) decomposition} of the embedding matrix.
In this section, we show that we can apply uniform quantization to compressing task-specific word embeddings, and attain competitive downstream performance with the TT method.

\paragraph{Task details}
We consider the IWSLT'14 German-to-English translation task~\cite{bertoldi2014fbk}.
We use a six-layer Transformer~\cite{vaswani2017attention} based translation model for this task, and use the Fairseq~\cite{ott2019fairseq} implementation of this model.
In our experiments, across all compression rates and compression methods, we train for 50000 steps, and use the same model size with a 512-dimensional transformer hidden layer;
thus, the uncompressed embeddings are 512 dimensional.
We use the default training and inference hyperparameters for this German-to-English translation task in the Fairseq repository;
we list the values of these hyperparameters in Table~\ref{tab:hyperparam_translation}.
To be compatible with the Fairseq implementation, we run these experiments using PyTorch 1.0.

\paragraph{Compression method details}
We now provide details on how we apply the different compression methods in this task-specific setting.
Note that because TT can achieve compression rates greater than $32\times$, we run experiments both above and below this compression rate.

\begin{itemize}	
	\item \textbf{Dimensionality reduction}:
	We randomly initialize lower-dimensional embeddings, and train the parameters of these embeddings jointly with the rest of the model.
	\item \textbf{Uniform quantization}:
	We jointly train the full-precision embedding matrix and the transformer model for the first half of the training steps;
	we then compress this embedding matrix with uniform quantization (Algorithm~\ref{alg:smallfry}), and keep the embedding parameters fixed for the remainder of training.
	To attain a compression rate $c > 32$, we perform the first half of training using lower-dimensional embeddings (compression rate $c/32$), and then apply uniform quantization to these lower-dimensional embeddings with compression rate 32.
	\item \textbf{K-means}: We use the same protocol as we do for uniform quantization, but apply the k-means compression method in place of uniform quantization.
	\item \textbf{DCCL}: As we do for uniform quantization and k-means, we jointly train the full-precision embedding matrix and the transformer model for the first half of the training steps;
	we then compress the embeddings with DCCL, and perform the rest of training with the embedding parameters fixed.
	We grid search the dictionary size $k\in\{2, 4, 8, 16, 32, 64\}$ and the learning rate $\eta \in \{0.00003, 0.0001, 0.0003, 0.001, 0.003, 0.01\}$ for DCCL, and pick the combination of values which minimizes the embedding reconstruction error with respect to the embeddings generated in the first half of training.
	We show the optimal hyperparameters for each compression rate in Table~\ref{tab:dccl_hyper_translation}.
	\item \textbf{Tensor Train}: We use the TT method in the manner described in the original paper~\citep{tensor19}.
	For each compression rate, there are two hyperparameters that must be tuned---the number of tensor factors and the ``TT-rank'' of these factors.
	We consider $3$ and $4$ as the number of factors, following the values used in the paper~\citep{tensor19}, and pick the one which gives the lowest validation perplexity.
	Given the number of factors, the TT-rank of these factors is automatically determined for a given compression rate.
\end{itemize}

\begin{table*}
	\caption{The optimal learning rates $\eta$ and dictionary sizes $k$ for DCCL for compressing the task-specific embeddings for the IWSLT'14 translation task.}
	\vspace{0.05in}
	\small
	\centering
	\begin{tabular}{c | c | c | c | c | c | c}
		\toprule
		\makecell{Compression \\ rate} & $8\times$ & $16\times$ & $32\times$ & $64\times$ & $128\times$ & $256\times$ \\ 
		\hline
		$k$ &  $16$ & $16$ & $4$ & $4$ & $4$ & $2$ \\ 
		\hline
		$\eta$  & $0.0003$ & $0.0003$ & $0.001$ & $0.001$ & $0.001$ & $0.001$ \\ 
		\bottomrule
	\end{tabular}
	\vfigsp
	\label{tab:dccl_hyper_translation}
\end{table*}

\paragraph{Results}
In Figure~\ref{fig:translation_task} we plot the average test BLEU4 score across five random seeds for the compression methods described above, at a wide range of compression rates;
because for some random seeds the TT method attains very low BLEU scores, for the TT method we plot the BLEU4 score of the seed which performs \textit{best}.
We observe that the uniform quantization and k-means methods generally achieve better BLEU4 score than the TT method up to compression rate $128\times$,
and that the dimensionality reduction method performs significantly worse than the other methods beyond compression rate $8\times$.
These observations suggest that uniform quantization and k-means can be effectively applied to compress task-specific embeddings.

\begin{table}
\caption{The hyperparameters we use for our experiments on the IWSLT'14 German-to-English translation task.}
\centering
\begin{tabular}{c | c}
	\toprule
	Hyperparameter & Value \\
	\midrule
	\midrule
	Optimizer & Adam \\
	\midrule
	Adam decay rates for 1st moment $\beta_1$ & 0.9 \\
	\midrule
	Adam decay rates for 2nd moment $\beta_2$ & 0.999 \\
	\midrule
	Adam $\epsilon$ & $10^{-8}$ \\
	\midrule
	Training steps & 50000 \\
	\midrule
	Learning rate schedule & \makecell{ $10^{-7} + (5*10^{-4} - 10^{-7}) n / 4000$ \quad for step $n <= 4000 $ \\
										$5 * 10^{-4} * \sqrt{4000/n}$\quad for step $n > 4000$}\\
		\midrule
	Warmup initial learning rate & $10^{-7}$ \\
	\midrule
	Dropout & 0.3 \\
	\midrule
	Weight decay & 0.0001 \\
	\midrule 
	Beam search width & 5 \\
	\midrule
	Transformer hidden dimension & 512 \\
	\bottomrule
\end{tabular}
\label{tab:hyperparam_translation}	
\end{table}

\begin{figure}
\centering
	\includegraphics[width=0.5\linewidth]{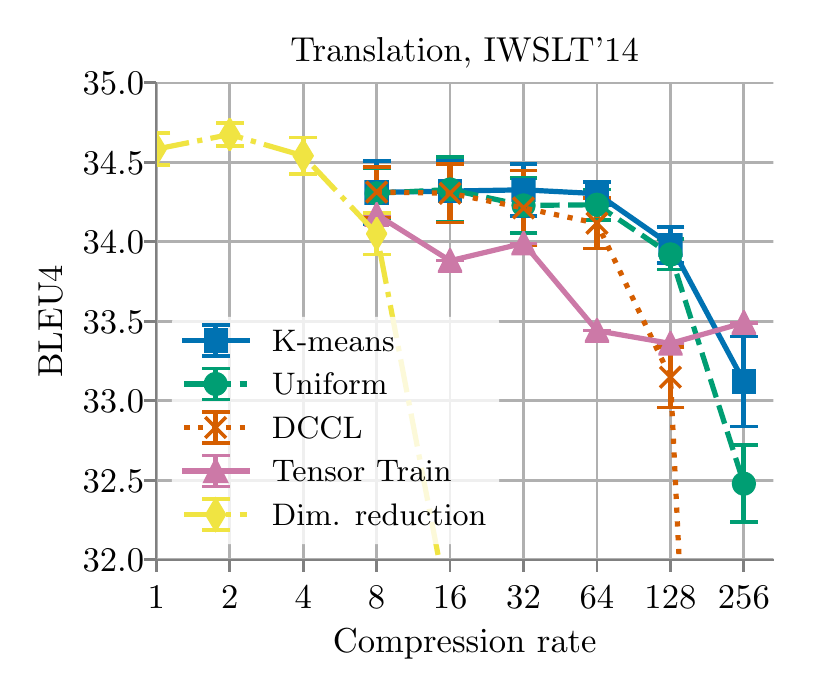}
	\caption{
		\textbf{Downstream performance vs.\ compression rate: task-specific embeddings.}
		We plot the average BLEU4 test performance for compressed task-specific embeddings on the IWSLT'14 German-to-English translation task across five random seeds (standard deviations indicated with error bars).
		note that because for some random seeds the TT method attains very low BLEU scores, in this plot we report the \textit{best} performance for the TT method across the five random seeds.
		We observe that the uniform quantization and k-means compression methods generally achieve better BLEU4 test performance than the TT method for compression rates up to $128\times$.
	}
	\label{fig:translation_task}
\end{figure}

\subsection{Dimension vs.\ Precision Trade-Off}
\label{app:dim_vs_prec}
We show that in the memory constrained setting, using low-precision high-dimensional embeddings typically outperforms using high-precision low-dimensional embeddings which occupy the same memory.
To demonstrate this, we train GloVe embeddings (details below) of dimensions $d \in \{25,50,100,200,400\}$, and then compress each of these embeddings using uniform quantization with precisions $b \in \{1,2,4,8,16,32\}$ (32 bits represents no compression).
We then train DrQA models \citep{drqa17} using all of these embeddings on the SQuAD dataset \citep{squad16}, and CNN models \citep{kim14} on the SST-1 sentiment analysis dataset.
In Figure~\ref{fig:perf_comp_wiki17only_det} we present the downstream performance of all of these models ($y$-axis) in terms of the memory occupied by the embeddings ($x$-axis).
As we can see, across a range of memory budgets, it is optimal to use low-precision (1 bit) high-dimensional embeddings, as this allows for using the largest dimension possible under that memory budget.

\paragraph{GloVe embedding training details}
We train GloVe embeddings on a full English Wikimedia dump on December 4, 2017 which was pre-processed by a fastText script~\footnote{\url{https://github.com/facebookresearch/fastText/blob/master/get-wikimedia.sh}} while keeping the letter cases and digits.
We use the GloVe Github repository\footnote{\url{https://github.com/stanfordnlp/GloVe}} for embedding training.
We use a vocabulary size of $400000$, a window size of 15, a learning rate of $0.05$, and train for 50 epochs.

\begin{figure}
	\centering
	\begin{tabular}{@{\hskip -0.0in}c@{\hskip -0.0in}c@{\hskip -0.0in}}
		\includegraphics[width=.425\linewidth]{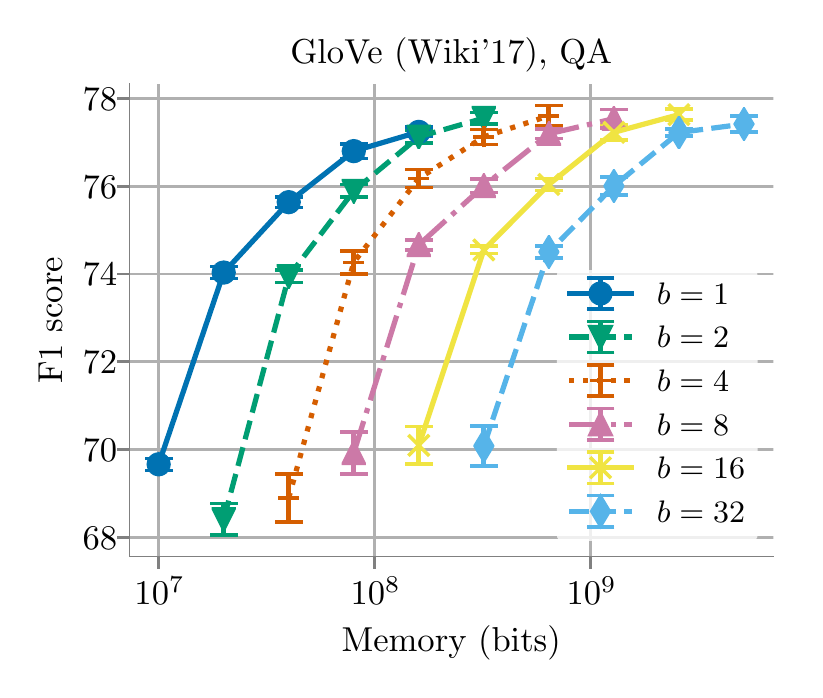} &
		\includegraphics[width=.425\linewidth]{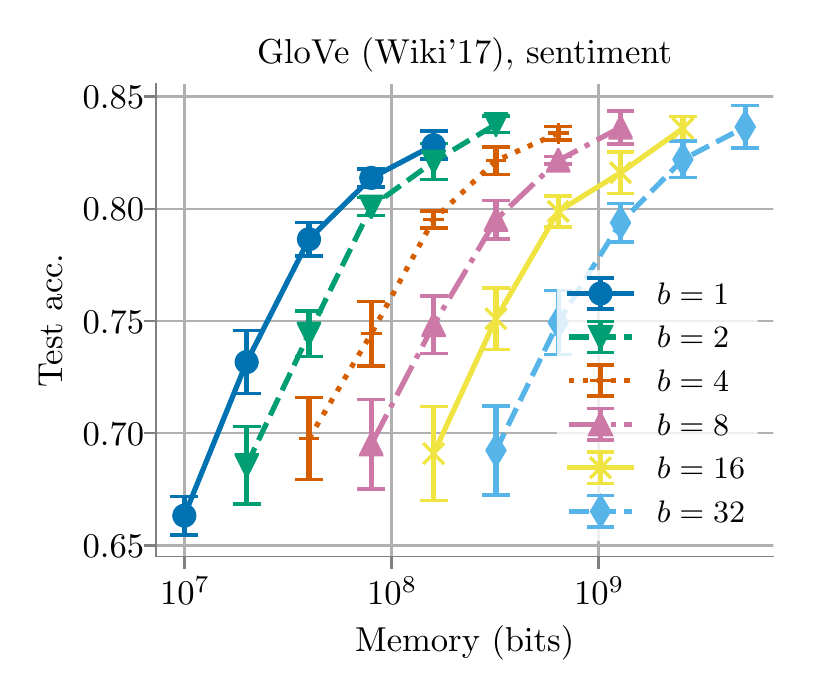}
	\end{tabular}
	\caption{
		\textbf{Dimension vs.\ precision trade-off.}
		We plot the downstream performance on question answering (SQuAD, left) and sentiment analysis (SST-2, right) of GloVe embeddings of dimensions $d \in \{25,50,100,200,400\}$ compressed with uniform quantization with precisions $b \in \{1,2,4,8,16,32\}$, as a function of the memory occupied by the embeddings.
		We observe that embeddings compressed with 1-bit precision typically demonstrate the best downstream performance under a range of memory budgets.
		We show average performance across five random seeds, with error bars indicating standard deviations.
	}
	\label{fig:perf_comp_wiki17only_det}
\end{figure}

\subsection{Eigenspace Overlap Score vs.\ Compression Rate}
\label{app:overlap_vs_comp}
In Figure~\ref{fig:overlap_vs_rate}, we plot the eigenspace overlap scores attained by the different compression methods at different compression rates for GloVe, fastText, and BERT WordPiece embeddings.
We observe that uniform quantization consistently attains higher or matching eigenspace overlap scores than the other compression methods.
Based on the theoretical connection between the eigenspace overlap score and downstream performance, this empirical observation helps explain the strong downstream performance of embeddings compressed with uniform quantization.

\begin{figure}
	\centering
	\begin{tabular}{@{\hskip -0.0in}c@{\hskip -0.0in}c@{\hskip -0.0in}c@{\hskip -0.0in}}
		\includegraphics[width=0.33\linewidth]{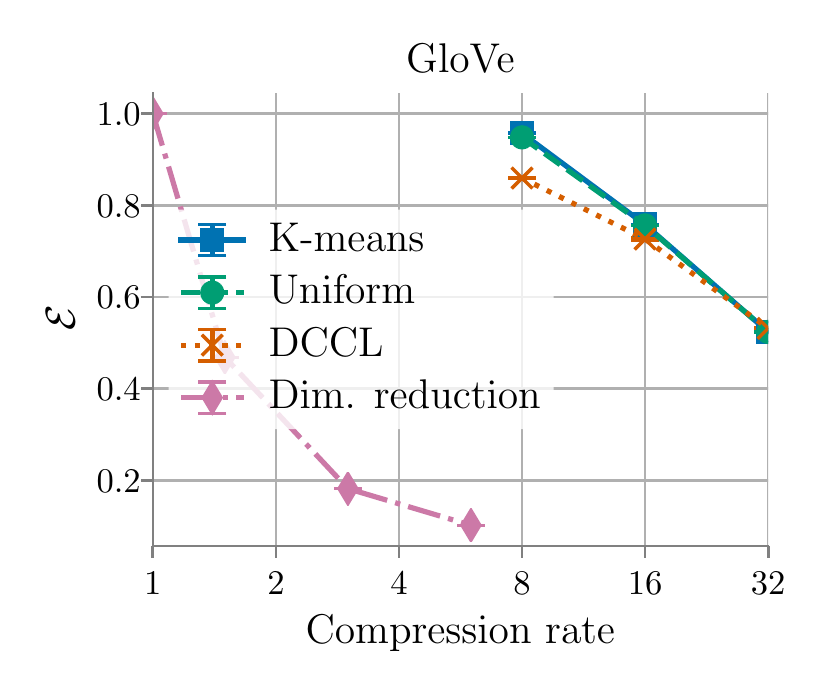} &
		\includegraphics[width=0.33\linewidth]{figures/fasttext1m_synthetics-large-dim_subspace-overlap-normalized_vs_compression_linx_det_No_TT.pdf} &
		\includegraphics[width=0.33\linewidth]{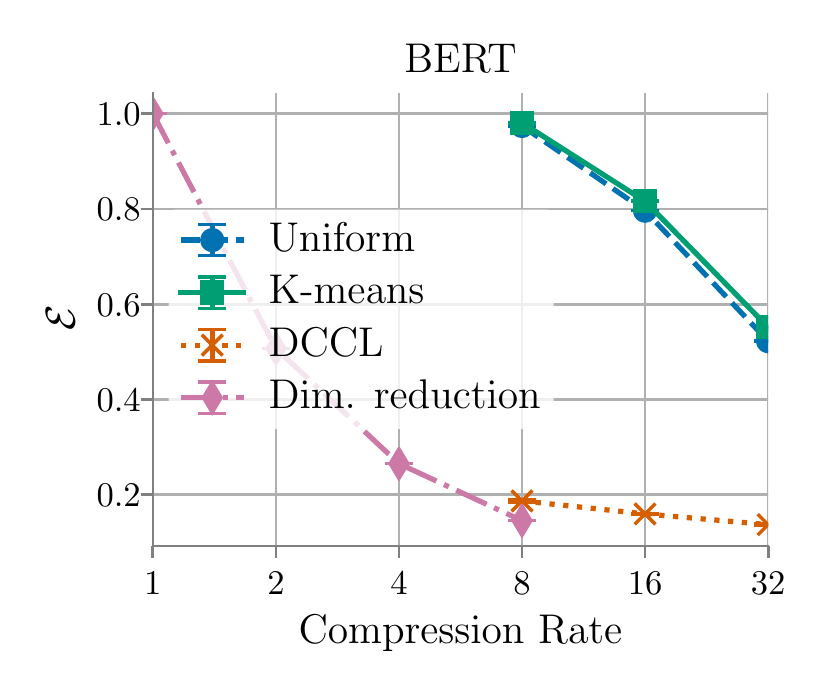}
	\end{tabular}
	\caption{\textbf{Eigenspace overlap score vs.\ compression rate.} 
		We plot the eigenspace overlap scores attained by different compression methods at different compression rates, for Glove, fastText, and BERT WordPiece embeddings.
		Across embedding types and compression rates, we observe that uniform quantization attains similar or higher eigenspace overlap scores than the other compression methods.
		We show average eigenspace overlap scores across five random seeds, with error bars indicating standard deviations.	
	}
	\label{fig:overlap_vs_rate}
\end{figure}

\subsection{Downstream Performance vs.\ Measures of Compression Quality}
\label{app:correlation}
We show across tasks and embedding types that the eigenspace overlap score correlates better with downstream performance than the other measures of compression quality.
In Figures~\ref{fig:glove_perf_vs_metric}, \ref{fig:fasttext_perf_vs_metric}, and~\ref{fig:bert_perf_vs_metric}, we plot the downstream performance ($y$-axis) of the compressed Glove, fastText, and BERT WordPiece embeddings (respectively) on a variety of tasks, as a function of the different measures of compression quality ($x$-axis).
For GloVe and fastText, we show performance on question answering (SQuAD) and on the largest sentiment analysis dataset (SST-1).
For BERT, we show performance on MNLI and QQP, the two largest GLUE datasets.
We see in these plots that the eigenspace overlap score generally aligns quite well with downstream performance, while the other measures of compression quality often do not.
To quantify this observation, we measure the Spearman correlations between the downstream performances of the embeddings we compressed, and the various measures of compression quality.
We include these correlations for all the sentiment analysis tasks for the Glove and fastText embeddings in Table~\ref{tab:sp_rank_no_TT}, and for all the GLUE tasks for the BERT WordPiece embeddings in Table~\ref{tab:sp_rank_bert_glue}.
From these results, we can see that across different tasks and embedding types, the eigenspace overlap score generally correlates better with downstream performance than the other measure of compression quality.

\begin{table}
	\caption{
		\textbf{Spearman correlations $\rho$ between compression quality measures and sentiment analysis performance}.
		Within each entry of the table, the correlations are presented in terms of `GloVe $|\rho|$ $/$ fastText $|\rho|$.'
		Note that we present the absolute values of the correlation coefficients, with
		higher absolute values indicating stronger correlation.
	}
	\small
	\centering
	\begin{tabular}{@{\hskip -0.0in}c@{\hskip 0.05in}|@{\hskip 0.05in}c | c | c | c | c | c | c@{\hskip -0.0in}}
		\toprule
		& MR & SST-1 & SST-2 & Subj & TREC & CR & MPQA \\
		\midrule
PIP loss   &  $0.36/0.03$  &  $0.46/0.25$  &  $0.29/0.21$  &  $0.30/0.13$  &  $0.14/0.05$  &  $0.10/0.03$  &  $0.49/0.18$  \\ 
$\Delta$    &  $0.29/0.39$  &  $0.33/0.29$  &  $0.39/0.40$  &  $0.26/0.16$  &  $0.33/0.29$  &  $0.11/0.34$  &  $0.41/0.40$  \\ 
$\Delta_{\max}$    &  $\bm{0.39}/0.41$  &  $0.51/0.60$  &  $0.41/0.62$  &  $\bm{0.32}/0.49$  &  $0.23/0.30$  &  $0.12/0.40$  &  $\bm{0.60}/0.39$  \\ 
$1 - \eigover$    &  $0.29/\bm{0.59}$  &  $\bm{0.75/0.73}$  &  $\bm{0.72/0.83}$  &  $0.27/\bm{0.58}$  &  $\bm{0.49/0.32}$  &  $\bm{0.40/0.55}$  &  $\bm{0.60/0.55}$  \\ 
	\bottomrule
	\end{tabular}
	\label{tab:sp_rank_no_TT}
\end{table}

\begin{table*}
	\caption{
		\textbf{Spearman correlations $\rho$ between compression quality measures and GLUE task performance}.
		Note that we present the absolute values of the correlation coefficients, with
		higher absolute values indicating stronger correlation.
	}
	\small
	\centering
	\begin{tabular}{c | c | c | c | c | c | c | c | c }
		\toprule
& MNLI & QQP & QNLI & SST-2 & CoLA & STS-B & MRPC & RTE \\
		\midrule
PIP loss & $0.45$ & $0.45$ & $0.43$ & $0.18$ & $0.32$ & $0.41$ & $0.28$ & $0.22$ \\
$\Delta$ & $0.44$ & $0.36$ & $0.36$ & $0.25$ & $0.25$ & $0.43$ & $0.23$ & $0.12$ \\
$\Delta_{\max}$ & $0.86$ & $0.86$ & $0.85$ & $0.67$ & $0.75$ & $0.84$ & $0.59$ & $0.58$ \\
1 - $\eigover$ & $\mathbf{0.92}$ & $\mathbf{0.93}$ & $\mathbf{0.92}$ & $\mathbf{0.83}$ & $\mathbf{0.86}$ & $\mathbf{0.87}$ & $\mathbf{0.62}$ & $\mathbf{0.66}$ \\
	\bottomrule
\end{tabular}
\label{tab:sp_rank_bert_glue}
\end{table*}

\begin{figure}
	\centering
	\begin{tabular}{c c}
		\includegraphics[width=.4\linewidth]{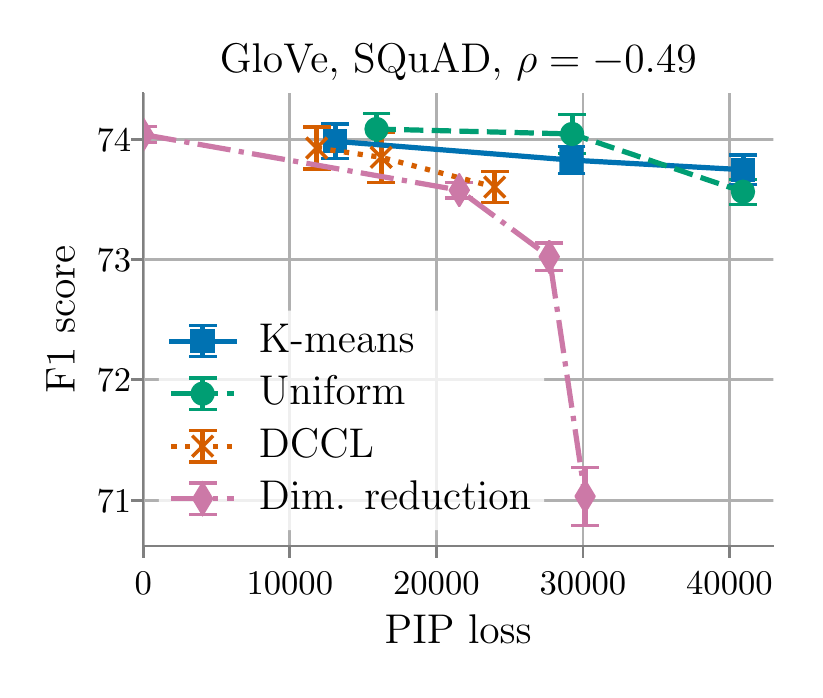} &
		\includegraphics[width=.4\linewidth]{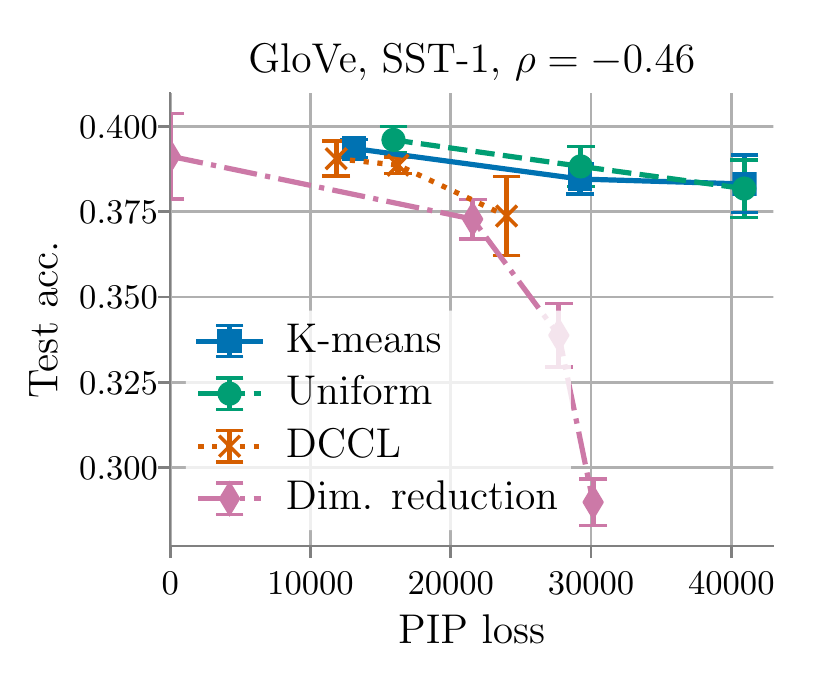} \\
		\includegraphics[width=.4\linewidth]{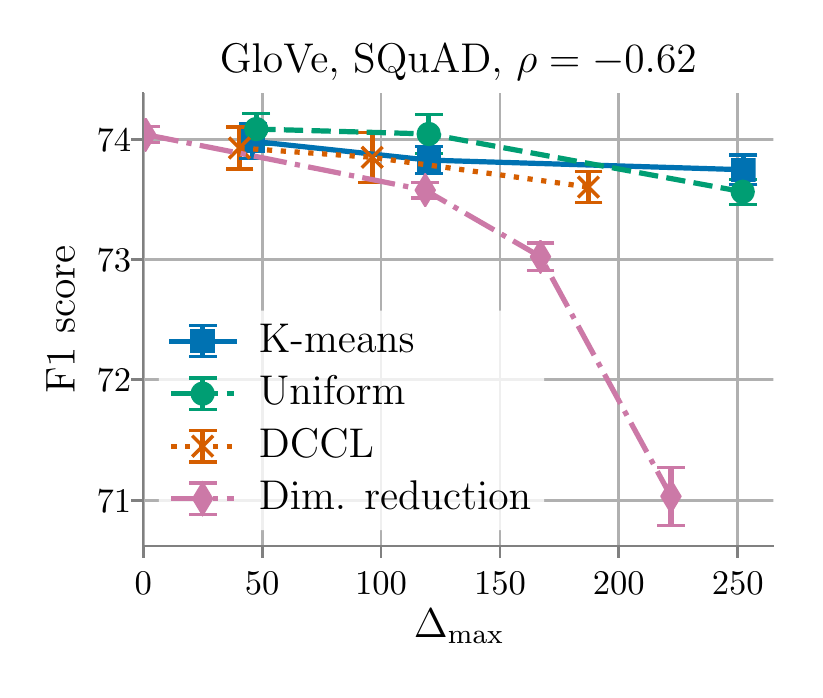} &
		\includegraphics[width=.4\linewidth]{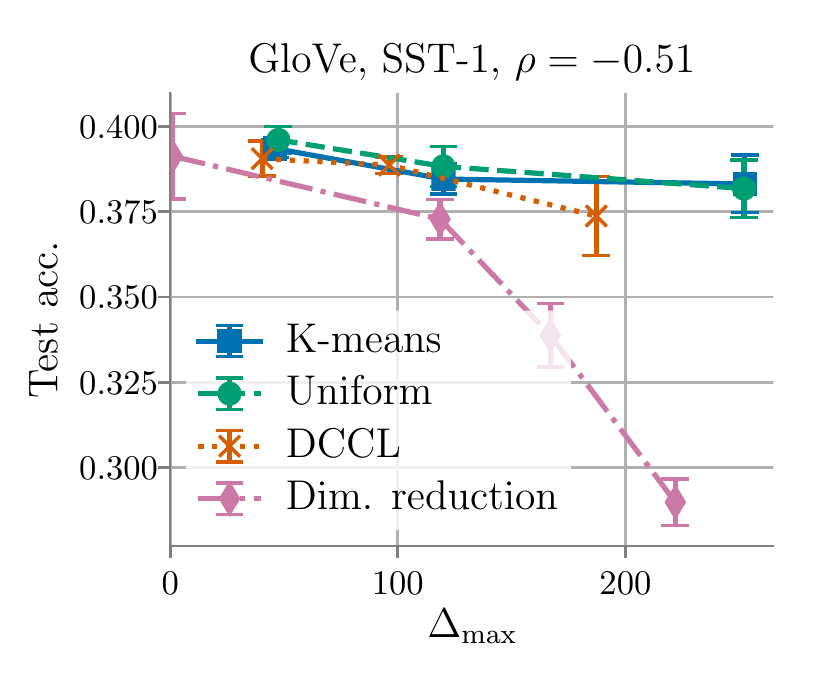} \\ 
		\includegraphics[width=.4\linewidth]{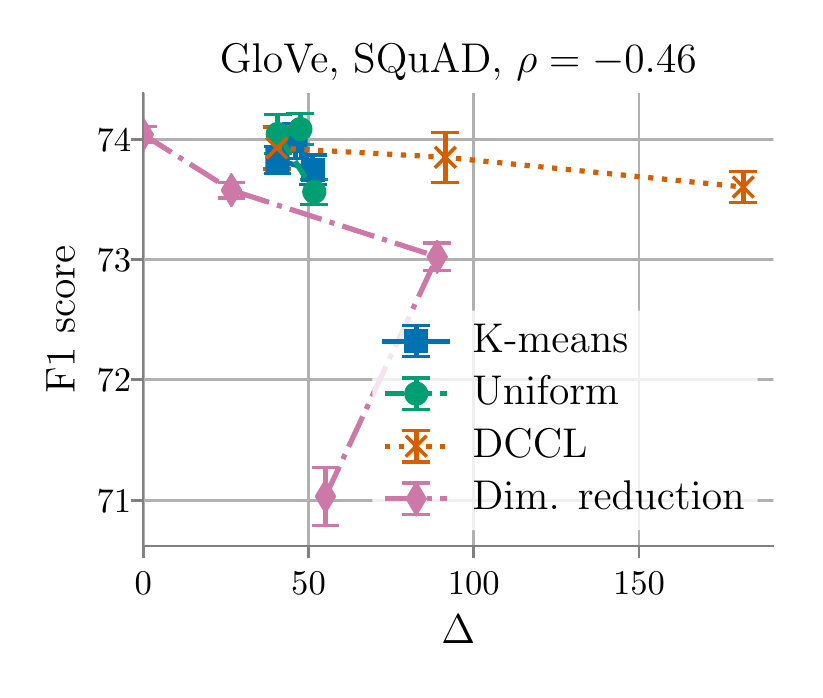} &
		\includegraphics[width=.4\linewidth]{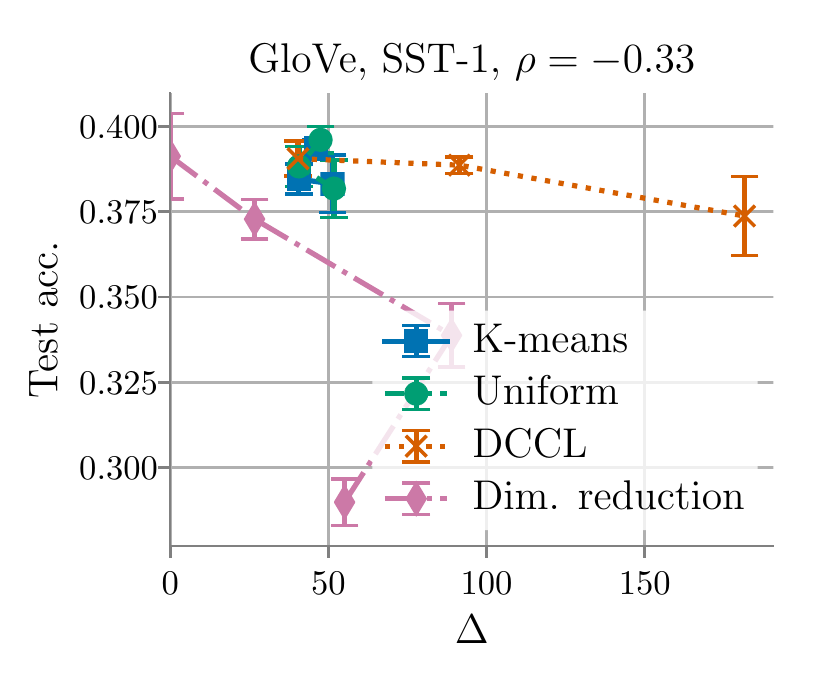}  \\
		\includegraphics[width=.4\linewidth]{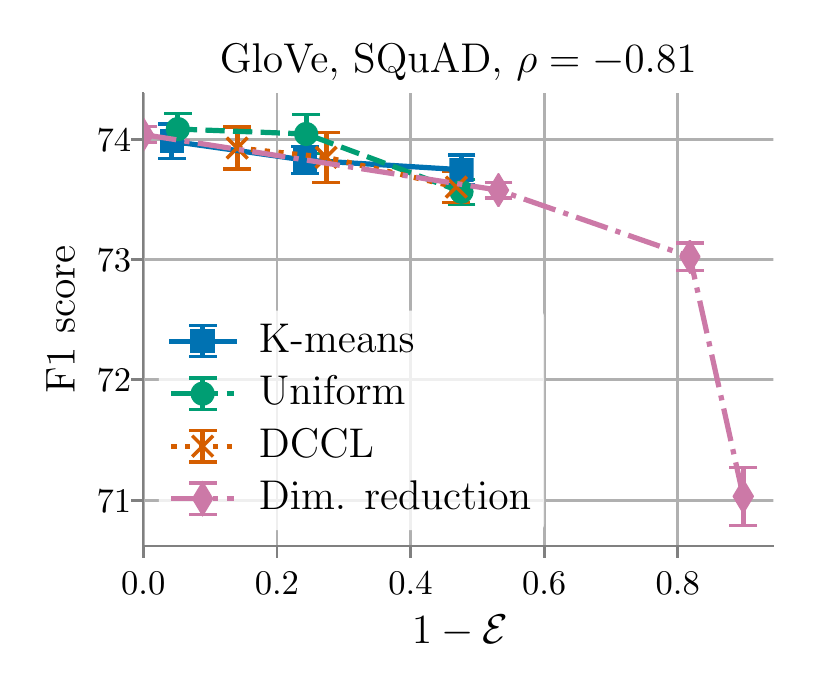} &
		\includegraphics[width=.4\linewidth]{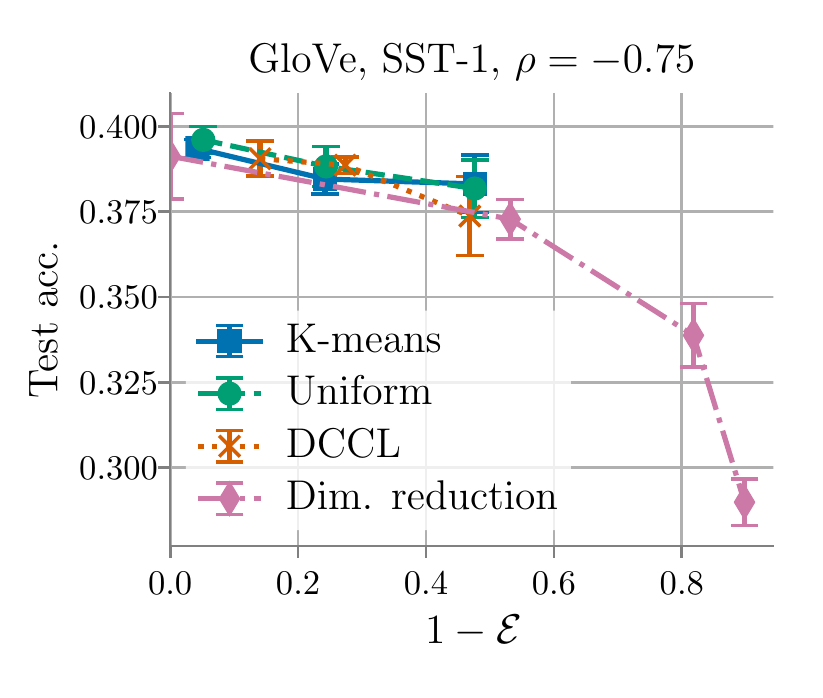} \\
	\end{tabular}
	\caption{
		\textbf{Downstream performance vs.\ measures of compression quality (GloVE embeddings).}
		We plot the performance of compressed GloVe embeddings on question answering (SQuAD, left column) and sentiment analysis (SST-1, right column), in terms of the different measures of compression quality for these embeddings.
		We can see that the eigenspace overlap score $\eigover$ generally aligns better with downstream performance than the other measures of compression quality.
		To quantify this, in the title of each plot we include the Spearman correlation $\rho$ between downstream performance and the measure of compression quality for that plot.
		We can see that the eigenspace overlap score attains the strongest correlations with downstream performance, as it has the largest values for $|\rho|$.
		}
	\label{fig:glove_perf_vs_metric}
\end{figure}

\begin{figure}
	\centering
	\begin{tabular}{c c}
		\includegraphics[width=.4\linewidth]{figures/fasttext1m_qa_best-f1_vs_gram-large-dim-frob-error_linx_det_No_TT.pdf} &
		\includegraphics[width=.4\linewidth]{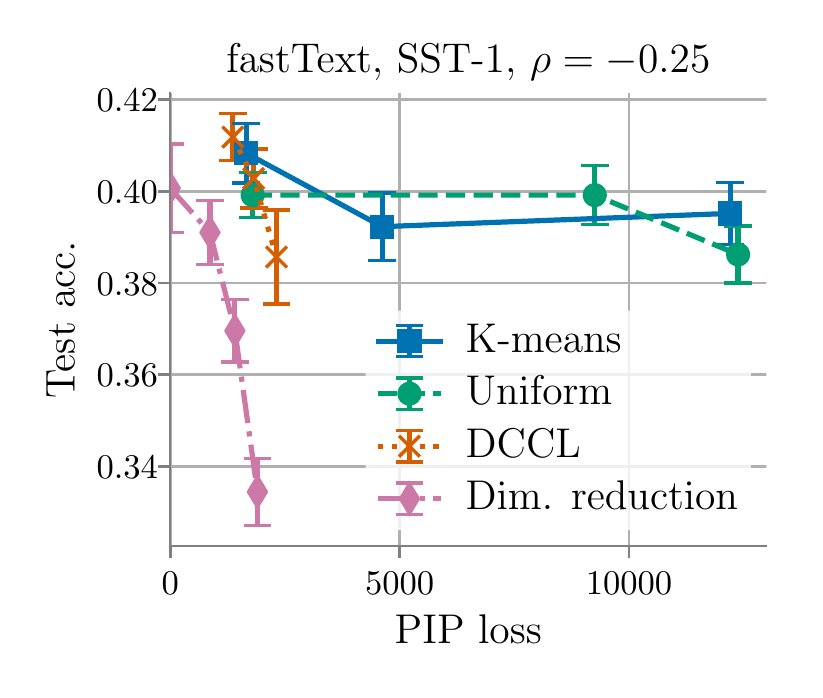} \\
		\includegraphics[width=.4\linewidth]{figures/fasttext1m_qa_best-f1_vs_gram-large-dim-deltamax-2_linx_det_No_TT.pdf} &
		\includegraphics[width=.4\linewidth]{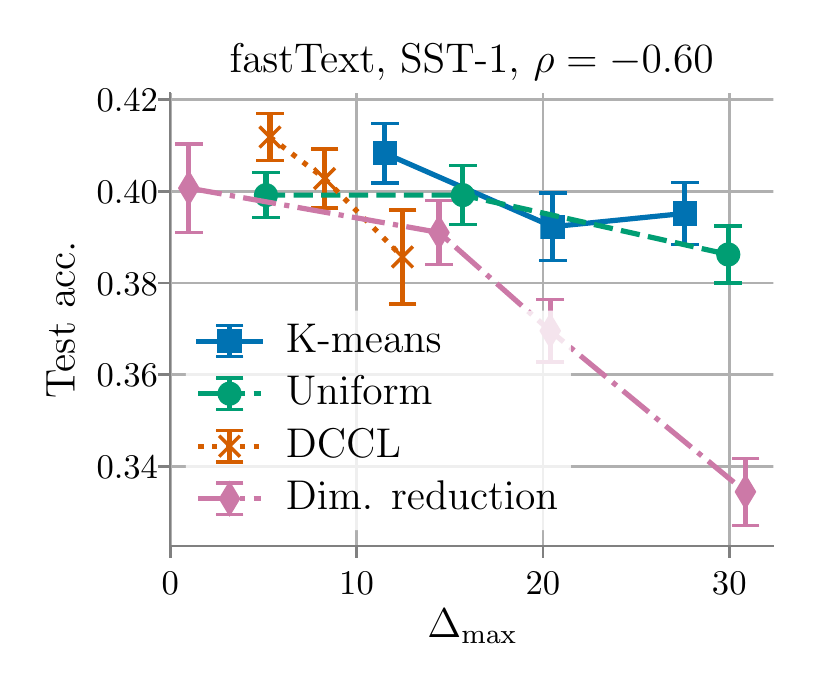} \\
		\includegraphics[width=.4\linewidth]{figures/fasttext1m_qa_best-f1_vs_gram-large-dim-deltamax-avron-2_linx_det_No_TT.pdf} &
		\includegraphics[width=.4\linewidth]{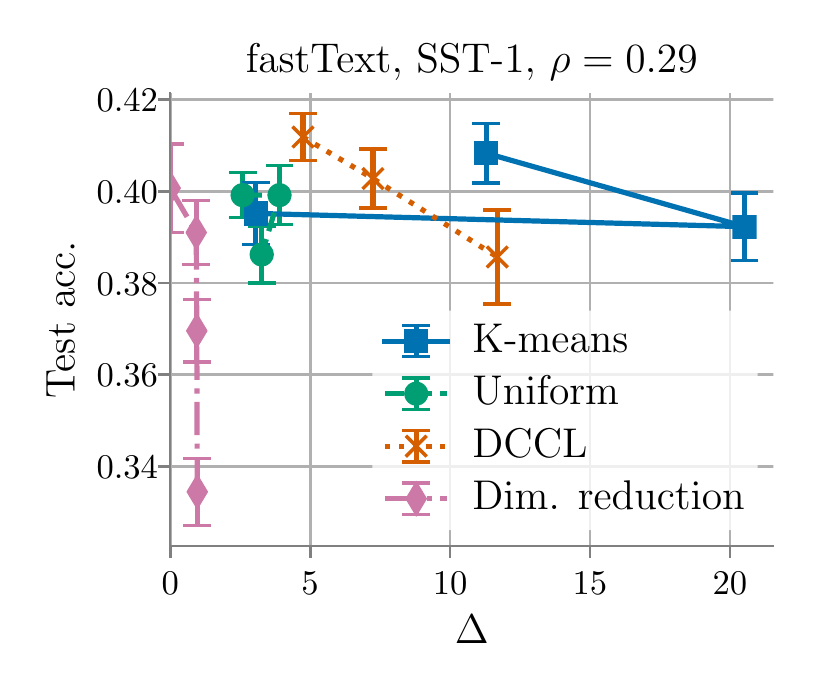} \\
		\includegraphics[width=.4\linewidth]{figures/fasttext1m_qa_best-f1_vs_subspace-dist-normalized_linx_det_No_TT.pdf} &
		\includegraphics[width=.4\linewidth]{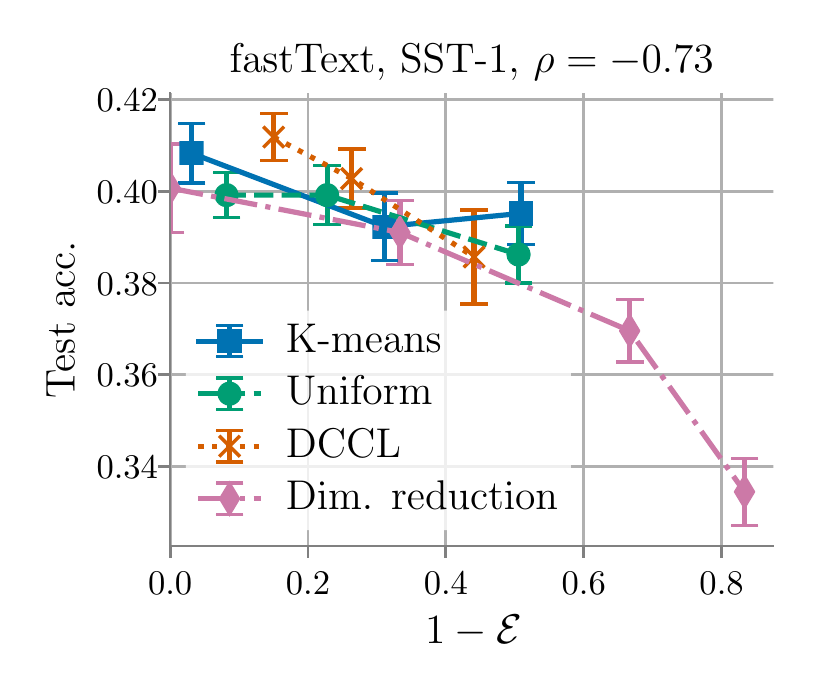} \\
	\end{tabular}
	\caption{
	\textbf{Downstream performance vs.\ measures of compression quality (fastText embeddings).}
	We plot the performance of compressed fastText embeddings on question answering (SQuAD, left column) and sentiment analysis (SST-1, right column), in terms of the different measures of compression quality for these embeddings.
	We can see that the eigenspace overlap score $\eigover$ generally aligns better with downstream performance than the other measures of compression quality.
	To quantify this, in the title of each plot we include the Spearman correlation $\rho$ between downstream performance and the measure of compression quality for that plot.
	We can see that the eigenspace overlap score attains the strongest correlations with downstream performance, as it has the largest values for $|\rho|$.
}
	\label{fig:fasttext_perf_vs_metric}
\end{figure}

\begin{figure}
	\centering
	\begin{tabular}{c c}
		\includegraphics[width=.4\linewidth]{figures/mnli_GlueCompressFiveSeedsV2_gram-large-dim-frob-error_vs_acc-avg-mm-max.pdf} &
		\includegraphics[width=.4\linewidth]{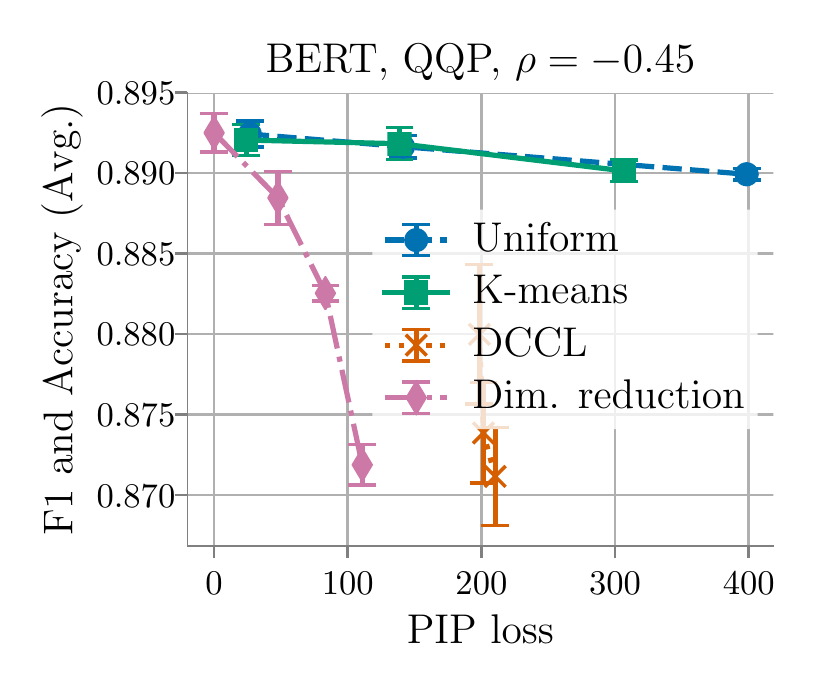} \\
		\includegraphics[width=.4\linewidth]{figures/mnli_GlueCompressFiveSeedsV2_gram-large-dim-deltamax-2_vs_acc-avg-mm-max.pdf} &
		\includegraphics[width=.4\linewidth]{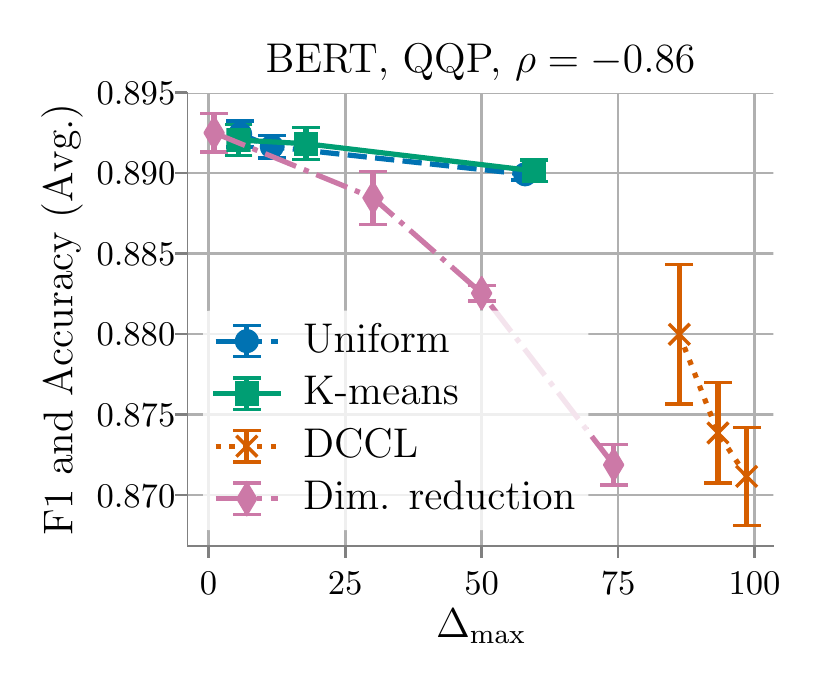} \\
		\includegraphics[width=.4\linewidth]{figures/mnli_GlueCompressFiveSeedsV2_gram-large-dim-deltamax-avron-2_vs_acc-avg-mm-max.pdf} &
		\includegraphics[width=.4\linewidth]{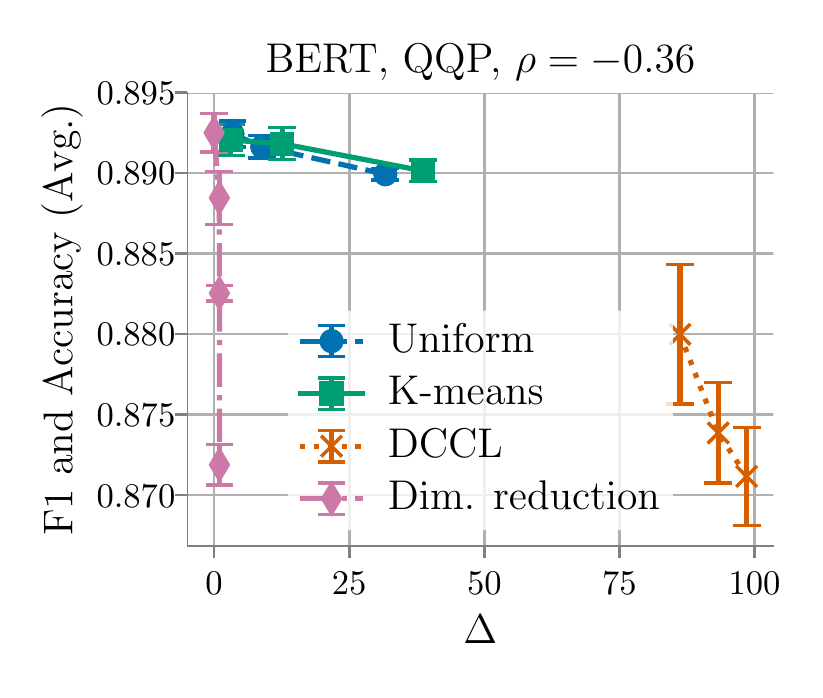} \\
		\includegraphics[width=.4\linewidth]{figures/mnli_GlueCompressFiveSeedsV2_1-overlap_vs_acc-avg-mm-max.pdf} &
		\includegraphics[width=.4\linewidth]{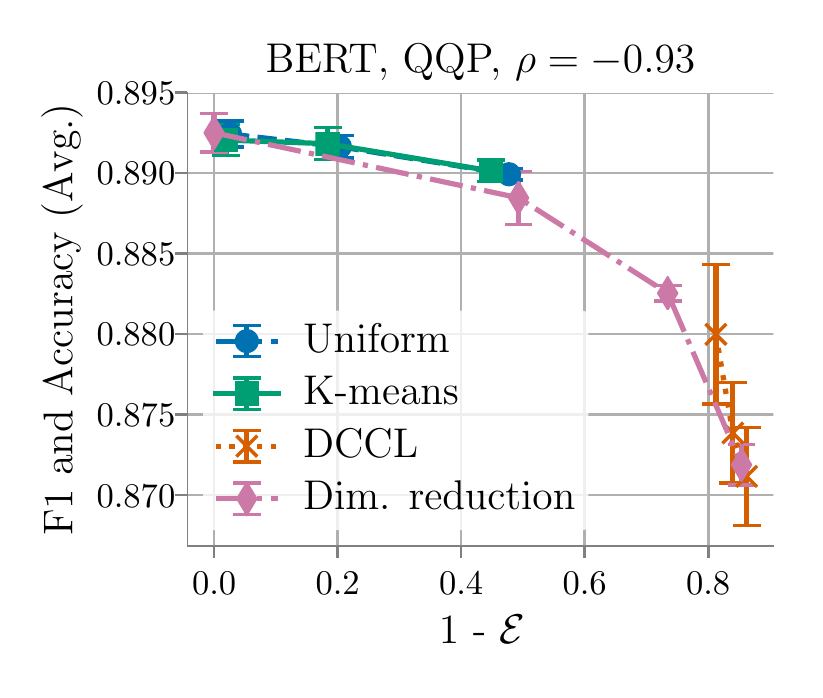} \\
	\end{tabular}
	\caption{
	\textbf{Downstream performance vs.\ measures of compression quality (BERT WordPiece embeddings).}
	We plot the performance of compressed BERT WordPiece embeddings on the two largest GLUE datasets (MNLI, left column; QQP, right column) in terms of the different measures of compression quality for these embeddings.
	We can see that the eigenspace overlap score $\eigover$ generally aligns better with downstream performance than the other measures of compression quality.
	To quantify this, in the title of each plot we include the Spearman correlation $\rho$ between downstream performance and the measure of compression quality for that plot.
	We can see that the eigenspace overlap score attains the strongest correlations with downstream performance, as it has the largest values for $|\rho|$.
}
	\label{fig:bert_perf_vs_metric}
\end{figure}

\subsection{Downstream Performance vs.\ $1/(1-\Delta_1)$ and $\Delta_2$}
\label{app:d1_d2}
We show two main results:
First, we show examples of compressed embeddings that have large values of $\frac{1}{1-\Delta_1}$ or $\Delta_2$, but which still attain strong downstream performance;
because large values of $\frac{1}{1-\Delta_1}$ or $\Delta_2$ imply large worst-case bounds on the generalization error of the embeddings \citep{lprff18}, these observations demonstrate that the worst-case bounds are too loose to explain the empirical results.
Second, we show that the eigenspace overlap score generally attains stronger correlation with downstream performance than both $1/(1-\Delta_1)$ and $\Delta_2$.

For the first result, we can see in Figure~\ref{fig:glove_perf_vs_metric_delta12} that there are points with large $\frac{1}{1-\Delta_1}$, for example, but where the downstream performance is still quite close to the full-precision embedding performance.
For the second result, we show in Table~\ref{tab:sp_rank_main_body_no_TT_delta12} that the eigenspace overlap score attains higher Spearman correlation with downstream performance than $1/(1-\Delta_1)$ and $\Delta_2$ across a range of tasks.

\begin{figure}
	\centering
	\begin{tabular}{c c}
		\includegraphics[width=.4\linewidth]{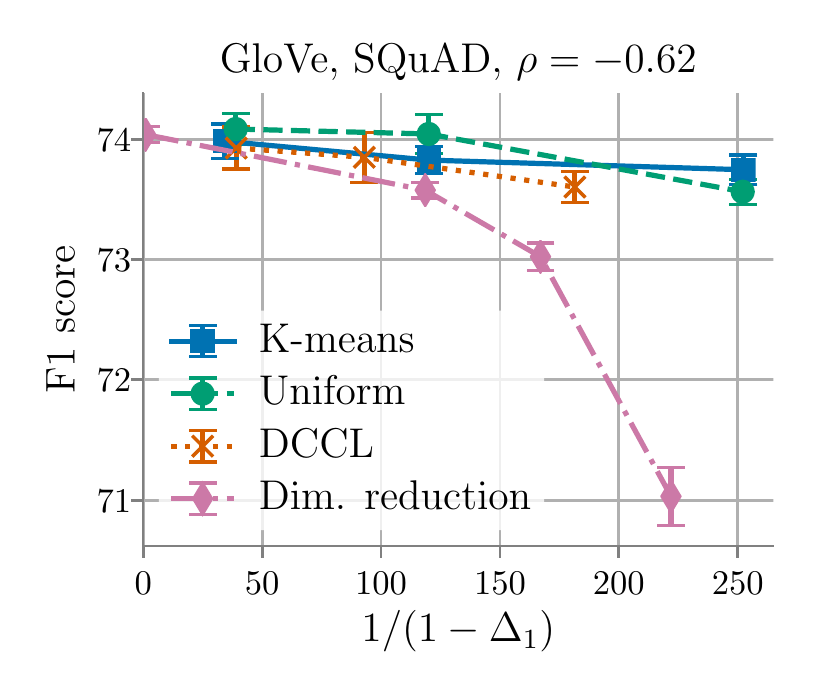} &
		\includegraphics[width=.4\linewidth]{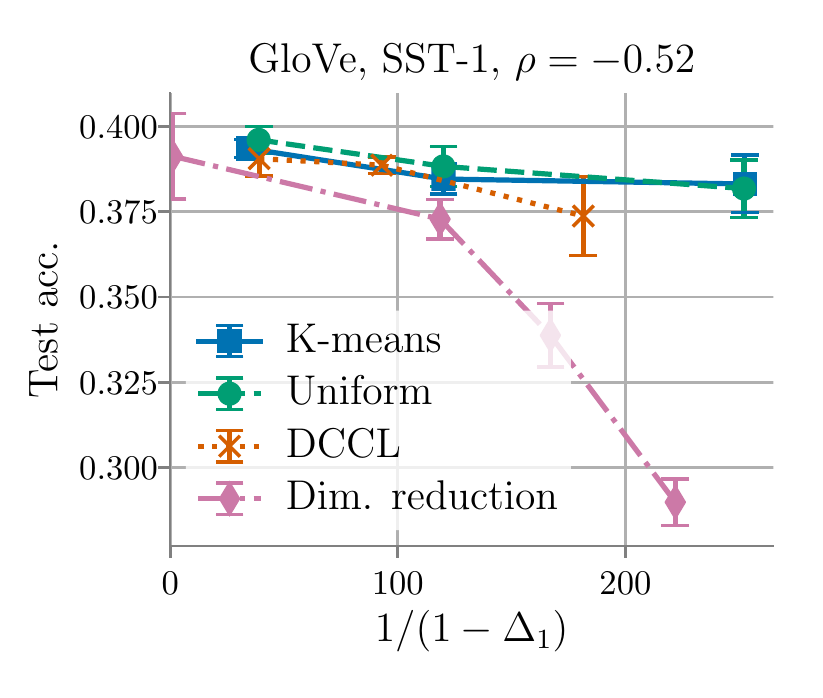} \\
		\includegraphics[width=.4\linewidth]{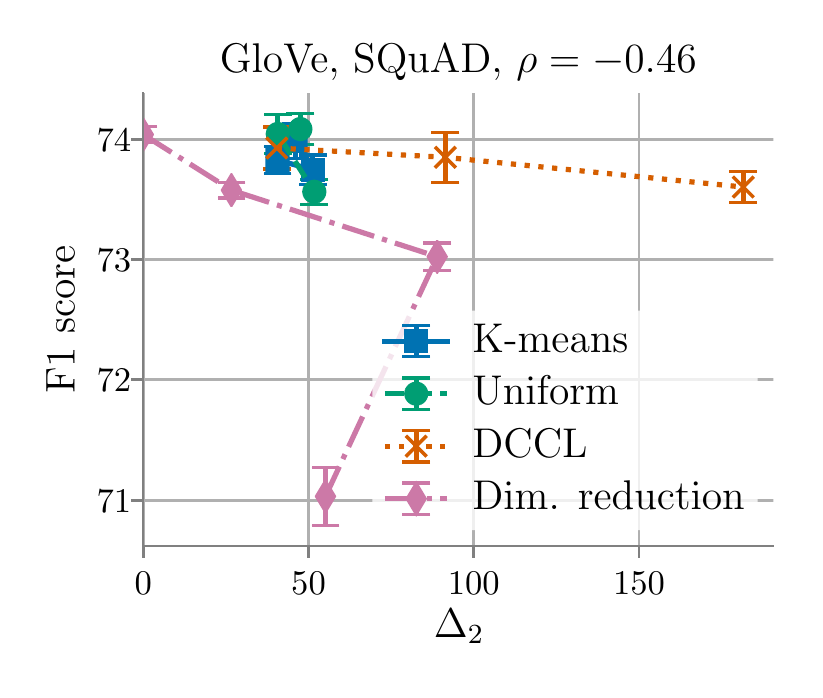} &
		\includegraphics[width=.4\linewidth]{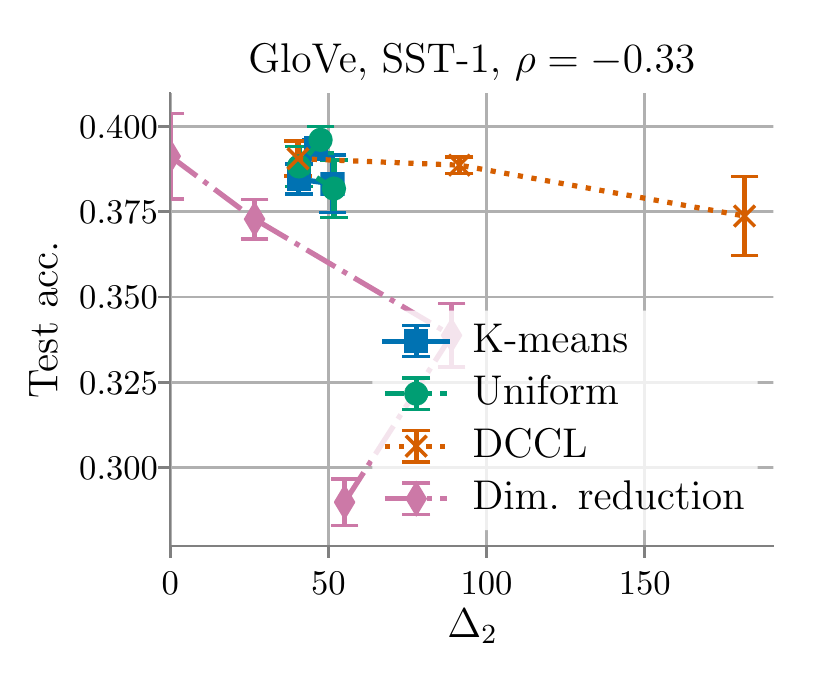} \\
	\end{tabular}
	\caption{
		\textbf{Downstream performance vs.\ $1/(1-\Delta_1)$ and $\Delta_2$ (GloVe embeddings).}
		We plot the performance of compressed GloVe embeddings on question answering (SQuAD, left column) and sentiment analysis (SST-1, right column), in terms of the $1/(1-\Delta_1)$ and $\Delta_2$ measures of compression quality.
		We can see that there are compressed embeddings with large values $\frac{1}{1-\Delta_1}$, but where the downstream performance is still quite close to the full-precision embedding performance.
		Additionally, we can see that from a visual perspective, these compression quality measures do not align very well with downstream performance.
		For example, the dimensionality reduction embeddings with compression ratio $6\times$ attain smaller $1/(1-\Delta_1)$, but worse F1 score on SQuAD, than uniform quantization embeddings with compression ration $32\times$.
}
	\label{fig:glove_perf_vs_metric_delta12}
\end{figure}

\begin{table*}
	\caption{
		\textbf{Spearman correlations between compression quality measures and downstream performance}.
		On the SQuAD (question answering), SST-1 (sentiment analysis), MNLI (natural language inference), and QQP (question pair matching) tasks,
		the eigenspace overlap score $\eigover$ attains higher Spearman correlation (absolute value) with downstream performance than $1/(1-\Delta_1)$ and $\Delta_2$.
	}
	\vspace{0.05in}
	\small
	\centering
	\begin{tabular}{c | c | c | c | c | c | c }
		\toprule
		Dataset & \multicolumn{2}{|c|}{SQuAD} & \multicolumn{2}{|c|}{SST-1} & \multicolumn{1}{|c|}{MNLI}  & \multicolumn{1}{|c}{QQP} \\
		\midrule
		Embedding & GloVe & fastText & GloVe & fastText & BERT WordPiece & BERT WordPiece \\ 
		\midrule
$1/(1 - \Delta_1)$    &  $0.62$&$0.80$  &  $0.52$&$0.65$  &  $0.87$ & $0.87$  \\ 
$\Delta_2$    &  $0.46$&$0.48$  &  $0.33$&$0.44$  &  $0.30$ & $0.20$  \\
$1 - \eigover$    &  $\mathbf{0.81}$&$\mathbf{0.91}$  &  $\mathbf{0.75}$&$\mathbf{0.73}$  &  $\mathbf{0.92}$  &  $\mathbf{0.93}$  \\
		\bottomrule
	\end{tabular}
	\vfigsp
	\label{tab:sp_rank_main_body_no_TT_delta12}
\end{table*}

\subsection{Downstream Performance vs.\ $\Delta_{\max}$ and $\Delta$ with different $\lambda$ values}
\label{sec:delta_with_different_lambda}
In Section~\ref{sec:experiments}, we showed across numerous tasks and embedding types that the eigenspace overlap score typically attains stronger correlation with downstream performance than the other measures of compression quality, including $\Delta_{\max}$ and $\Delta$.
For these results, we computed $\Delta_{\max}$ and $\Delta$ with the parameter $\lambda$ being the smallest non-zero eigenvalue of the Gram matrix of the uncompressed embeddings (see Section~\ref{subsec:existing_metrics} for a review of how $\lambda$ is used when calculating these measures).
We now show these results are robust to the choice of $\lambda$.
Specifically, in Table~\ref{tab:sp_rank_main_body_no_TT_different_lambda} we show the Spearman correlations attained by $\Delta_{\max}$ and $\Delta$ with different $\lambda$ values.
Letting $\lambda_{\min}$ and $\lambda_{\max}$ be the smallest and largest eigenvalues of the uncompressed embedding Gram matrix, we consider $\lambda \in \{\lambda_{\min} / 100, \lambda_{\min} / 10, \lambda_{\min}, \lambda_{\min} \times 10, \lambda_{\min} \times 100, \lambda_{\max} \}$ for this table.
We observe that the eigenspace overlap score attains stronger correlation with downstream performance across the tasks and embedding types in this table than $\Delta$ and $\Delta_{\max}$, across all the $\lambda$ values listed above.

\begin{table*}
	\caption{
	\textbf{Spearman correlations between $\Delta$ and $\Delta_{\max}$ and downstream performance, for different $\lambda$ values.}
	On the SQuAD (question answering), SST-1 (sentiment analysis), MNLI (natural language inference), and QQP (question pair matching) tasks,
	the eigenspace overlap score $\eigover$ attains higher Spearman correlation (absolute value) with downstream performance than $\Delta_{\max}$ and $\Delta$ computed with different $\lambda$ values.
	}
	\vspace{0.05in}
	\small
	\centering
	\begin{tabular}{l  l | c | c | c | c | c | c }
		\toprule
		\multicolumn{2}{c|}{Dataset} & \multicolumn{2}{|c|}{SQuAD} & \multicolumn{2}{|c|}{SST-1} & \multicolumn{1}{|c|}{MNLI}  & \multicolumn{1}{|c}{QQP} \\
		\midrule
		\multicolumn{2}{c|}{Embedding} & GloVe & fastText & GloVe & fastText & BERT WordPiece & BERT WordPiece \\ 
		\midrule
$\Delta_{\max}$, & $\lambda = \lambda_{\min} / 100$    &  $0.66$&$0.71$  &  $0.54$&$0.63$  &  $0.47$&$0.56$  \\ 
$\Delta_{\max}$, & $\lambda = \lambda_{\min} / 10$    &  $0.65$&$0.73$  &  $0.54$&$0.61$  &  $0.47$&$0.57$    \\ 
$\Delta_{\max}$, & $\lambda = \lambda_{\min}$    &  $0.62$&$0.72$  &  $0.51$&$0.60$  &  $0.38$&$0.56$    \\ 
$\Delta_{\max}$, & $\lambda = \lambda_{\min} \times 10$    &  $0.61$&$0.65$  &  $0.53$&$0.51$  &  $0.35$&$0.43$   \\ 
$\Delta_{\max}$, & $\lambda = \lambda_{\min} \times 100$    &  $0.25$&$0.49$  &  $0.18$&$0.43$  &  $0.13$&$0.36$   \\ 
$\Delta_{\max}$, & $\lambda = \lambda_{\max}$    &  $0.15$&$0.08$  &  $0.22$&$0.03$  &  $0.49$&$0.08$  \\
$\Delta$, & $\lambda = \lambda_{\min} / 100$    &  $0.41$&$0.31$  &  $0.27$&$0.30$  &  $0.51$&$0.05$   \\ 
$\Delta$, & $\lambda = \lambda_{\min} / 10$    &  $0.41$&$0.32$  &  $0.27$&$0.30$  &  $0.51$&$0.05$    \\ 
$\Delta$, & $\lambda = \lambda_{\min}$    &  $0.46$&$0.31$  &  $0.33$&$0.29$  &  $0.57$&$0.04$    \\ 
$\Delta$, & $\lambda = \lambda_{\min} \times 10$    &  $0.42$&$0.00$  &  $0.28$&$0.05$  &  $0.55$&$0.28$   \\ 
$\Delta$, & $\lambda = \lambda_{\min} \times 100$    &  $0.70$&$0.32$  &  $0.60$&$0.27$  &  $0.87$&$0.26$   \\ 
$\Delta$, & $\lambda = \lambda_{\max}$    &  $0.35$&$0.01$  &  $0.31$&$0.03$  &  $0.10$&$0.02$   \\ 
\midrule
\multicolumn{2}{c|}{$1 - \eigover$}   &  $\mathbf{0.81}$&$\mathbf{0.91}$  &  $\mathbf{0.75}$&$\mathbf{0.73}$  &  $\mathbf{0.92}$  &  $\mathbf{0.93}$  \\
		\bottomrule
	\end{tabular}
	\vfigsp
	\label{tab:sp_rank_main_body_no_TT_different_lambda}
\end{table*}

\subsection{The Robustness of the Measures of Compression Quality as Selection Criteria}
\label{app:selection}
In Section~\ref{subsec:exp_choosing} we argued that the eigenspace overlap score is a more accurate and robust selection criterion for choosing between compressed embeddings than the other measures of compression quality.
We showed in Table~\ref{tab:pred_acc} the selection error rates attained by the various measures of compression quality across different tasks and embeddings types.
Here we provide detailed results on the robustness of the various measures of compression quality when used as selection criteria.
To quantify the robustness of each measure of compression quality as a selection criterion, we measure for each task the maximum difference in performance, across all pairs of compressed embeddings from our experiments, between the embedding which performs best and the one which is selected by the measure of compression quality.
We report these results in Table~\ref{tab:pred_acc_app} for GloVe and fastText embeddings on the question answering (SQuAD) and sentiment analysis (SST-1) tasks, and for BERT WordPiece embeddings on the language infernece (MNLI) and question pair classification (QQP) tasks.
We observe that the eigenspace overlap score can attain $1.1\times$ to $5.5\times$ lower maximum performance differences than the next best measures of compression quality.

\begin{table*}
	\caption{
		\textbf{The robustness of each measure of compression quality as a selection criterion.}
		Across all pairs of compressed embeddings from our experiments, we measure for each task the maximum difference in performance between the embedding selected by each measure of compression quality and the one which performs best on the task.
		We report these results in the table below, and observe that the eigenspace overlap score $\eigover$ attains lower maximum performance differences than the other measures of compression quality.
	}
	\vspace{0.05in}
	\small
	\centering
	\begin{tabular}{c | c | c | c | c | c | c }
		\toprule
		Dataset & \multicolumn{2}{|c|}{SQuAD} & \multicolumn{2}{|c|}{SST-1} & \multicolumn{1}{|c|}{MNLI}  & \multicolumn{1}{|c}{QQP} \\
		\midrule
		Embedding & GloVe & fastText & GloVe & fastText & BERT WordPiece & BERT WordPiece \\ 
		\midrule
		PIP loss & $0.03$ & $0.08$   & $0.11$ & $0.08$   & $0.04$   & $0.02$ \\
		$\Delta_{\max}$ & $0.03$ & $0.03$   & $0.11$ & $0.05$   & $0.02$   & $0.02$ \\
		$\Delta$ & $0.03$ & $0.08$   & $0.11$ & $0.09$   & $0.03$   & $0.02$ \\
		$1 - \eigover$ & $\bm{0.01}$ & $\bm{0.01}$   & $\bm{0.04}$ & $\bm{0.03}$   & $\bm{0.01}$   & $\bm{0.01}$ \\		
		\bottomrule
	\end{tabular}
	\vfigsp
	\label{tab:pred_acc_app}
\end{table*}

\subsection{Stochastic vs.\ Deterministic Uniform Quantization}
\label{app:stoc_quant}
Thus far, all the uniform quantization experiments we have presented on question answering, sentiment analysis, and GLUE tasks have used deterministic rounding.
However, our theoretical analysis on the expected eigenspace overlap score of uniformly quantized embeddings assumed unbiased stochastic quantization is used.
In this section, we show that (1) stochastic and deterministic uniform quantization perform similarly on downstream tasks, and that (2) the eigenspace overlap score still correlates well with downstream performance when using stochastic quantization instead of deterministic quantization.
In Figure~\ref{fig:glove_all_perf_vs_comp_stoc}, we compare the downstream performance of deterministic and stochastic quantization on the SQuAD question answering task and on the SST-1 sentiment analysis task.
We can observe that uniform and deterministic quantization perform similarly, although at 1-bit precision deterministic quantization performs slightly better than stochastic quantization.
We then show in Figure~\ref{fig:glove_perf_vs_metric_stoc} that regardless of whether we use deterministic or stochastic quantization, the eigenspace overlap score correlates better with downstream performance across compression methods than the other measures of compression quality.

\begin{figure}[h]
	\vspace{0.7in}
	\centering
	\begin{tabular}{c c}
		\includegraphics[width=.4\linewidth]{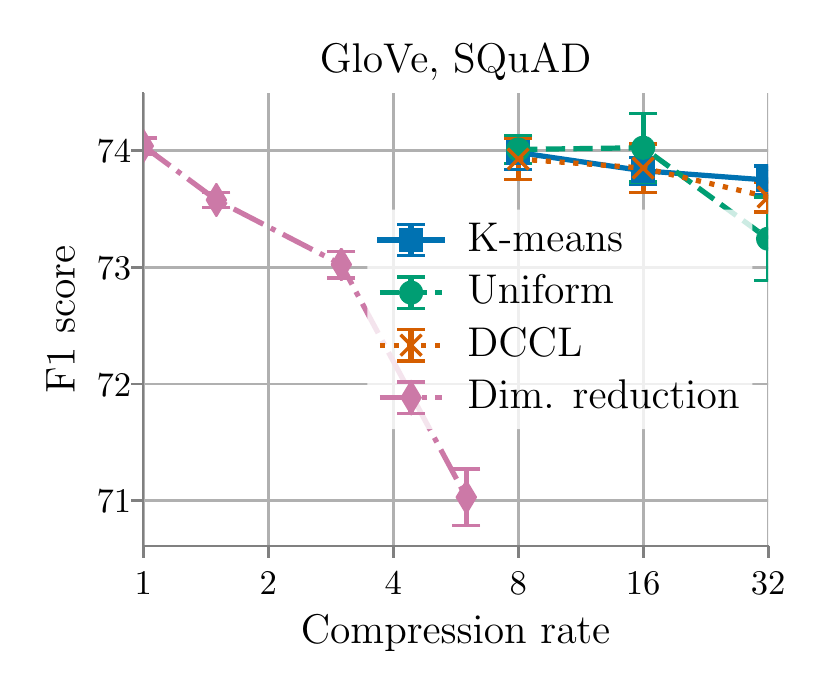} &
		\includegraphics[width=.4\linewidth]{figures/glove400k_qa_best-f1_vs_compression_linx_det_No_TT.pdf} \\
		\includegraphics[width=.4\linewidth]{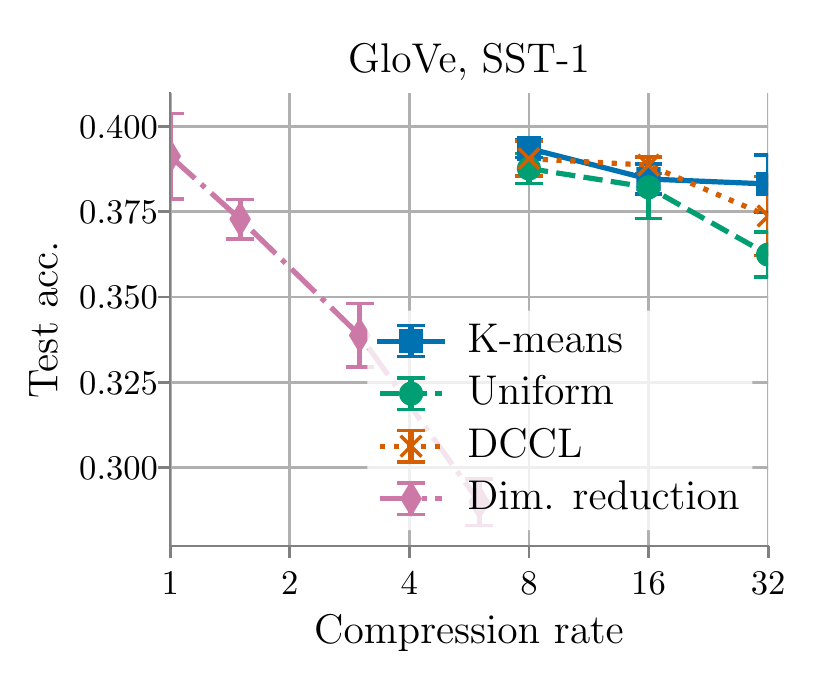} &	
		\includegraphics[width=.4\linewidth]{figures/glove400k_sentiment_sst1_test-acc_vs_compression_linx_det_No_TT.pdf} \\
		(a) Stochastic rounding & \quad (b) Deterministic rounding
	\end{tabular}
	\caption{
		\textbf{Downstream performance vs.\ compression rate for deterministic vs.\ stochastic uniform quantization.}
		We can observe, using compressed GloVe embeddings on both the SQuAD question answering task and the SST-1 sentiment analysis task, that stochastic uniform quantization (left plots) performs similarly to deterministic uniform quantization (right plots).
	}
	\label{fig:glove_all_perf_vs_comp_stoc}	
\end{figure}

\begin{figure}
\centering
	\begin{tabular}{c c}
		\includegraphics[width=.4\linewidth]{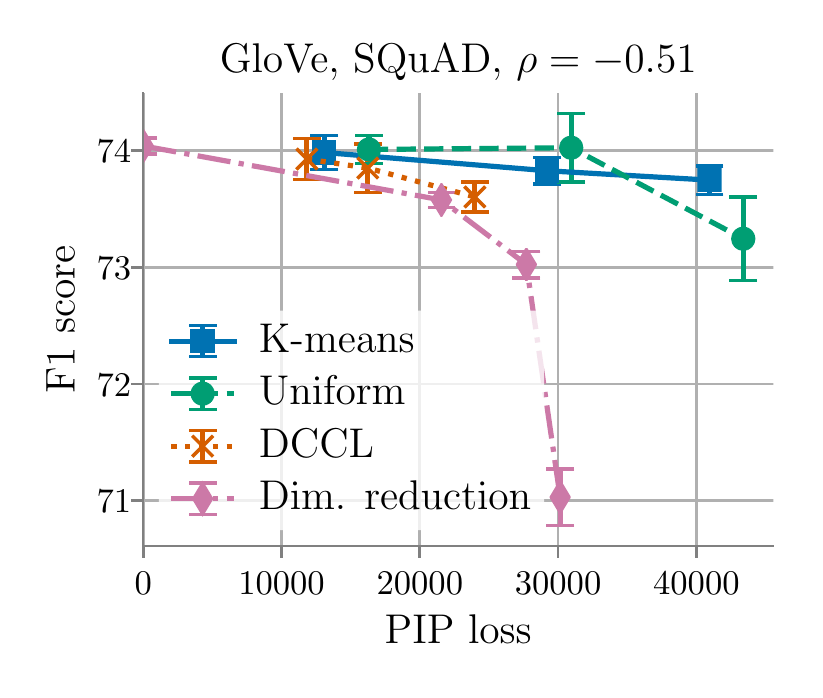} &
		\includegraphics[width=.4\linewidth]{figures/glove400k_qa_best-f1_vs_gram-large-dim-frob-error_linx_det_No_TT.pdf} \\
		\includegraphics[width=.4\linewidth]{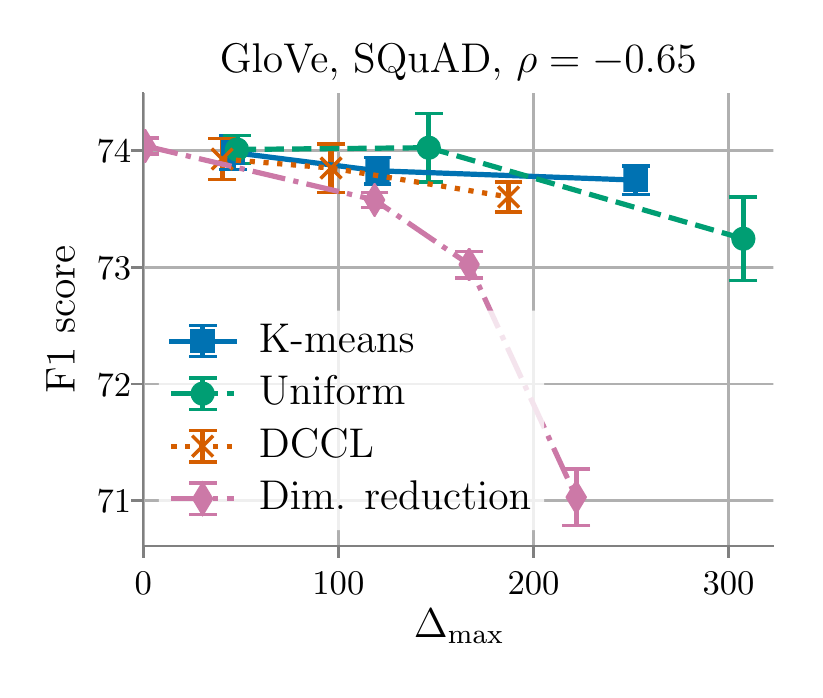} &
		\includegraphics[width=.4\linewidth]{figures/glove400k_qa_best-f1_vs_gram-large-dim-deltamax-2_linx_det_No_TT.pdf} \\
		\includegraphics[width=.4\linewidth]{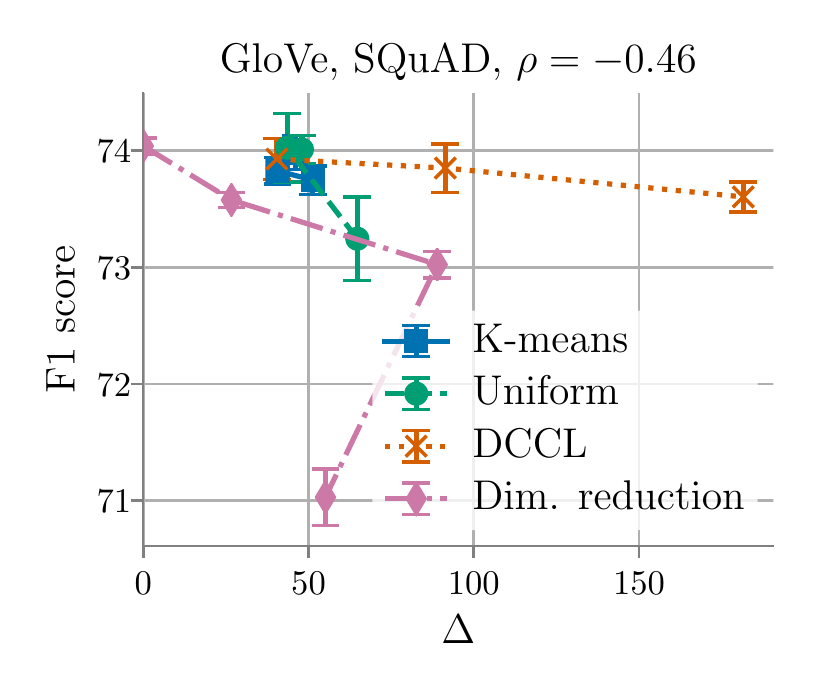} &
		\includegraphics[width=.4\linewidth]{figures/glove400k_qa_best-f1_vs_gram-large-dim-deltamax-avron-2_linx_det_No_TT.pdf} \\
		\includegraphics[width=.4\linewidth]{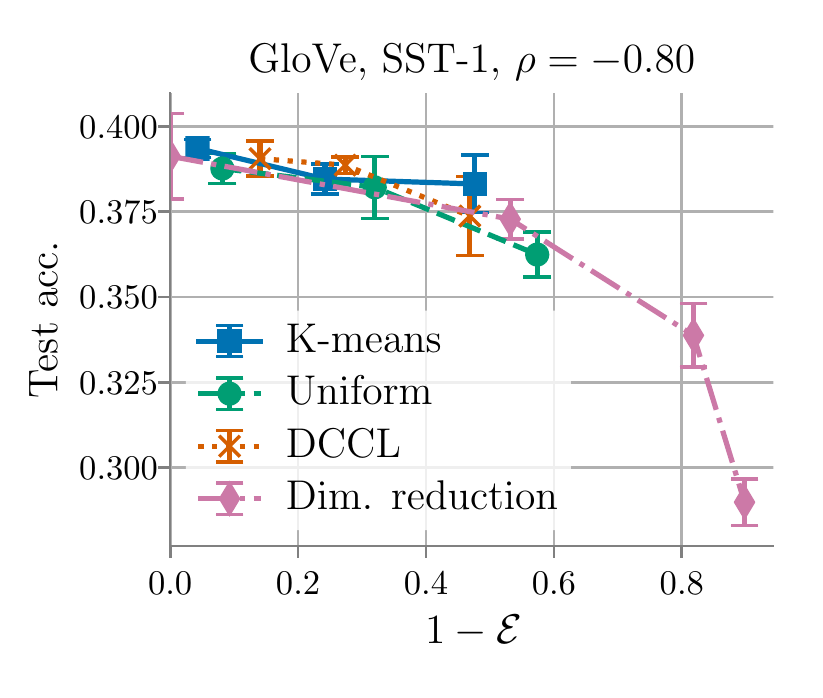} &
		\includegraphics[width=.4\linewidth]{figures/glove400k_sentiment_sst1_test-acc_vs_subspace-dist-normalized_linx_det_No_TT.pdf} \\
		(a) Stochastic rounding & \quad (b) Deterministic rounding
	\end{tabular}
	\caption{
		\textbf{Downstream performance vs.\ measures of compression quality for deterministic vs.\ stochastic uniform quantization.}
		We can see that regardless of whether stochastic (left plots) or deterministic (right plots) quantization is used, the eigenspace overlap score correlates better with downstream performance than the other measures of compression quality (as quantified by the Spearman correlations $\rho$ in the plot titles).
	}
	\label{fig:glove_perf_vs_metric_stoc}
\end{figure}

\end{document}